\newcommand*{\Scale}[2][4]{\scalebox{#1}{\ensuremath{#2}}} 
\definecolor{shadecolor}{rgb}{.95,.95,.95}  
\newtheorem{myDefinition}{Definition}
\newtheorem{myTheorem}{Theorem}
\newtheorem{myLemma}{Lemma}
\newtheorem{myCorollary}{Corollary}
\newtheorem{myProposition}{Proposition}
\newtheorem{myExample}{Example}
\newtheorem{myRemark}{Remark}
\tikzstyle{every edge}=  [draw]
\tikzstyle{vertex} = [draw,circle,minimum size=1pt]
\tikzstyle{label} = [minimum size=.1pt,font=\scriptsize]
\tikzstyle{title} = [minimum size=.25cm,font=\small]
\newcommand{\bs}[1]{\boldsymbol{#1}}
\newcommand{\fix}[1]{\Scale[.7]{#1}}
\def \R{\mathbb{R}}
\def \P{\mathsf{P}}
\def \T{\mathsf{T}}
\def \spn{{\rm span}}
\def \Ord{\mathscr{O}}
\def \Res{{\hyperref[rDef]{{\rm {\bf Res}}}}}
\def \adj{{\hyperref[adjDef]{\ddagger}}}
\def \r{{\hyperref[rDef]{r}}}
\def \d{{\hyperref[dDef]{d}}}
\def \N{{\hyperref[dDef]{N}}}
\def \Nbreve{{\hyperref[NbreveDef]{\breve{N}}}}
\def \mOf{{\hyperref[mOfDef]{m}}}
\def \nOf{{\hyperref[nOfDef]{n}}}
\def \li{{\hyperref[liDef]{\aleph}}}
\def \L{{\hyperref[LDef]{\ell}}}
\def \eps{{\hyperref[epsDef]{\epsilon}}}
\def \v{{\hyperref[vDef]{{\rm v}}}}
\def \vc{{\hyperref[vcDef]{{\rm v}^\c}}}
\def \muu{{\hyperref[muuDef]{\mu}}}
\newcommand{\Nnum}[1]{{\hyperref[NnumDef]{n_{#1}}}}
\newcommand{\vjNum}[1]{{\hyperref[vjNumDef]{{\rm v}_{{#1}}}}}
\def \wj{{\hyperref[wjDef]{\bs{\gamma}}}}
\def \TT{{\hyperref[TTDef]{T}}}
\def \p{{\hyperref[coherenceFig]{p}}}
\def \nuuT{{\hyperref[nuuTDef]{\nu_{\Theta}}}}
\def \nuuG{{\hyperref[nuuGDef]{\nu_{{\rm G}}}}}
\def \nuuV{{\hyperref[nuuVDef]{\nu_{{\rm V}}}}}
\def \nuuU{{\hyperref[nuuUDef]{\nu_{{\rm U}}}}}
\def \efi{{\hyperref[efiDef]{f_i}}}
\def \efj{{\hyperref[efjDef]{f_j}}}
\def \efihat{{\hyperref[efihatDef]{\hat{f}_i}}}
\def \g{{\hyperref[gDef]{g}}}
\def \gp{{\hyperref[gpDef]{g'}}}
\def \h{{\hyperref[hDef]{h}}}
\def \hh{{\hyperref[hhDef]{h}}}
\def \F{{\hyperref[FDef]{\mathscr{F}}}}
\def \Fp{{\hyperref[FpDef]{\mathscr{F}'}}}
\def \Fdp{{\hyperref[FdpDef]{\mathscr{F}''}}}
\def \Ftilde{{\hyperref[FtildeDef]{\tilde{\mathscr{F}}}}}
\newcommand{\FpNum}[1]{{\hyperref[FpNumDef]{\mathscr{F}'_{#1}}}}
\def \gi{{\hyperref[giDef]{g_{i}}}}
\def \gj{{\hyperref[gjDef]{g_{j}}}}
\def \sstar{{\hyperref[sstarDef]{S^\star}}}
\def \sstaroi{{\hyperref[sstaroiDef]{S^\star_{\bs{\omega}_i}}}}
\def \sstaroihat{{\hyperref[oihatDef]{S^\star_{\bs{\hat{\omega}}_i}}}}
\def \s{{\hyperref[sDef]{S}}}
\def \SS{{\hyperref[SSDef]{\mathscr{S}}}}
\def \VV{{\hyperref[VVDef]{\mathscr{V}}}}
\def \Gr{\hyperref[GrDef]{{\rm Gr}}}
\def \GrND{\hyperref[GrNDDef]{{\rm Gr}_*}}
\def \RND{\hyperref[RNDDef]{\R^{(d-r) \times r}_*}}
\def \x{{\hyperref[xDef]{\bs{{\rm x}}}}}
\def \xoi{{\hyperref[xoiDef]{\bs{{\rm x}}_{\bs{\omega}_i}}}}
\def \xhat{{\hyperref[xhatiDef]{\bs{\hat{{\rm x}}}}}}
\def \xoihat{{\hyperref[xhatiDef]{\bs{\hat{{\rm x}}}_{\bs{\hat{\omega}}_i}}}}
\def \vv{{\hyperref[vvDef]{\bs{{\rm v}}}}}
\def \wwj{{\hyperref[vjNumDef]{\bs{{\rm v}}_{i2}}}}
\def \wwjStar{{\hyperref[vjNumDef]{\bs{{\rm v}}^\star_{i2}}}}
\def \vviStar{{\hyperref[vviDef]{\bs{{\rm v}}^\star_i}}}
\def \ti{{\hyperref[tiDef]{\bs{\theta}_i}}}
\def \tStar{{\hyperref[tStarDef]{\bs{\theta}^\star}}}
\def \tiStar{{\hyperref[tiStarDef]{\bs{\theta}^\star_i}}}
\def \tjStar{{\hyperref[tjStarDef]{\bs{\theta}^\star_{j}}}}
\def \w{{\hyperref[wDef]{\bs{{\rm w}}}}}
\def \vone{{\hyperref[voneDef]{\bs{\gamma}}}}
\def \vtwo{{\hyperref[vtwoDef]{\bs{\gamma}'}}}
\def \ej{{\hyperref[ejDef]{\bs{{\rm e}}_j}}}
\def \aoj{{\hyperref[aojDef]{\bs{{\rm a}}_{\bs{\omega}_i}}}}
\def \aj{{\hyperref[ajDef]{\bs{{\rm a}}_i}}}
\def \I{{\hyperref[IDef]{\bs{{\rm I}}}}}
\def \one{{\hyperref[oneDef]{\bs{{\rm 1}}}}}
\def \X{{\hyperref[XDef]{\bs{{\rm X}}}}}
\def \XO{{\hyperref[XODef]{\bs{{\rm X}}_{\bs{\Omega}}}}}
\def \Xtilde{{\hyperref[XtildeDef]{\bs{\tilde{{\rm X}}}}}}
\def \Xhat{{\hyperref[XhatDef]{\bs{\hat{{\rm X}}}}}}
\def \U{{\hyperref[UDef]{\bs{{\rm U}}}}}
\def \V{{\hyperref[columnEchelonEq]{\bs{{\rm V}}}}}
\def \Ustar{{\hyperref[UstarDef]{\bs{{\rm U}}^\star}}}
\def \Ustaroi{{\hyperref[cdotoDef]{\bs{{\rm U}}^\star_{\bs{\omega}_i}}}}
\def \UstarDeltai{{\hyperref[deltaiDef]{\bs{{\rm U}}^\star_{\bs{\medtriangleup}_i}}}}
\def \UstarNabli{{\hyperref[nabliDef]{\bs{{\rm U}}^\star_{\medtriangledown_i}}}}
\def \Vstar{{\hyperref[columnEchelonEq]{\bs{{\rm V}}}^\star}}
\def \Tstar{{\hyperref[TstarDef]{\bs{\Theta}^\star}}}
\def \Pii{{\hyperref[PiiDef]{\bs{\Pi}}}}
\newcommand{\Vnum}[1]{{\hyperref[VnumDef]{\bs{{\rm V}}_{#1}}}}
\newcommand{\Vcnum}[1]{{\hyperref[VnumDef]{\bs{{\rm V}}^\c_{#1}}}}
\def \Gammaj{{\hyperref[GammajDef]{\bs{\Gamma}_j}}}
\def \Ustarups{{\hyperref[upsDef]{\bs{{\rm U}}^\star_{\bs{\upsilon}}}}}
\def \Pstar{{\hyperref[PstarDef]{\bs{{\rm P}}^\star}}}
\def \A{{\hyperref[ADef]{\bs{{\rm A}}}}}
\def \o{{\hyperref[oDef]{\bs{\omega}}}}
\def \oi{{\hyperref[oiDef]{\bs{\omega}_i}}}
\def \oj{{\hyperref[ojDef]{\bs{\omega}_i}}}
\def \oihat{{\hyperref[oihatDef]{\bs{\hat{\omega}}_i}}}
\def \deltai{{\hyperref[deltaiDef]{\bs{\medtriangleup}_i}}}
\def \deltaj{{\hyperref[deltaiDef]{\bs{\medtriangleup}_{j}}}}
\def \nabli{{\hyperref[nabliDef]{\bs{\medtriangledown}_i}}}
\def \O{{\hyperref[ODef]{\bs{\Omega}}}}
\def \Op{{\hyperref[OpDef]{\bs{\Omega}'}}}
\def \Odp{{\hyperref[FpDef]{\bs{\Omega}''}}}
\def \Ot{{\hyperref[OtDef]{\bs{\hat{\Omega}}_\tau}}}
\def \Otp{{\hyperref[OtpDef]{\bs{\Omega}'_\tau}}}
\def \Otilde{{\hyperref[OtildeDef]{\bs{\tilde{\Omega}}}}}
\def \Ohat{{\hyperref[OhatDef]{\bs{\hat{\Omega}}}}}
\newcommand{\ONum}[1]{{\hyperref[OtDef]{\bs{\hat{\Omega}}_{#1}}}}
\newcommand{\OpNum}[1]{{\hyperref[OtpDef]{\bs{\Omega}'_{#1}}}}
\def \ups{{\hyperref[upsDef]{\bs{\upsilon}}}}
\def \Obreve{{\hyperref[ObreveDef]{\bs{\breve{\Omega}}}}}
\def \i{{\hyperref[iDef]{i}}}
\def \j{{\hyperref[jDef]{j}}}
\def \ttau{{\hyperref[OtDef]{\tau}}}
\def \t{{\hyperref[tDef]{t}}}
\def \finalT{{\hyperref[finalTDef]{T}}}
\def \E{{\hyperref[EDef]{\mathcal{E}}}}
\def \En{{\hyperref[EnDef]{\mathcal{E}_n}}}
\def \ae{{\hyperref[aeDef]{{\rm a.e.}}}}
\def \LRMC{{\hyperref[LRMCDef]{LRMC}}}
\def \fit{{\hyperref[fitsDef]{fit}}}
\def \fits{{\hyperref[fitsDef]{fits}}}
\def \degenerate{{\hyperref[degenerateDef]{degenerate}}}
\def \nondegenerate{{\hyperref[degenerateDef]{non-degenerate}}}
\def \mad{{\hyperref[madDef]{minimally algebraically dependent}}}
\def \finitelyCompletable{{\hyperref[finitelyCompletableDef]{finitely completable}}}
\def \finiteCompletability{{\hyperref[finitelyCompletableDef]{finite completablility}}}
\def \IHTSVD{{\hyperref[IHTSVDDef]{IHTSVD}}}
\def \Aone{{\hyperref[AoneDef]{\textbf{A1}}}}
\def \LRMCThm{{\hyperref[LRMCThm]{Theorem \ref{LRMCThm}}}}
\def \uniquenessThm{{\hyperref[uniquenessThm]{Theorem \ref{uniquenessThm}}}}
\def \probabilityThm{{\hyperref[probabilityThm]{Theorem \ref{probabilityThm}}}}
\def \dimensionLem{{\hyperref[dimensionLem]{Lemma \ref{dimensionLem}}}}
\def \independenceLem{{\hyperref[independenceLem]{Lemma \ref{independenceLem}}}}
\def \LLem{{\hyperref[LLem]{Lemma \ref{LLem}}}}
\def \oneFixedVarLem{{\hyperref[oneFixedVarLem]{Lemma \ref{oneFixedVarLem}}}}
\def \allFixedVarsLem{{\hyperref[allFixedVarsLem]{Lemma \ref{allFixedVarsLem}}}}
\def \basisLem{{\hyperref[basisLem]{Lemma \ref{basisLem}}}}
\def \LRMCLem{{\hyperref[LRMCLem]{Lemma \ref{LRMCLem}}}}
\def \identifiabilityLem{{\hyperref[identifiabilityLem]{Lemma \ref{identifiabilityLem}}}}
\def \rankOneProp{{\hyperref[rankOneProp]{Proposition \ref{rankOneProp}}}}
\def \LRMCCor{{\hyperref[LRMCCor]{Corollary \ref{LRMCCor}}}}
\def \algorithmCor{{\hyperref[algorithmCor]{Corollary \ref{algorithmCor}}}}
\def \modelSec{{\hyperref[modelSec]{Section \ref{modelSec}}}}
\def \implicationsSec{{\hyperref[implicationsSec]{Section \ref{implicationsSec}}}}
\def \proofSec{{\hyperref[proofSec]{Section \ref{proofSec}}}}
\def \uniquenessSec{{\hyperref[uniquenessSec]{Section \ref{uniquenessSec}}}}
\def \additionalSec{{\hyperref[additionalSec]{Section \ref{additionalSec}}}}
\def \samplingEg{{\hyperref[samplingEg]{Example \ref{samplingEg}}}}
\def \necessaryNEg{{\hyperref[necessaryNEg]{Example \ref{necessaryNEg}}}}
\def \adaptiveEg{{\hyperref[adaptiveEg]{Example \ref{adaptiveEg}}}}
\def \entriesRmk{{\hyperref[entriesRmk]{Remark \ref{entriesRmk}}}}
\def \finiteCond{{\hyperref[finiteCond]{{\rm (i)}}}}
\def \identifiabilityCond{{\hyperref[identifiabilityCond]{{\rm (ii)}}}}
\def \measureFig{{\hyperref[measureFig]{Figure \ref{measureFig}}}}
\def \coherenceFig{{\hyperref[coherenceFig]{Figure \ref{coherenceFig}}}}
\def \numItersFig{{\hyperref[numItersFig]{Figure \ref{numItersFig}}}}
\def \regimesFig{{\hyperref[regimesFig]{Figure \ref{regimesFig}}}}
\def \samplesFig{{\hyperref[samplesFig]{Figure \ref{samplesFig}}}}
\def \numItersFig{{\hyperref[numItersFig]{Figure \ref{numItersFig}}}}
\def \discriminantFig{{\hyperref[discriminantFig]{Figure \ref{discriminantFig}}}}
\def \linesFig{{\hyperref[linesFig]{Figure \ref{linesFig}}}}
\def \itersMuFig{{\hyperref[itersMuFig]{Figure \ref{itersMuFig}}}}
\def \identifiabilityCondFig{{\hyperref[identifiabilityCondFig]{Figure \ref{identifiabilityCondFig}}}}
\def \validationAlg{{\hyperref[validationAlg]{Algorithm \ref{validationAlg}}}}
\newcommand*{\titleGP}{\begingroup 
\centering 
\vspace*{\baselineskip} 

\rule{\textwidth}{1.6pt}\vspace*{-\baselineskip}\vspace*{2pt} 
\rule{\textwidth}{0.4pt}\\[.5\baselineskip] 

{\LARGE A Characterization of Deterministic Sampling \\ [0.5\baselineskip] Patterns for Low-Rank Matrix Completion} \\ [0.4\baselineskip] 

\rule{\textwidth}{0.4pt}\vspace*{-\baselineskip}\vspace{3.2pt} 
\rule{\textwidth}{1.6pt}\\[\baselineskip] 

{\scshape 
Daniel L. Pimentel-Alarc\'on, Nigel Boston, Robert D. Nowak}

{\itshape University of Wisconsin - Madison\par}
\endgroup}
\begin{document}
\titleGP

\begin{abstract} 
Low-rank matrix completion (\LRMC) problems arise in a wide variety of applications.
Previous theory mainly provides conditions for completion under missing-at-random samplings.
This paper studies deterministic conditions for completion.
An incomplete $\d \times \N$ matrix is {\em finitely rank-$\r$  completable} if there are at most finitely many rank-$\r$ matrices that agree with all its observed entries.
\hyperref[finitelyCompletableDef]{Finite completability} is the tipping point in \LRMC, as a few additional samples of a \finitelyCompletable\ matrix guarantee its {\em unique} completability.
The main contribution of this paper is a deterministic sampling condition for \finiteCompletability.  We use this to also derive deterministic sampling conditions for unique completability that can be efficiently verified.  We also show that under uniform random sampling schemes, these conditions are satisfied with high probability if $\Ord(\max\{\r,\log\d\})$ entries per column are observed.  These findings have several implications on \LRMC\ regarding lower bounds, sample and computational complexity, the role of coherence, adaptive settings and the validation of any completion algorithm.  We complement our theoretical results with experiments that support our findings and motivate future analysis of uncharted sampling regimes.
\end{abstract}

\section{Introduction}
\label{introSec}
\phantomsection\label{LRMCDef}Low-rank matrix completion (\LRMC) has attracted a lot of attention in recent years because of its broad range of applications, e.g., recommender systems and collaborative filtering \cite{collaborativeFiltering} and image processing \cite{weinberger}.

The problem entails exactly recovering all the entries in a \phantomsection\label{dDef}$\d \times \N$ rank-\phantomsection\label{rDef}$\r$ matrix, given only a subset of its entries.  \LRMC\ is usually studied under a missing-at-random and bounded-coherence model.  Under this model, necessary and sufficient conditions for perfect recovery are known \cite{candes-recht, candes-tao, recht, gross, chen, coherentLRMC}.
Other approaches require additional coherence and spectral gap conditions \cite{bhojanapalli}, use rigidity theory \cite{rigidity}, algebraic geometry and matroid theory \cite{kiraly} to derive necessary and sufficient conditions for completion of deterministic samplings, but a characterization of completable sampling patterns remains an important open question.

We say an incomplete matrix is \phantomsection\label{finitelyCompletableDef}{\em finitely rank-$\r$ completable} if there exist at most finitely many rank-$\r$ matrices that agree with all its observed entries.  There exist sampling/observation patterns that guarantee \finiteCompletability, but if just a single one of the observed entries is instead missing, then there are {\em infinitely} many completions.  Conversely, adding a few observations to such a pattern guarantees {\em unique} completability.  Thus, \finiteCompletability\ is the tipping point in \LRMC.  

Whether a matrix is \finitelyCompletable\ depends on which entries are observed. Yet no characterization of the sets of observed entries that allow or prevent \finiteCompletability\ is known.

The main result of this paper is a sampling condition for \finiteCompletability, that is, a condition on the observed entries of a matrix to guarantee that it can be completed in at most finitely many ways.  In addition, we provide deterministic sampling conditions for unique completability that can be efficiently verified.  Finally, we show that uniform random samplings with $\Ord(\max\{\r,\log\d\})$ entries per column satisfy these conditions with high probability.

Our results have implications on \LRMC\ regarding lower bounds, sample and computational complexity, the role of coherence, adaptive settings and validation conditions to verify the output of any completion algorithm.  We complement our theoretical results with experiments that support our findings and motivate future analysis of uncharted sampling regimes.

\subsection*{Organization of the Paper}
In \modelSec\ we formally state the problem and our main results.  In \implicationsSec\ we discuss their implications in the context of previous work, and present our experiments.  We present the proof of our main theorem in \proofSec, and we leave the proofs of our other statements to Sections \ref{uniquenessSec} and \ref{additionalSec}.

\section{Model and Main Results}
\label{modelSec}
Let \phantomsection\label{XODef}$\XO$ denote the incomplete version of a $\d \times \N$, rank-$\r$ data matrix \phantomsection\label{XDef}$\X$, observed only in the nonzero locations of \phantomsection\label{ODef}$\O$, a $\d \times \N$ matrix with binary entries.  The goal of \LRMC\ is to recover $\X$ from $\XO$.

This problem is tantamount to identifying the $\r$-dimensional subspace \phantomsection\label{sstarDef}$\sstar$ spanned by the columns in $\X$, and this is how we will approach it.  
First observe that since $\X$ is rank-$\r$, a column with fewer than $\r$ samples cannot be completed.  A column with exactly $\r$ observations can be uniquely completed once $\sstar$ is known, but it provides no information to identify $\sstar$.  We will thus assume that:

\begin{shaded}
\begin{itemize}
\phantomsection\label{AoneDef}
\item[\Aone]
Every column of $\X$ is observed on exactly $\r+1$ entries.
\end{itemize}
\end{shaded}

The key insight of the paper is that observing $\r+1$ entries in a column of $\X$
places one constraint on what $\sstar$ may be.  For example, if we observe $\r+1$ entries of a particular
column, then not all $\r$-dimensional subspaces will be consistent with the entries.  If we observe 
more columns with $\r+1$ entries, then even fewer subspaces will be consistent with them.  In effect, each column with $\r+1$ observations places one constraint that an $\r$-dimensional subspace must satisfy in order to be consistent with the observations.  The observed entries in different columns may or may not produce redundant constraints.  As we will see, the pattern of observed entries determines whether or not the constraints are redundant, thus indicating the number of subspaces that satisfy them.
The main result of this paper is a simple condition on the pattern of observed entries that guarantees that only a finite number of subspaces satisfies all the constraints.  This in turn provides a simple condition for exact matrix completion.

\begin{myRemark}
\label{entriesRmk}
We point out that any observation, in addition to the $\r+1$ per column that we assume, cannot increase the number of rank-$\r$ matrices that agree with the observations.  So in general, if some columns of $\X$ are observed on more than $\r+1$ entries, all we need is that the observed entries include a pattern with exactly $\r+1$ observations per column satisfying our sampling conditions.

Also notice that completing $\X$ is the same as completing $\X{}^{\fix{\T}}$, so a row with fewer than $\r$ observations cannot be completed.  While we do not assume that each row is observed on at least $\r$ entries, our sampling conditions guarantee that this is the case.
\end{myRemark}

Let \phantomsection\label{GrDef}$\Gr(\r,\R^{\fix{\d}})$ denote the Grassmannian manifold of $\r$-dimensional subspaces in $\R^{\fix{\d}}$.  Observe that each $\d \times \N$ rank-$\r$ matrix $\X$ can be uniquely represented in terms of a subspace $\sstar \in \Gr(\r,\R^{\fix{\d}})$ (spanning the columns of $\X$) and an $\r\times \N$ coefficient matrix \phantomsection\label{TstarDef}$\Tstar$.  See \measureFig\ to build some intuition.  Let \phantomsection\label{nuuGDef}$\nuuG$ denote the uniform measure on $\Gr(\r,\R^{\fix{\d}})$, and \phantomsection\label{nuuTDef}$\nuuT$ the Lebesgue measure on $\R^{\fix{\r \times \N}}$.  Our statements hold for \phantomsection\label{aeDef}{\em almost every} (\ae) $\X$ with respect to the product measure $\nuuG \times \nuuT$.

\begin{figure}
\centering
\includegraphics[width=3.5cm]{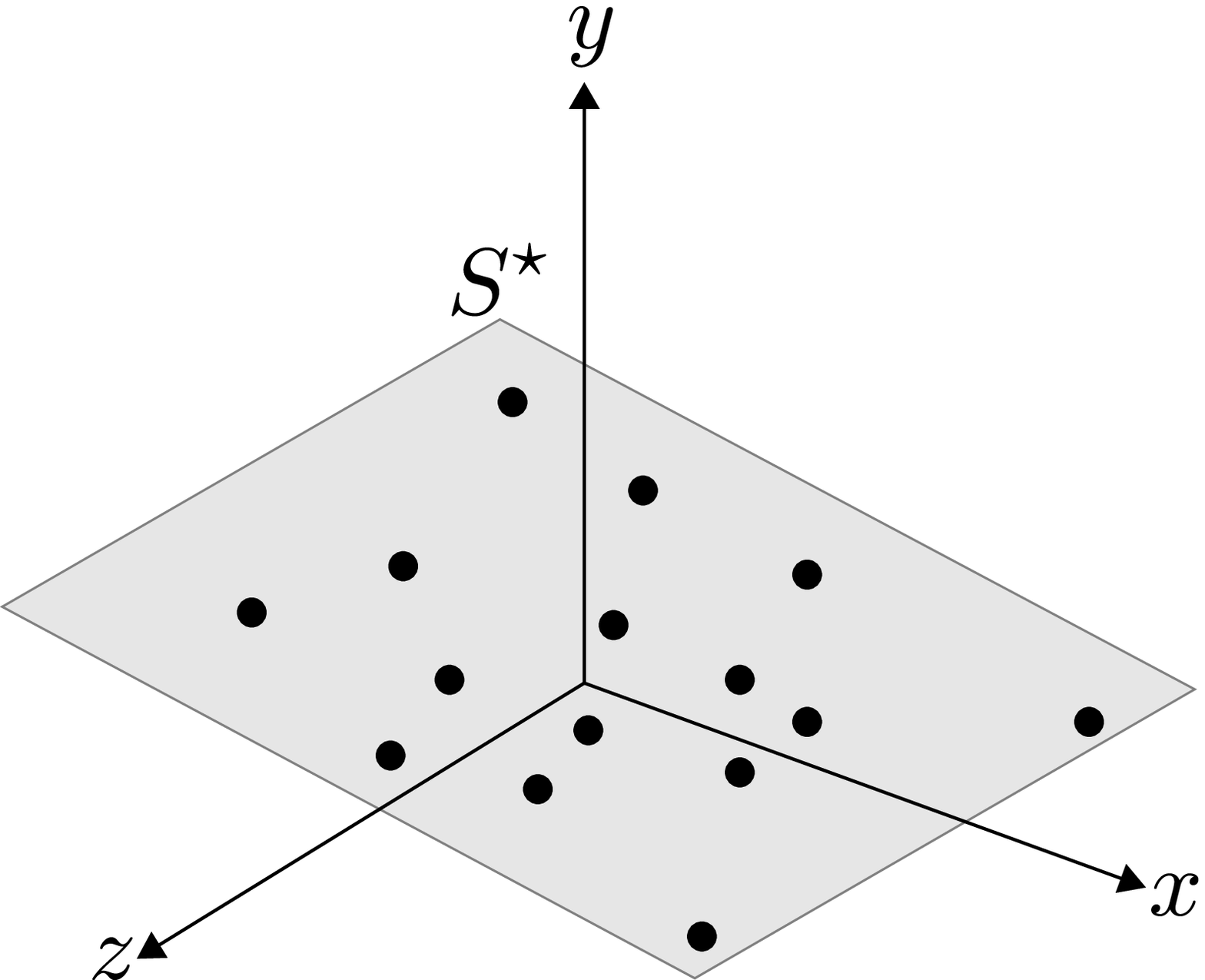} \hspace{.5cm}
\includegraphics[width=3.5cm]{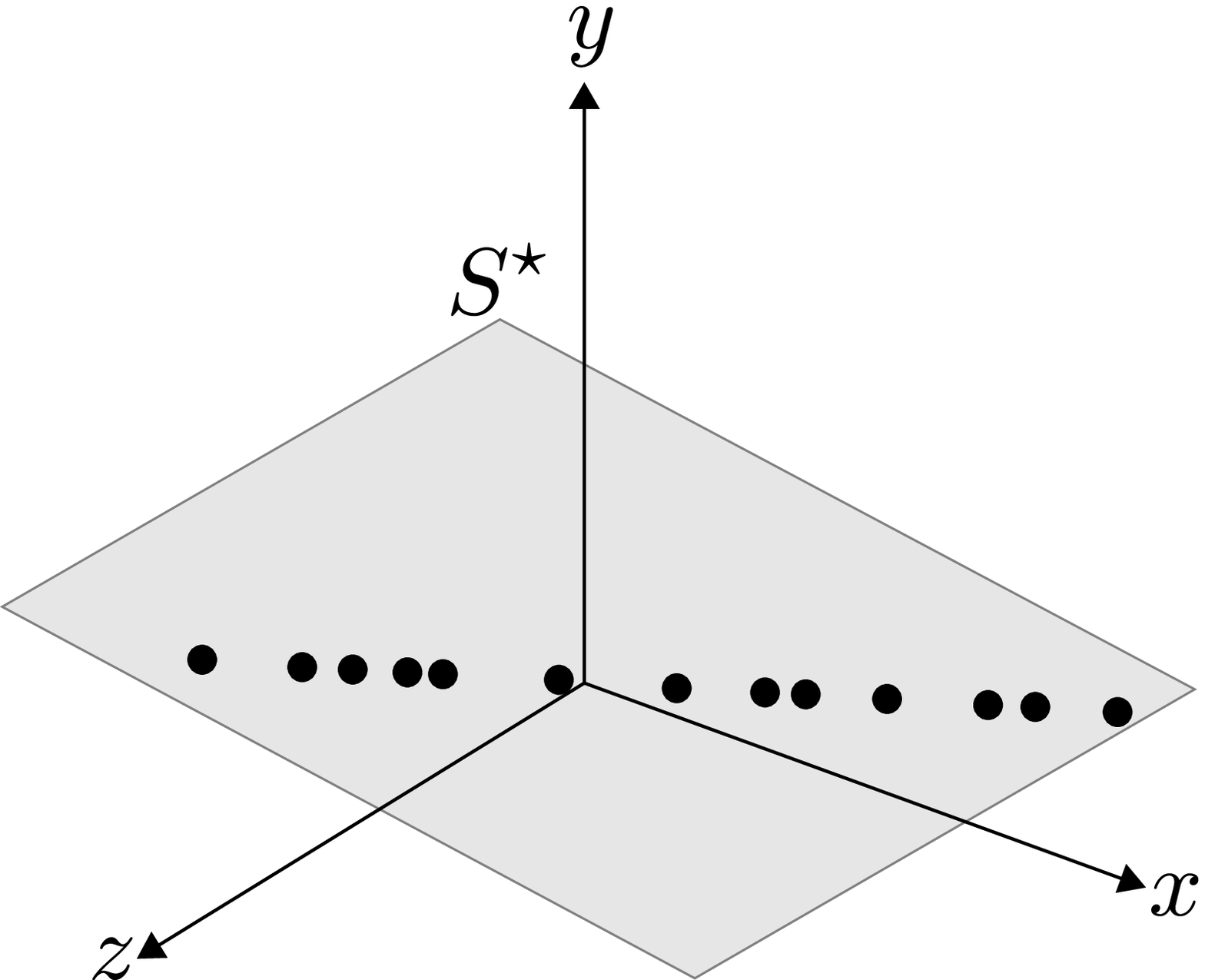}
\caption{Each column in a rank-$\r$ matrix $\X$ corresponds to a point in an $\r$-dimensional subspace $\sstar$.  In these figures, $\sstar$ is a $2$-dimensional subspace (plane) in general position.  In the {\bf left}, the columns of $\X$ are in general position inside $\sstar$, that is, drawn independently according to an absolutely continuous distribution with respect to the Lebesgue measure on $\sstar$, for example, according to a gaussian distribution on $\sstar$.  In this case, the probability of observing a sample as in the {\bf right}, where all columns lie in a line inside $\sstar$, is zero.  Our results hold for every rank-$\r$ matrix, except for a set of measure zero of pathological cases as in the right.}
\label{measureFig}
\end{figure}

The paper's main result is the following theorem, which gives a deterministic sampling condition to guarantee that at most a finite number of $\r$-dimensional subspaces are consistent with $\XO$.

Given a matrix, let $\phantomsection\label{nOfDef}\nOf(\bs{\cdot})$ denote its number of columns and \phantomsection\label{mOfDef}$\mOf(\bs{\cdot})$ the number of its {\em nonzero} rows.

\begin{framed}
\begin{myTheorem}
\label{LRMCThm}
Let $\O$ be given, and suppose \Aone\ holds.  For \hyperref[aeDef]{almost every} $\X$, there exist at most finitely many rank-$\r$ completions of $\XO$ if and only if there exists a matrix \phantomsection\label{OtildeDef}$\Otilde$ formed with $\r(\d-\r)$ columns of $\O$, such that
\begin{itemize}
\item[\finiteCond]
\phantomsection\label{finiteCond}
Every matrix \phantomsection\label{OpDef}$\Op$ formed with a subset of the columns in $\Otilde$ satisfies
\begin{align}
\label{LRMCEq}
\mOf(\Op) \ \geq \ \nOf(\Op)/\r + \r.
\end{align}
\end{itemize}
\end{myTheorem}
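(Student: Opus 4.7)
My plan is to reduce finite rank-$\r$ completability to the generic rank of a polynomial Jacobian on an affine chart of the Grassmannian, and then to translate this rank condition into the Hall-type inequality (\ref{LRMCEq}). I would first parametrize an open dense chart of $\Gr(\r,\R^{\d})$ by fixing $\r$ \emph{pivot} rows of a column-echelon basis $\U \in \R^{\d\times\r}$ of $\sstar$ — declaring them to be the $\r\times\r$ identity block — and letting the remaining $(\d-\r)\times\r$ entries $u_{ij}$ be free parameters, for a total of $\r(\d-\r)$ coordinates. For almost every $\X$ (with respect to $\nuuG\times\nuuT$), any choice of $\r$ rows gives a valid chart. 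Each observed column with $|\oi|=\r+1$ asks that $\xoi$ lie in the column span of the row-restriction of $\U$ to $\oi$; equivalently, it imposes one polynomial constraint $\efi(\U)=0$, arising as the vanishing of an $(\r+1)\times(\r+1)$ determinant in the entries of $\U$ and $\xoi$. A routine genericity check shows that each $\efi$ is a nonzero polynomial, and that finite completability is equivalent to the common vanishing locus of $\{\efi\}$ being zero-dimensional, since by \Aone\ each admissible $\sstar$ uniquely determines a completion. By standard dimension theory, this is in turn equivalent to the existence of a subcollection $\Otilde$ of $\r(\d-\r)$ constraints whose Jacobian with respect to the $u_{ij}$'s has generically full rank $\r(\d-\r)$.

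The heart of the proof is the combinatorial characterization of when this Jacobian is generically nonsingular. The clean observation is that, after a good choice of pivot rows, each $\efi$ depends only on the free variables in the non-pivot rows meeting $\oi$: exactly $\r$ variables per such row, giving at most $\r(\mOf(\oi)-\r)$ variables in total. Hence for any $\Op\subseteq\Otilde$ the Jacobian submatrix attached to $\Op$ has $\nOf(\Op)$ rows and columns contained in the $\r(\mOf(\Op)-\r)$ free variables supported on the non-pivot rows of $\Op$. Full-row-rank for every $\Op$ therefore requires $\r(\mOf(\Op)-\r)\ge \nOf(\Op)$, which rearranges exactly to (\ref{LRMCEq}). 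Via the deficiency form of Hall's marriage theorem applied to the bipartite constraint/variable graph — equivalently Edmonds' matroid union theorem on the transversal matroid whose blocks are the variable sets of each constraint — this family of inequalities is precisely the condition that a system of distinct representatives exists, i.e., an injective assignment of each constraint $\efi$ to one of the variables it touches. Such an assignment exhibits a nonzero diagonal term in the Jacobian determinant which, by genericity of $\X$, cannot be canceled by the other terms, giving the "if" direction. Conversely, if (\ref{LRMCEq}) fails on some $\Op$, the subsystem over-constrains its variables and the corresponding Jacobian submatrix is rank-deficient for every $\X$.

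The step I expect to be the main obstacle is the converse: showing that if no $\Otilde$ satisfies (\ref{LRMCEq}), then for \emph{every} choice of $\r(\d-\r)$ columns and every choice of pivots the Jacobian drops rank, and moreover that a rank-deficient Jacobian forces a genuine positive-dimensional family of consistent $\sstar$'s rather than merely a singular but finite intersection. This requires a careful deficiency/matroid-partition argument to handle all pivot selections uniformly, together with an algebraic-geometry lift (generic flatness, or a fiber-dimension bound) to convert the Jacobian deficit into a positive-dimensional component of the variety of completions. Additional technical subtleties I anticipate are the case where a natural pivot row itself lies outside some $\Op$, forcing the per-constraint variable count to drop, and the need to globally patch the various echelon charts into a single coherent statement on $\Gr(\r,\R^{\d})$.
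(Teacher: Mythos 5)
Your setup (echelon chart with $\r(\d-\r)$ free variables, one determinantal constraint per column under \Aone, and the reduction of finite completability to zero-dimensionality, equivalently to algebraic independence / generic Jacobian rank of $\r(\d-\r)$ of the constraints) matches the paper's framework through \dimensionLem. The gap is in the step you call the heart of the proof: the claim that condition \finiteCond\ is "precisely" Hall's condition for an SDR in the constraint/variable bipartite graph, and that a matching exhibits a diagonal term of the Jacobian determinant that "by genericity cannot be canceled." Both halves of this are wrong or unproven. First, the combinatorics does not match: in a fixed chart, a constraint supported on $\oi$ touches $\r\cdot|\oi\setminus\{\text{pivot rows}\}|$ variables, which exceeds $\r(\mOf(\oi)-\r)$ whenever pivot rows miss the support, so Hall's condition is strictly weaker than \eqref{LRMCEq}. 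Concretely, take $\r=1$, $\d=3$, pivot row $1$, and two columns both observed on rows $\{2,3\}$: an SDR exists (two constraints, two variables $u_{21},u_{31}$), yet the two constraints are proportional for every $\X$ (their gradients are both multiples of $(-{\rm u}^\star_3,{\rm u}^\star_2)$), the Jacobian determinant vanishes identically, and indeed \eqref{LRMCEq} fails since $\mOf=2<\nOf+\r=3$. So constraints supported on few rows have correlated gradients, the Jacobian entries are not generic, and a nonzero diagonal product can be canceled identically; the correct combinatorial object is a count/sparsity condition of rigidity type ($\nOf\le\r\,\mOf-\r^2$ on every subset), not a transversal-matroid/SDR condition in a single chart.

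Second, even when the counting is done correctly, the assertion that the matching term survives cancellation is exactly the hard genericity statement of the theorem, and you give no argument for it. This is where the paper does its real work: it proves the forward implication by showing (\LLem, with a per-subset re-choice of pivot rows inside the support of $\Op$) that at most $\r(\mOf(\Op)-\r)$ of the polynomials supported on $\Op$ can be algebraically independent, and the reverse implication by analyzing \mad\ subsets via resultants and an induction on uniquely determined variables (\oneFixedVarLem, \allFixedVarsLem, \basisLem), concluding in \independenceLem\ that dependence forces $\nOf(\Op)=\r(\mOf(\Op)-\r)+1$ for some $\Op$. Your proposal acknowledges the pivot-outside-the-support issue and the rank-deficiency-to-positive-dimension step as "technical subtleties," but the former is not a technicality (it changes the combinatorial criterion, as the example shows) and the latter, together with the non-cancellation claim, is the substance of the proof; as written the argument would certify independence for patterns that are in fact dependent, so the proposed route does not establish the theorem without an essentially new argument replacing the paper's \mad/resultant analysis.
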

\end{framed}

The proof of \LRMCThm\ is given in \proofSec.  In words, condition \finiteCond\ asks that every subset of $\nOf$ columns of $\Otilde$ has at least $\nOf/\r+\r$ nonzero rows.

\begin{myExample}
\label{samplingEg}
Suppose $\O$ is given by:
\begin{align*}
\O \ = \ \left[ \begin{matrix} \\ \\ \\ \\ \end{matrix} \right.
\underbrace{
\begin{array}{c|c|c|c}
\hspace{.3cm} \Scale[1.5]{\one} \hspace{.3cm} & \hspace{.3cm} \Scale[1.5]{\one} \hspace{.3cm} & \cdots & \hspace{.3cm} \Scale[1.5]{\one} \hspace{.3cm} \\ \hline
& & & \\
\Scale[1.5]{\I} & \Scale[1.5]{\I} & \cdots & \Scale[1.5]{\I} \\
& & & \\
\end{array}}_{\fix{(\r+1)(\d-\r)}}
\left]\begin{matrix}
\left. \begin{matrix} \\ \end{matrix} \right\} \r \hspace{.7cm} \\
\left. \begin{matrix} \\ \\ \\ \end{matrix} \right\} \d-\r,
\end{matrix}
\right.
\end{align*}
such that $\O$ has exactly $\r+1$ nonzero entries per column.  This way, each column of $\O$ encodes exactly one constraint that candidate subspaces must satisfy in order to be consistent with the observed data.  In this case we can simply take $\Otilde$ to be the matrix formed with the first $\r(\d-\r)$ columns of $\O$.  One may verify that $\Otilde$ satisfies \finiteCond.  Hence $\O$ satisfies the conditions of \LRMCThm.
\end{myExample}

\subsection*{Unique Completability}

\LRMCThm\ is easily extended to a condition on $\O$ that is sufficient to guarantee that one and only one subspace is consistent with $\XO$, which in turn suffices for exact matrix completion.

\begin{framed}
\begin{myTheorem}
\label{uniquenessThm}
Let $\O$ be given, and suppose \Aone\ holds.  Then \hyperref[aeDef]{almost every} $\X$ can be uniquely recovered from $\XO$ if $\O$ contains two disjoint submatrices: $\Otilde$ of size $\d \times \r(\d-\r)$ and \phantomsection\label{OhatDef}$\Ohat$ of size $\d \times (\d-\r)$, such that $\Otilde$ satisfies \finiteCond\ and
\begin{itemize}
\item[\identifiabilityCond]
\phantomsection\label{identifiabilityCond}
Every matrix $\Op$ formed with a subset of the columns in $\Ohat$ satisfies
\begin{align}
\label{identifiabilityEq}
\mOf(\Op) \ \geq \ \nOf(\Op) + \r.
\end{align}
\end{itemize}
\end{myTheorem}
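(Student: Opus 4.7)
The plan is to combine the finite-completability guarantee of \LRMCThm, applied to $\Otilde$, with a discrimination argument driven by the extra columns in $\Ohat$. First, since $\Otilde$ satisfies \finiteCond, \LRMCThm\ yields, for \ae\ $\X$, a finite list of rank-$\r$ subspaces $\sstar = S_1, S_2, \ldots, S_k$ consistent with the observations on $\Otilde$; it remains to show that the additional observations in $\Ohat$ force $k = 1$.

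Next, I would reinterpret \identifiabilityCond\ combinatorially. Because $\mOf(\Op) \ge \nOf(\Op) + \r$ for every nonempty subset $\Op$ of the $\d - \r$ columns of $\Ohat$, a deficiency form of Hall's marriage theorem implies that for any choice of $\r$ ``pivot rows'' $R_0 \subseteq \{1,\ldots,\d\}$, the bipartite graph between the columns of $\Ohat$ and the rows in $\{1,\ldots,\d\}\setminus R_0$ (with edges given by the observation pattern) satisfies Hall's condition. Consequently, the columns of $\Ohat$ can be ordered as $j_1,\ldots,j_{\d-\r}$ and matched to distinct non-pivot rows $p_1,\ldots,p_{\d-\r}$, with $p_\ell$ observed in column $j_\ell$ and the $p_\ell$ exhausting $\{1,\ldots,\d\}\setminus R_0$.

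With this matching in hand, I would parameterize both $\sstar$ and every candidate $S_i$ in column-echelon form using $R_0$ as pivots, so that they are determined by $(\d-\r)\times\r$ coefficient matrices $A^\star$ and $A_i$, with $S_i = \sstar$ iff $A_i = A^\star$. The requirement that $S_i$ match the observation at the matched entry $(p_\ell, j_\ell)$ translates, generically in $\X$, into a polynomial relation that pins down the $p_\ell$-th row of $A_i$ as a rational function of the observed data and of the remaining rows of $A_i$. Because the $p_\ell$ cover all non-pivot rows, these $\d-\r$ relations collectively determine $A_i$ uniquely; since $A^\star$ satisfies them by construction, $A_i = A^\star$ for \ae\ $\X$. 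A union bound over the finite set $\{S_2, \ldots, S_k\}$ then concludes.

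The main obstacle is the third step: the rational-function expressions are well-defined only off a proper algebraic subset of $(\sstar, \tStar)$-space, where pivot minors of $\Ustar$ or denominators arising in the iterative solution may vanish. Carefully tracking these exceptional loci and verifying that they all have $\nuuG \times \nuuT$-measure zero — and then union-bounding across the finitely many admissible matching choices and the finitely many alternative candidates from Step 1 — is where the bulk of the technical care lies.
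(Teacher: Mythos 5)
Your first step matches the paper, and your combinatorial reading of \identifiabilityCond\ is fine as far as it goes (deleting any $\r$ pivot rows leaves every set of $\nOf$ columns of $\Ohat$ with at least $\nOf$ nonzero rows, so Hall's condition holds). The gap is in your third step, which is exactly where the substance of the theorem lies. A column of $\Ohat$ observed on $\r+1$ entries imposes a \emph{single scalar} constraint on a candidate subspace (the vanishing of one $(\r+1)\times(\r+1)$ determinant, i.e., one polynomial $\efihat$), so it cannot ``pin down the $p_\ell$-th row of $A_i$'' --- a row with $\r$ unknowns --- as a function of the data and the remaining rows, except when $\r=1$. Counting degrees of freedom, the columns of $\Ohat$ contribute only $\d-\r$ scalar equations while $A_i$ has $\r(\d-\r)$ unknowns, so these relations cannot ``collectively determine $A_i$ uniquely''; and even if each row were expressible in terms of the others, such a circular system of expressions does not by itself have a unique solution. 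The union bound over the finite candidate list does not repair this: a wrong candidate $S$ fits a new generic column of $\sstar$ observed on $\oihat$ whenever $S_{\fix{\oihat}}=\sstaroihat$, so the most the $\Ohat$ data can ever give you is equality of the \emph{projections} onto the sets $\oihat$, never directly equality of the subspaces.

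The paper closes this gap with two ingredients that are absent from your sketch. First, for each column of $\Ohat$ it considers $\{\Ftilde,\efihat\}$: since $\Ftilde$ consists of $\r(\d-\r)$ algebraically independent polynomials (this is where \finiteCond\ and \LRMCThm\ enter), this set is algebraically dependent, and extracting a \mad\ subset and applying \oneFixedVarLem\ (the resultant argument) shows that every subspace fitting both $\Xtilde_{\fix{\Otilde}}$ and the $i$-th column of $\Xhat_{\fix{\Ohat}}$ must satisfy $S_{\fix{\oihat}}=\sstaroihat$; note the projection is pinned by playing the new polynomial against the $\Otilde$ polynomials, not by the $\Ohat$ column alone. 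Second, \identifiabilityCond\ is used not through a matching of entries to rows but through \identifiabilityLem\ (Theorem 1 of \cite{identifiability}): for $\nuuG$-almost every $\sstar$, agreement of all the projections $\{S_{\fix{\oihat}}=\sstaroihat\}_{\fix{\i=1}}^{\fix{\d-\r}}$ forces $S=\sstar$. If you want to keep the Hall-matching idea, it could at best serve inside a re-proof of that identifiability lemma, where each \emph{known} projection supplies $\r$ scalar constraints rather than one --- and you would still need the first, resultant-based step to know the projections at all.
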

\end{framed}

The proof of \uniquenessThm\ is given in \uniquenessSec.  In words, condition \identifiabilityCond\ asks that every subset of $\nOf$ columns of $\Ohat$ has at least $\nOf+\r$ nonzero rows.  Notice that \eqref{LRMCEq} is a weaker condition than \eqref{identifiabilityEq}, but \eqref{LRMCEq} is required to hold for all the subsets of $\r(\d-\r)$ columns, while \eqref{identifiabilityEq} is required to hold only for all the subsets of $\d-\r$ columns.

\begin{myExample}
\label{uniquenessEg}
Consider $\O$ as in \samplingEg. Take $\Otilde$ to be the matrix formed with the first $\r(\d-\r)$ columns of $\O$ and $\Ohat$ to be the matrix formed with the last $\d-\r$ columns of $\O$. One may verify that $\Otilde$ satisfies \finiteCond\ and that $\Ohat$ satisfies \identifiabilityCond.  Hence $\O$ satisfies the conditions of \uniquenessThm.
\end{myExample}

\LRMCThm\ implies that $\r(\d-\r)$ columns with $\r+1$ entries are necessary for \finiteCompletability\ (hence also for unique completability).  There are cases when $\r(\d-\r)$ columns are also sufficient for unique completability, e.g., if $\r=1$, where \finiteCompletability\ is equivalent to unique completability (see \rankOneProp).

In general, though, unique completability requires more columns than \finiteCompletability\ (see \necessaryNEg).  \uniquenessThm\ gives deterministic sufficient sampling conditions for unique completability that only require $(\r+1)(\d-\r)$ columns.  This shows that with just a few more observations, unique completability follows from \finiteCompletability.

We point out that when the conditions of \uniquenessThm\ are met, $\sstar$ can be uniquely identified as
\begin{align*}
\sstar = \spn\left[ \begin{matrix}\I \\ \V \end{matrix} \right],
\end{align*}
where $\V$ is the unique solution to the polynomial system $\F(\V)=\bs{0}$, with $\F$ as defined in \proofSec.

Once $\sstar$ is known, $\X$ can be perfectly recovered observing only $\r$ entries per column.  To see this, let \phantomsection\label{UstarDef}$\Ustar$ be a basis of $\sstar$, and let \phantomsection\label{upsDef}$\ups$ be a subset of $\{1,\dots,\d\}$ with exactly $\r$ elements.  We will use the subscript $\ups$ to denote restriction to the rows in $\ups$.  Since the coefficients of column \phantomsection\label{xDef}$\x$ in the basis $\Ustar$ are given by \phantomsection\label{tStarDef}$\tStar= (\Ustarups{}^{\fix{\T}} \Ustarups)^{-1} \Ustarups{}^{\fix{\T}} \x_{\fix{\ups}}$, we can recover the entire column as $\x=\Ustar \tStar$.

\subsection*{About Conditions \finiteCond\ and \identifiabilityCond}
In general, verifying condition \finiteCond\ in Theorems \ref{LRMCThm} and \ref{uniquenessThm} may be computationally prohibitive, especially for large $\d$.  On the other hand, one can easily and efficiently verify whether \identifiabilityCond\ is satisfied by checking the dimension of the null-space of a sparse matrix (\validationAlg).  Fortunately, there is a tight relation between conditions \finiteCond\ and \identifiabilityCond, summarized in the following lemma.  The proofs of the statements in this section are given in \additionalSec.

\begin{myLemma}
\label{LRMCLem}
Let $\Otilde$ be a $\d \times \r(\d-\r)$ matrix formed with a subset of the columns in $\O$.  Suppose $\Otilde$ can be partitioned into $\r$ matrices \phantomsection\label{OtDef}$\{\Ot\}_{\fix{\ttau=1}}^{\fix{\r}}$, each of size $\d \times (\d-\r)$, such that \identifiabilityCond\ holds for every $\Ot$.  Then $\Otilde$ satisfies \finiteCond.
\end{myLemma}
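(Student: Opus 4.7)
The plan is to derive condition \finiteCond\ for $\Otilde$ directly from condition \identifiabilityCond\ applied to the pieces $\Ot$ of the partition, via a one-line pigeonhole argument. Concretely, I would fix an arbitrary nonempty subset $\Op$ of the columns of $\Otilde$, write $n := \nOf(\Op)$, and split $\Op$ according to the partition: let $\Op_\ttau$ denote the submatrix consisting of those columns of $\Op$ that belong to $\Ot$, and set $n_\ttau := \nOf(\Op_\ttau)$, so that $n_1 + \cdots + n_\r = n$.

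The key step is pigeonhole: among the $\r$ nonnegative integers $\{n_\ttau\}_{\ttau=1}^{\r}$, at least one --- call it $n_{\ttau^*}$ --- satisfies $n_{\ttau^*} \geq n/\r$. Since $n \geq 1$, this piece is nonempty, so applying \identifiabilityCond\ to $\Op_{\ttau^*}$ viewed as a nonempty subset of the columns of $\Ot^*$ yields
\begin{align*}
\mOf(\Op_{\ttau^*}) \ \geq \ n_{\ttau^*} + \r \ \geq \ n/\r + \r.
\end{align*}
But $\Op_{\ttau^*}$ is itself a column-submatrix of $\Op$, so any row that is nonzero in $\Op_{\ttau^*}$ is nonzero in $\Op$, whence $\mOf(\Op) \geq \mOf(\Op_{\ttau^*}) \geq n/\r + \r$. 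Since $\Op$ was arbitrary, $\Otilde$ satisfies \finiteCond.

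There is no real obstacle in this argument; the only substantive content is the pigeonhole. The reason it works is that the slack between the two conditions is exactly an $\r$-fold factor on the column count: \identifiabilityCond\ demands $\mOf \geq \nOf + \r$ on each piece, while \finiteCond\ demands only $\mOf \geq \nOf/\r + \r$ on subsets of $\Otilde$, and the $1/\r$ factor is precisely what pigeonhole contributes when we restrict attention to the largest piece. If anything warrants care in the writeup, it is just making the reduction ``$\Op_{\ttau^*}$ is a subset of columns of $\Ot^*$'' precise, so that condition \identifiabilityCond\ is legitimately applicable to it.
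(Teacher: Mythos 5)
Your proof is correct and follows essentially the same route as the paper: both arguments split $\Op$ along the partition, observe that the largest piece has at least $\nOf(\Op)/\r$ columns (your pigeonhole is the paper's ``max exceeds the average'' step), apply condition \identifiabilityCond\ to that piece, and finish with the monotonicity $\mOf(\Op) \geq \mOf(\Op_{\ttau^*})$. Your explicit note that the largest piece is nonempty is a small point of care the paper leaves implicit, but the substance is identical.
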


As consequence of \LRMCLem\ we obtain an additional sufficient condition for completability that only involves \identifiabilityCond.

\begin{myCorollary}
\label{LRMCCor}
Let $\O$ be given, and suppose \Aone\ holds.  Then \hyperref[aeDef]{almost every} $\X$ can be uniquely recovered from $\XO$ if $\O$ contains $\r+1$ disjoint matrices $\{\Ot\}_{\fix{\ttau=1}}^{\fix{\r+1}}$, each of size $\d \times (\d-\r)$, such that \identifiabilityCond\ holds for every $\Ot$.
\end{myCorollary}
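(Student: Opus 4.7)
The plan is to obtain the corollary by combining \LRMCLem\ with \uniquenessThm\ in a direct way. The hypothesis gives us $\r+1$ disjoint $\d \times (\d-\r)$ submatrices of $\O$, each individually satisfying \identifiabilityCond. Our job is simply to repackage these as the two ingredients demanded by \uniquenessThm: a $\d \times \r(\d-\r)$ matrix $\Otilde$ satisfying \finiteCond, and a disjoint $\d \times (\d-\r)$ matrix $\Ohat$ satisfying \identifiabilityCond.

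First, I would single out one of the $\r+1$ given matrices, say $\bs{\hat{\Omega}}_{\r+1}$, and set $\Ohat := \bs{\hat{\Omega}}_{\r+1}$. Since by assumption \identifiabilityCond\ holds for this matrix, $\Ohat$ immediately satisfies the second requirement of \uniquenessThm. Next, I would form $\Otilde$ by horizontally concatenating the remaining $\r$ matrices $\bs{\hat{\Omega}}_1, \dots, \bs{\hat{\Omega}}_{\r}$. By construction $\Otilde$ has $\r(\d-\r)$ columns, has dimensions $\d \times \r(\d-\r)$, and is disjoint from $\Ohat$ (since the original $\r+1$ matrices were pairwise disjoint).

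The one nontrivial step is verifying that this $\Otilde$ satisfies \finiteCond. This is exactly the content of \LRMCLem: whenever a $\d \times \r(\d-\r)$ matrix admits a partition into $\r$ submatrices of size $\d \times (\d-\r)$ each satisfying \identifiabilityCond, it automatically satisfies \finiteCond. So by \LRMCLem\ applied to the partition $\{\bs{\hat{\Omega}}_{\ttau}\}_{\ttau=1}^{\r}$, $\Otilde$ satisfies \finiteCond.

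With $\Otilde$ and $\Ohat$ chosen as above, both hypotheses of \uniquenessThm\ are met, and the conclusion — that almost every $\X$ can be uniquely recovered from $\XO$ — follows. There is no real obstacle here beyond choosing the partition correctly; the heavy lifting has already been done inside \LRMCLem\ and \uniquenessThm, and the corollary is essentially a bookkeeping consequence of pushing one of the $\r+1$ submatrices into the role of $\Ohat$ while bundling the other $\r$ into $\Otilde$.
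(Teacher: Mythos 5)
Your proposal is correct and matches the paper's argument: the paper itself states that \LRMCCor\ follows directly from \LRMCLem\ and \uniquenessThm, which is exactly your construction of bundling $\r$ of the given submatrices into $\Otilde$ (verified via \LRMCLem) and using the remaining one as $\Ohat$. No gaps.
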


\begin{myExample}
\sloppypar Consider $\O$ as in \samplingEg.  We can partition $\O$ into $[ \ \ONum{1} \ | \ \ONum{2} \ | \ \cdots \ | \ \ONum{r+1} ]$, as depicted in \samplingEg.  One may verify that $\Ot$ satisfies \identifiabilityCond\ for every $\ttau=1,\dots,\r+1$.  Hence $\O$ satisfies the conditions of \LRMCCor.
\end{myExample}

With \LRMCCor\ we show that completable patterns appear with high probability under uniform random sampling schemes with as little as $\Ord(\max\{ \r , \log\d\})$ samples per column.

\begin{myTheorem}
\label{probabilityThm}
Let \phantomsection\label{epsDef}$0<\eps \leq 1$ be given.  Suppose $\r \leq \frac{\fix{\d}{}}{6}$ and that each column of $\X$ is observed in at least \phantomsection\label{LDef}$\L$ entries, distributed uniformly at random and independently across columns, with
\begin{align}
\label{LEq}
\textstyle \L \ \geq \ \max \left\{12\left( \log(\frac{\fix{\d}{}}{\fix{\eps}{}})+1\right), \ 2\r \right\}.
\end{align}
Then with probability at least $1-\eps$, $\XO$ will be \hyperref[finitelyCompletableDef]{finitely rank-$\r$ completable} (if $\N \geq \r(\d-\r)$) and uniquely completable (if $\N \geq (\r+1)(\d-\r)$).
\end{myTheorem}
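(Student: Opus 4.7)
The plan is to verify the hypotheses of \LRMCCor\ for the unique-completability claim, and to combine \LRMCLem\ with \LRMCThm\ for the finite-completability claim. I partition the first $(\r+1)(\d-\r)$ columns of $\O$ into $\r+1$ disjoint groups of $\d-\r$ columns each, producing the sub-pattern matrices $\{\ONum{\ttau}\}_{\ttau=1}^{\r+1}$, each of size $\d \times (\d-\r)$. For the finite-completability claim, only $\r$ such groups are needed (available since $\N \geq \r(\d-\r)$); \LRMCLem\ then upgrades \identifiabilityCond\ on each block to \finiteCond\ on their concatenation $\Otilde$, so \LRMCThm\ applies. The core task is therefore to show that, with probability at least $1-\eps$, every $\ONum{\ttau}$ satisfies \identifiabilityCond.

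Fix $\ttau$ and a subset $\Op$ of $n$ columns of $\ONum{\ttau}$. Because each column is an independent uniform $\L$-subset of $\{1,\dots,\d\}$, a single column already contributes $\L$ distinct nonzero rows, so $\mOf(\Op) \geq \L \geq n + \r$ holds deterministically whenever $n \leq \L - \r$. For $n > \L - \r$, the event $\mOf(\Op) < n + \r$ forces all $n\L$ sampled positions in $\Op$ to lie in some fixed $(n+\r-1)$-element subset of $\{1,\dots,\d\}$, yielding
\[
    P[\mOf(\Op) < n+\r] \ \leq \ \binom{\d}{n+\r-1}\left(\frac{n+\r-1}{\d}\right)^{n\L}
\]
after using $\binom{n+\r-1}{\L}/\binom{\d}{\L} \leq ((n+\r-1)/\d)^{\L}$. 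A triple union bound over the $\binom{\d-\r}{n}$ size-$n$ subsets of $\ONum{\ttau}$'s columns, over $n \in \{\L - \r + 1, \dots, \d - \r\}$, and over the $\r+1$ blocks $\ttau$, together with the hypothesis $\L \geq \max\{12(\log(\d/\eps)+1),\,2\r\}$ and $\r \leq \d/6$, will push the total failure probability below $\eps$.

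Finally, \LRMCCor\ and \LRMCThm\ are stated under \Aone, whereas here the columns are sampled in $\L \geq \r+1$ entries rather than exactly $\r+1$. This gap is handled by \entriesRmk: it suffices to exhibit a sub-pattern of $\O$ with exactly $\r+1$ ones per column that still contains disjoint submatrices satisfying \identifiabilityCond. A Hall-type / bipartite matching argument selects $\r+1$ of the $\L$ observations in each column of every $\ONum{\ttau}$ while preserving \identifiabilityCond, exploiting the slack provided by the full $\L$-per-column block. The main obstacle is the concentration step in the second paragraph: the binomial factor $\binom{\d-\r}{n}$ and the base $(n+\r-1)/\d$ interact delicately in the transition regime $n$ slightly above $\L - \r$, where the base is not small and the exponent $n\L$ exceeds $n+\r-1$ only by a modest margin; verifying the logarithmic-in-$\d$ sufficiency of $\L$ uniformly across this regime is exactly what the two lower bounds $\L \geq 12(\log(\d/\eps)+1)$ and $\L \geq 2\r$ (together with $\r \leq \d/6$) are calibrated to deliver.
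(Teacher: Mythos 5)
Your route is the same as the paper's: show that each $\d \times (\d-\r)$ block satisfies \identifiabilityCond\ with high probability via a union bound over column subsets — with exactly the paper's estimate $\binom{\d-\r}{\nOf}\binom{\d}{\nOf+\r-1}\big((\nOf+\r-1)/\d\big)^{\L\nOf}$ and the same observation that the event is void for $\nOf \leq \L-\r$ — then combine blocks through \LRMCLem\ and \LRMCThm\ for finite completability and through \LRMCCor\ (equivalently \uniquenessThm) for uniqueness, finishing with a union bound over the $\r$ or $\r+1$ blocks. The genuine gap is that you stop exactly where the theorem's content lies: you assert that $\L \geq \max\{12(\log(\d/\eps)+1), 2\r\}$ and $\r \leq \d/6$ ``will push the total failure probability below $\eps$,'' and you yourself flag this as the main obstacle. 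That verification is the bulk of the paper's proof, not bookkeeping. Concretely, the paper splits the sum over $\nOf$ at $\d/2$: for the small half (after reindexing so the base is $\nOf/\d$) it uses $\binom{\d}{\nOf} \leq (\d e/\nOf)^{\nOf}$ and $\nOf \geq \L \geq 2\r$, so the exponent $\L(\nOf-\r+1)$ is at least $\L\nOf/2$ and each term is bounded by $\big(e^{2}\,2^{-\L/2+2}\big)^{\nOf} < \eps/\d^2$; for the large half it substitutes $\nOf \mapsto \d-\nOf$ and uses $(1-\nOf/\d)^{\d} \leq e^{-\nOf}$ together with $\d-\nOf-\r+1 \geq \d/3$ (this is precisely where $\r \leq \d/6$ enters), giving each term at most $\big(e^{2\log\d+2-\L/3}\big)^{\nOf} < \eps/\d^2$; summing yields failure probability below $\eps/\d$ per block and, since $\r+1 \leq \d$, below $\eps$ overall. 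Note also that the regime you single out as delicate ($\nOf$ slightly above $\L-\r$) is in fact benign — there the base $(\nOf+\r-1)/\d$ is small and the exponent large — whereas the regime that actually needs a separate device is $\nOf$ near $\d-\r$, where the base is close to $1$; your proposal neither isolates nor treats it.

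A secondary point: your concern about \Aone\ versus $\L \geq \r+1$ observations per column is legitimate, but the proposed ``Hall-type / bipartite matching'' pruning to exactly $\r+1$ entries per column while preserving \identifiabilityCond\ is itself an unproved combinatorial claim. The paper instead leans on \entriesRmk: additional observations cannot increase the number of consistent rank-$\r$ completions, so it applies the probabilistic lemma to the sampling conditions directly and does not construct such a sub-pattern.
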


In many situations, though, sampling is not uniform.  For instance, in vision, occlusion of objects can produce missing data in very non-uniform random patterns.  In cases like this, we can partition $\O$ (e.g., randomly) into matrices $\{\Ot\}_{\fix{\ttau=1}}^{\fix{\r+1}}$, each with $\d-\r$ columns.  We can use \validationAlg\ below to determine whether each $\Ot$ satisfies \identifiabilityCond.  If this is the case, $\O$ is completable by \LRMCCor.  More about this is discussed in \implicationsSec.

To present the algorithm, let us introduce the matrix \phantomsection\label{ADef}$\A$ that will allow us to determine efficiently whether a sampling $\Ohat$ satisfies \identifiabilityCond.  Let \phantomsection\label{ObreveDef}$\Obreve$ be a matrix formed with \phantomsection\label{NbreveDef}$\Nbreve \geq \d-\r$ columns of $\O$, and let \phantomsection\label{ojDef}$\oj$ index the nonzero entries in the $\i^{\rm th}$ column of $\Obreve$.  Let \phantomsection\label{UDef}$\U$ be a $\d \times \r$ matrix drawn according to \phantomsection\label{nuuUDef}$\nuuU$, an absolutely continuous distribution with respect to the Lebesgue measure on $\R^{\fix{\d \times \r}}$, and let $\U_{\fix{\oj}}$ denote the restriction of $\U$ to the nonzero rows in $\oj$.  Let \phantomsection\label{aojDef}$\aoj \in \R^{\fix{\r+1}}$ be a nonzero vector in $\ker \U{}_{\fix{\oj}}^{\fix{\T}}$, and \phantomsection\label{ajDef}$\aj$ be the vector in $\R^{\fix{\d}}$ with the entries of $\aoj$ in the nonzero locations of $\oj$ and zeros elsewhere.  Finally, let $\A$ denote the $\d \times \Nbreve$ matrix with $\{\aj\}_{\fix{\i=1}}^{\fix{\Nbreve}}$ as columns.

\begin{algorithm}[tb]
\caption{Determine whether $\Obreve$ contains a matrix $\Ohat$ satisfying \identifiabilityCond.}
\label{validationAlg}

\textbf{Input:} Matrix $\Obreve$ with $\Nbreve \geq \d-\r$ columns of $\O$.

- Draw $\U \in \R^{\fix{\d \times \r}}$ according to $\nuuU$.

- \For{$\i=1$ \KwTo $\Nbreve$}{

- $\oi$ = indices of the nonzero rows of the $\i^{\rm th}$ column of $\Obreve$.

- $\aoj = $ nonzero vector in $\ker \U{}_{\fix{\oj}}^{\fix{\T}}$.

- $\aj = $ vector in $\R^{\fix{\d}}$ with entries of $\aoj$ in the

\hspace{.85cm} nonzero locations of $\oj$ and zeros elsewhere.

}

- $\A = $ matrix formed with $\{\aj\}_{\fix{\i=1}}^{\fix{\Nbreve}}$ as columns.

- \uIf{$\dim \ker \A{}^{\fix{\T}} = \r$}{

- \textbf{Output:} $\Obreve$ contains a $\d \times (\d-\r)$ matrix $\Ohat$ satisfying \identifiabilityCond.

}

- \Else{

- \textbf{Output:} $\Obreve$ contains no $\d \times (\d-\r)$ matrix $\Ohat$ satisfying \identifiabilityCond.

}

\end{algorithm}

\validationAlg\ will verify whether $\dim \ker \A{}^{\fix{\T}} = \r$, and this will determine whether $\Obreve$ contains a $\d \times (\d-\r)$ matrix $\Ohat$ satisfying \identifiabilityCond.  The key insight behind \validationAlg\ is that $\A$ encodes the information of the projections of \phantomsection\label{sDef}$\s=\spn\{\U\}$ onto the canonical coordinates indicated by $\Obreve$.  Theorem 1 in \cite{identifiability} shows that these projections will uniquely determine $\s$ if and only if $\dim \ker \A{}^{\fix{\T}} = \r$, which will be the case if and only if $\Obreve$ contains a $\d \times (\d-\r)$ matrix $\Ohat$ satisfying \identifiabilityCond.  We thus have the following corollary, which states that with probability $1$, \validationAlg\ will determine whether $\Obreve$ contains a matrix $\Ohat$ satisfying \identifiabilityCond.

\begin{myCorollary}
\label{algorithmCor}
Let $\Obreve$ be a matrix formed with $\Nbreve \geq \d-\r$ columns of $\O$.  Construct $\A$ as in \validationAlg.  Then $\nuuU$-almost surely, $\Obreve$ contains a $\d \times (\d-\r)$ matrix $\Ohat$ satisfying \identifiabilityCond\ if and only if $\dim \ker \A{}^{\fix{\T}} = \r$.
\end{myCorollary}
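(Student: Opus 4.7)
The plan is a direct reduction to Theorem~1 of \cite{identifiability}, which characterizes precisely when a subspace is uniquely identifiable from its canonical coordinate projections. I would first verify that the construction of $\A$ is well-defined $\nuuU$-almost surely. By Assumption~\Aone, every column of $\Obreve$ has exactly $\r+1$ nonzero entries, so each restriction $\U_{\oj}$ is an $(\r+1)\times \r$ matrix. Since $\U\sim\nuuU$ is absolutely continuous with respect to the Lebesgue measure on $\R^{\d\times\r}$, every such restriction has full column rank $\r$ with probability one, so $\ker\U_{\oj}{}^{\T}$ is one-dimensional and $\aoj$ is well-defined up to scale. The ambient vector $\aj$ then encodes the unique (up to scale) linear relation that any vector in $\s=\spn\{\U\}$ must satisfy on the coordinates indexed by $\oj$; equivalently, $\aj$ is a defining equation for the canonical coordinate projection of $\s$ onto $\oj$.

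Next, I would invoke Theorem~1 of \cite{identifiability}, which in exactly this setting asserts that the family of canonical coordinate projections of $\s$ indexed by the columns of $\Obreve$ uniquely determines $\s$ if and only if $\dim\ker\A{}^{\T}=\r$. This is the algebraic half of the equivalence. I would then match \identifiabilityCond\ to the combinatorial characterization of identifiability also supplied by \cite{identifiability}: a Hall-type deficiency argument on the supports of the columns of $\Obreve$ shows that identifiability of $\s$ from these projections is equivalent to the existence of a $\d\times(\d-\r)$ submatrix $\Ohat$ of $\Obreve$ every subset of $\nOf$ of whose columns covers at least $\nOf+\r$ nonzero rows, which is exactly \identifiabilityCond. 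Chaining the two equivalences yields the corollary.

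The main obstacle is expositional rather than mathematical: one must carefully align the notation and generic assumptions of this paper with those of \cite{identifiability}, verifying in particular that the almost-sure statements with respect to $\nuuU$ used here coincide with the genericity hypotheses underlying the cited theorem, and that the one-dimensional kernel chosen at each step is independent of the scaling ambiguity in $\aoj$ (which it must be, because $\dim\ker\A{}^{\T}$ depends only on the column span of $\A$, hence not on how each $\aj$ is rescaled). Once this alignment is made, the probability-one statement of the corollary is simply the conjunction of the exceptional $\nuuU$-null sets arising in the well-definedness step and in the cited theorem.
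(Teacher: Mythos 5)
Your proposal is correct and follows essentially the same route as the paper, which proves the corollary by direct appeal to Theorem 1 of \cite{identifiability}: the columns of $\A$ encode the canonical coordinate projections of $\s=\spn\{\U\}$, that theorem gives the equivalence with $\dim\ker\A{}^{\fix{\T}}=\r$, and its combinatorial characterization is exactly condition \identifiabilityCond\ (restated in the paper as \identifiabilityLem). Your added checks on the $\nuuU$-almost-sure well-definedness of $\aoj$ and the scaling invariance of $\dim\ker\A{}^{\fix{\T}}$ are sound but do not change the argument.
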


\validationAlg\ can also be used to design completable samplings.  As will be discussed in \implicationsSec, this can be particularly useful for adaptive settings, where one may choose which entries to observe, yet it is undesirable or impossible to observe full columns or full rows.

\begin{myExample}
\label{adaptiveEg}
One may use \validationAlg\ to verify that each of the $\r+1$ blocks in the sampling matrix $\O$ below satisfies \identifiabilityCond.  This implies that $\O$ satisfies the conditions of \LRMCCor, and can thus be uniquely completed.  In contrast with \samplingEg, this pattern does not sample full columns nor full rows.
\begin{figure}[H]
\centering
\includegraphics[width=8cm]{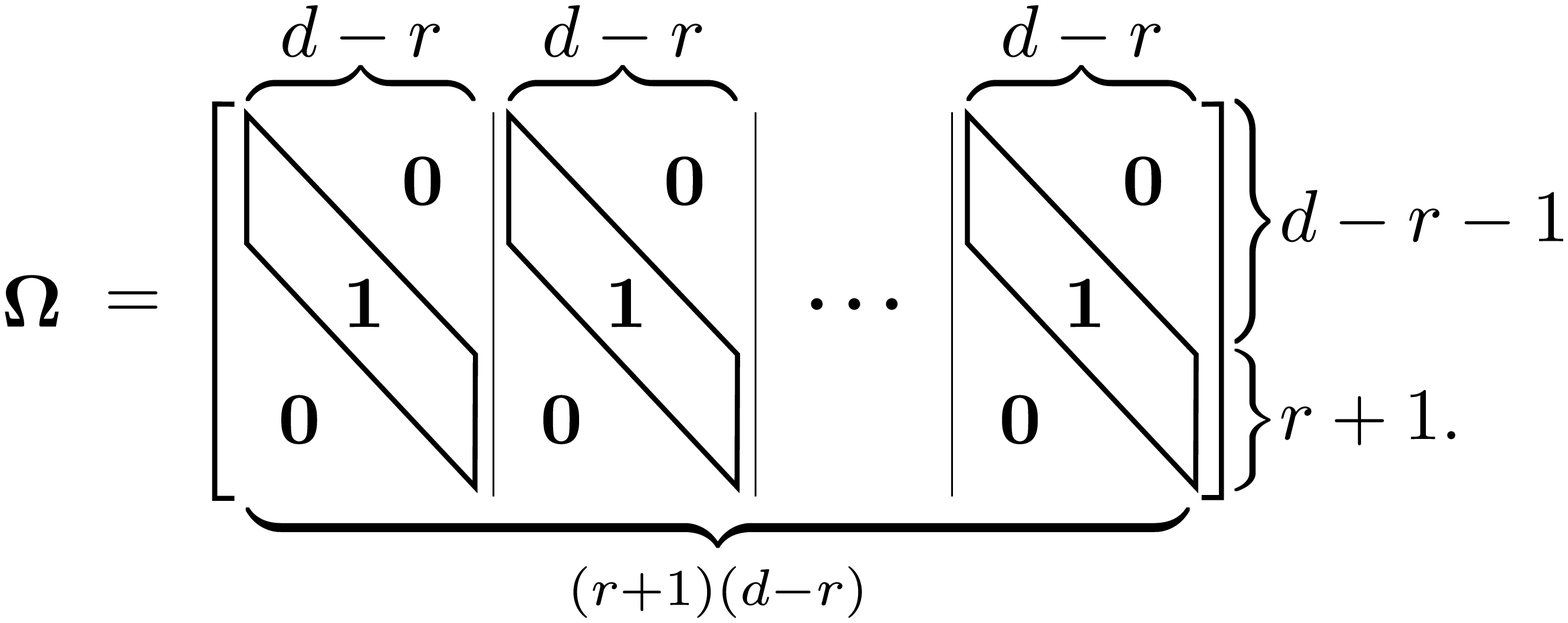}
\label{OmegaEgFig}
\end{figure}
\end{myExample}

\section{Experiments and Implications}
\label{implicationsSec}

In this section we discuss implications of the results stated above and explore how well they predict
performance in a series of simulation experiments.  In all our experiments we use the so-called \phantomsection\label{IHTSVDDef}iterative hard-thresholded SVD (\IHTSVD) algorithm \cite{iterative}.  This algorithm iterates between truncating the SVD of the current estimate to a user-specified rank $r$, and then replacing the values in the observed entries with their original (observed) values. This algorithm is also quite similar to the Singular Value Thresholding algorithm \cite{cai}, OptSpace \cite{keshavan10} and FPCA \cite{fpc}.  In the very low sampling regimes of interest in our studies, we found the \IHTSVD\ algorithm typically performed as well or better than several other completion algorithms (e.g., SVT \cite{cai}, GROUSE \cite{grouse}, alternating minimization \cite{jain} and EM \cite{ssp14}).

\subsection*{Lower bound}
It is easy to see that $\L=\Ord(\max\{ \r , \log\d\})$ uniformly randomly sampled entries per column are necessary to complete an $\Ord(\d) \times \Ord(\d)$ matrix.  This is because a column with fewer than $\r$ observed entries cannot be completed, and if fewer than $\Ord(\log\d)$ uniformly random samples per column are observed, then a row may be completely unobserved with large probability, making it impossible to complete a matrix.  Thus, $\L=\Ord(\max\{ \r , \log\d\})$ is a  lower bound for \LRMC.

It was further shown \cite{candes-tao} that there exist matrices that cannot be completed unless $\L=\Ord(\muu\r\log\d)$ uniformly randomly sampled entries per column are observed, where \phantomsection\label{muuDef}$\muu \in [1,\frac{\fix{\d}{}}{\fix{\r}{}}]$ is the standard coherence parameter defined as
\begin{align*}
\muu \ := \ \textstyle \frac{\fix{\d}}{\fix{\r}{}} \max_{\fix{1 \leq \j \leq \d}} \| \Pstar \ej \|_2^2,
\end{align*}
where \phantomsection\label{PstarDef}$\Pstar$ denotes the projection operator onto $\sstar$, and \phantomsection\label{ejDef}$\ej$ the $\j^{\rm th}$ canonical vector in $\R^{\fix{\d}}$.  Our results imply that this is only the case for a set of matrices with measure zero, and that \ae\ matrix can be uniquely completed with as little as $\L=\Ord(\max\{ \r , \log\d\})$ uniformly randomly sampled samples per column, regardless of $\muu$.

To better understand this, and see that our results do not contradict previous theory, let us revisit the proof of Theorem 1.7 in \cite{candes-tao}.  The proof is based on the construction of block-diagonal matrices with blocks of size $\frac{\fix{\d}{}}{\fix{\r}\fix{\muu}{}}$ and coherence $\leq \muu$ that cannot be recovered with fewer than $\L=\Ord(\muu\r\log\d)$ uniformly random samples per column, e.g.,
\begin{align}
\label{blockDiagonalEq}
\X \ = \ 
\left[ \begin{matrix} \\ \\ \\ \\ \\ \\ \end{matrix} \right.
\overbrace{
\begin{array}{c}
\multirow{2}{*}{\hspace{.1cm}\Scale[1]{\bs{B}_1}\hspace{.1cm}} \\ \\ \hline
\multirow{3}{*}{\Scale[1.5]{\bs{0}}} \\ \\ \\
\end{array}}^{\frac{\fix{\d}{}}{\fix{\r \muu}{}}}
\begin{array}{|c|}
\multicolumn{1}{|c}{\multirow{2}{*}{}} \\ \multicolumn{1}{|c}{} \\  \hline
\multirow{1}{*}{$\ddots$} \\ \hline
\multicolumn{1}{c|}{\multirow{2}{*}{}} \\ \multicolumn{1}{c|}{} \\ 
\end{array}
\underbrace{
\begin{array}{c}
\multirow{3}{*}{\Scale[1.5]{\bs{0}}} \\ \\ \\ \hline
\multirow{2}{*}{\Scale[1]{\bs{B}_{\fix{\r \muu}}}} \\ \\
\end{array}}_{\frac{\fix{\d}{}}{\fix{\r \muu}{}}}
\left. \left. \begin{matrix} \\ \\ \\ \\ \\ \\ \end{matrix} \right] \right\}\d.
\end{align}
This is so because zero valued entries provide no information for the reconstruction process.  It follows that the larger $\muu$, the smaller the blocks will be, and more intensive random sampling would be required to guarantee that entries in the diagonal blocks are observed.  This is why more samples ($\Ord( \r\muu\log\d)$ per column) are required to reconstruct more coherent matrices {\em like this one}, and hence the dependency on $\r\muu$ in the bound of Theorem 1.7 in \cite{candes-tao}.

However, matrices with this block structure have measure zero (with respect to the measure defined above).  Our results show that for \ae\ matrix, an incomplete column contains the same exploitable information regardless of the coherence parameter, and $\Ord(\max\{\r,\log\d\})$ uniform random entries per column are sufficient for completion. This means that while there are some matrices that require $\Ord(\r\muu\log\d)$ uniform random samples per column for reconstruction, \ae\ matrix only requires $\Ord(\max\{\r,\log\d\})$, regardless of $\muu$.

\subsection*{Sample Complexity}
Coherence aside, it is also known that $\N=\d$ columns, and $\L=\Ord(\r \log\d)$ uniform random samples per column are sufficient for completion \cite{candes-recht}.  \probabilityThm\ extends this result, showing that $\N=(\r+1)(\d-\r)$ columns and $\L=\Ord(\max\{ \r , \log\d\})$ uniform random samples per column are sufficient to uniquely complete \ae\ matrix.  This exposes an interesting tradeoff between the required number of columns and observed entries per column for completion, defining new unstudied sampling regimes where completion is now known to be possible (\regimesFig).

\begin{figure}
\centering
\includegraphics[width=8cm]{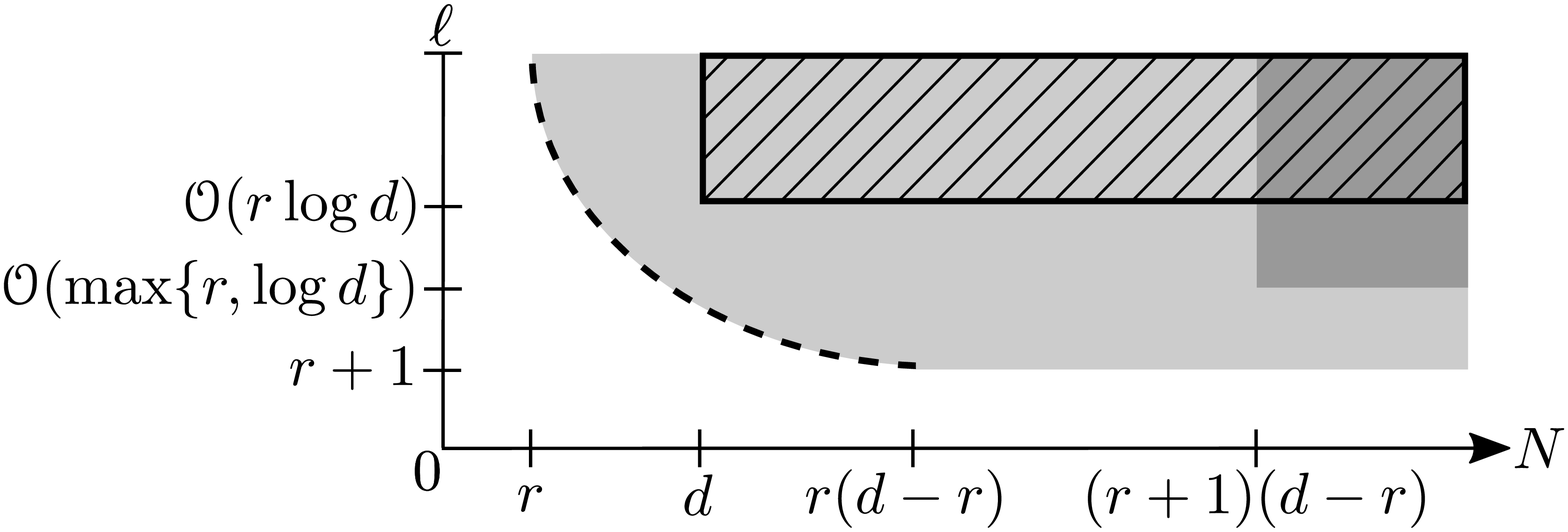}
\caption{Theoretical sampling regimes of \LRMC.   In the white region, where the dashed line is given by $\L = \frac{\fix{\r}(\fix{\d}-\fix{\r})}{\fix{\N}{}}+\r$, it is easy to see that \LRMC\ is impossible by a simple count of the degrees of freedom in a subspace (see \proofSec).  In the light-gray region, \LRMC\ is possible provided the entries are observed in the right places, e.g., satisfying the conditions of \uniquenessThm.  By \probabilityThm, uniform random samplings will satisfy these conditions with high probability as long as $\N \geq (\r+1)(\d-\r)$ and $\L \geq \max \{12 ( \log(\frac{\fix{\d}{}}{\fix{\eps}{}})+1), \ 2\r \}$, hence with high probability \LRMC\ is possible in the dark-grey region.  Previous analyses showed that \LRMC\ is possible from uniform random sampling in the striped region \cite{candes-recht}, but the rest remained unclear until now.}
\label{regimesFig}
\end{figure}

The purpose of our first experiment is to support that $\L=\Ord(\max\{ \r , \log\d\})$ random samples per column are truly sufficient for \LRMC, as opposed to $\Ord(\r \log \d)$.  To this end, we will study the behavior of the \IHTSVD\ algorithm as a function of the ambient dimension $\d$ and the rank $\r$ (see the beginning of \implicationsSec\ for a discussion of this algorithmic choice). 

To obtain low-rank matrices, we first generated a $\d \times \r$ random matrix $\Ustar$ with $\mathscr{N}(0,1)$ i.i.d. entries to use as basis of $\sstar$.  We then generated an $\r \times (\r+1)(\d-\r)$ random matrix $\Tstar$, also with $\mathscr{N}(0,1)$ i.i.d. entries, to use as coefficient vectors, to construct $\X=\Ustar \Tstar$.  Matrices generated this way are known to have low coherence.

Next, for different values of the rank $\r$, we tested whether a matrix could be completed as a function of its ambient dimension $\d$ and the number of uniform random samples per column $\L$. For example, the results of this experiment for $\r=7$ can be seen in \discriminantFig.

\begin{figure}
\centering
\includegraphics[width=7cm]{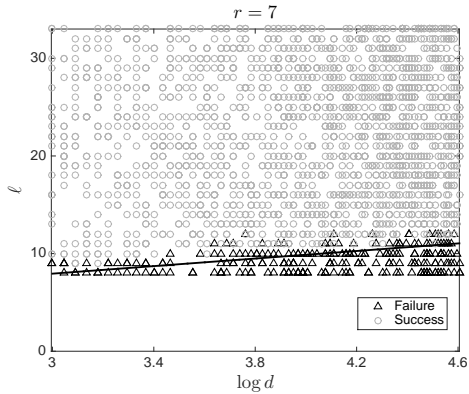}
\caption{Results of the \IHTSVD\ algorithm as a function of the ambient dimension $\d$ and the number of uniform random samples per column $\L$, for rank $\r=7$.  In each of the $2,000$ trials we declared a success if the normalized completion error was below $10^{-12}$ (using normalized Frobenius norm).  The black line represents the linear discriminant between success and failure trials.}
\label{discriminantFig}
\end{figure}

We then computed the linear discriminant between successful and failure trials for each value of $\r$.  If $\L=\Ord(\r\log\d)$ samples were necessary, we would expect the slope between these lines to grow proportionally to $\r$.   However, the results, depicted in \linesFig, show that the slope of these lines remain fairly constant, and the offset grows with $\r$, supporting that $\L=\Ord(\max\{\r,\log\d\})$ samples are sufficient.

\begin{figure}
\centering
\includegraphics[width=5.5cm]{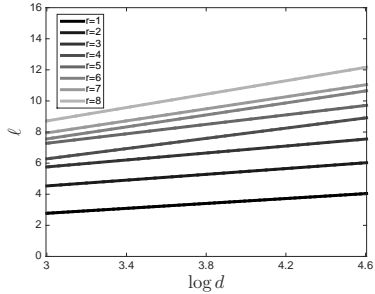}
\caption{Linear discriminants for different values of the rank $\r$, between successful (above line) and unsuccessful (below line) completions for the experiment in \discriminantFig.  That is, for a given $\r$, any pair $(\log\d,\L)$ above the linear discriminant typically succeeds at completion, and below the linear discriminant typically fails.   \probabilityThm\ shows that $\L=\Ord(\max\{\r,\log\d\})$ uniform random observations per column are sufficient for completion.  The slope of these lines remain fairly constant, and the offset grows with $\r$, supporting this result.}
\label{linesFig}
\end{figure}

\subsection*{Computational Complexity}
Our results show that completion is theoretically possible with as little as with $\L \geq \Ord(\max\{\r,\log \d\})$ uniform random samples per column, or even with as little as $\L=\r+1$ (provided they are located in the right places).  Nevertheless, this may involve solving the system of polynomial equations $\F=\bs{0}$ (see \proofSec), which is computationally impractical.  It is thus currently unknown whether there exist practical completion algorithms for these uncharted sampling regimes.

We now present a series of experiments that suggest three things: first, that even in cases where \LRMC\ is theoretically possible, missingness seems to come at a price: the more missing data the more computationally expensive completion seems to be.  This further suggests that there is a minimal sampling regime where, though theoretically possible, \LRMC\ might be computationally prohibitive in practice.  Second, that even though theoretically, whether \ae\ matrix can be completed does not depend on its coherence, in practice, extremely coherent matrices may be computationally more expensive to complete.  Similarly, this suggests that there is a maximal coherence regime where, though theoretically possible, \LRMC\ might be computationally prohibitive in practice.  And third, there seems to be an additional uncharted sampling regime with $\L<\Ord(\muu\r\log\d)$ samples per column where completion is computationally feasible.

To summarize, we have the following sampling regimes, where \LRMC\ is:
\begin{center}
	\begin{tikzpicture}
		\node [label] (a) at (-3.9,0) {$$};
		\node [label] (z) at (4,0) {$$};
		\node [title] (ell) at (4,0) {$\L$};
		
		\node [title] (b) at (-2.6,0) {$\bs{|}$}; \node [title] (b) at (-2.6,-.4) {$\r+1$};
		\node [title] (c) at (0,0) {$\bs{|}$}; \node [title] (c) at (0,-.4) {$?$}; 
		\node [title] (d) at (2.6,0) {$\bs{|}$}; \node [title] (d) at (2.6,-.4) {$\Ord(\muu\r\log\d)$}; 
		
		\node [label] (imp) at (-3.3,.5) {impossible};
		\node [label] (known) at (3.3,.5) {well-studied};
		
		\node [label] (uncharted) at (-1.3,.75) {possible, but apparently};
		\node [label] (uncharted) at (-1.3,.5) {computationally};
		\node [label] (uncharted) at (-1.3,.25) {prohibitive};
		
		\node [label] (uncharted) at (1.3,.75) {possible, and apparently};
		\node [label] (uncharted) at (1.3,.5) {computationally};
		\node [label] (uncharted) at (1.3,.25) {feasible};
				
		\draw [font=\scriptstyle]
				(a) edge (z);
	\end{tikzpicture}
\end{center}


We first study the computational cost of missing data.  To this end we computed the minimum number of iterations required to complete a matrix, as a function of the number of uniform random samples per column $\L$.  The results are summarized in \numItersFig.  Unsurprisingly, the more missing data, the more iterations are required to complete the matrix.

\begin{figure}
\centering
\includegraphics[width=5cm]{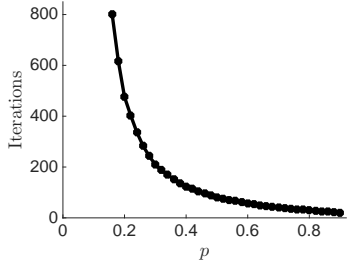}
\caption{Average number of iterations (over $500$ trials) required by \IHTSVD\ to complete a matrix with low coherence ($\muu<3$) with an accuracy of $10^{-12}$ (using normalized Frobenius norm), as a function of $\p:=\L/\d$, the proportion of uniform random samples per column, with ambient dimension $\d=500$ and rank $\r=10$.}
\label{numItersFig}
\end{figure}

In addition, we constructed samplings $\O$ with only $\L = \r + 1$ samples per column selected uniformly at random, and kept only those samplings satisfying the conditions of \LRMCCor, to guarantee that $\XO$ were uniquely completable (we used \validationAlg\ to determine whether each sampling satisfied these conditions).  Unfortunately, even though $\XO$ was uniquely completable, the matrix was incorrectly completed in every single trial.  This suggests that completion in this regime, now known to be theoretically possible (through the solution of the polynomial system $\F=\bs{0}$; see \proofSec), might be computationally prohibitive in practice.


We thus tested how much missing data can practical algorithms handle while remaining computationally efficient.
To this end, we sampled $\L < \muu\r \log\d$ entries per column, drawn uniformly at random, and ran the \IHTSVD\ algorithm for at most \phantomsection\label{TTDef}$\TT = \d$ iterations (see the beginning of Section~\ref{implicationsSec} for a discussion of this algorithmic choice). 

To truly test this regime, we considered a setup where previous theory would require all entries to be observed to guarantee a correct completion with probability at least $1-\eps$.  There are plenty of such scenarios.  We arbitrarily selected $\d=500$ and $\r=10$, and $\eps=1/\d$.

Our simulations, summarized in \samplesFig, show that practical algorithms tend to work consistently well with $\L < \muu\r\log(\frac{\fix{\d}{}}{\fix{\eps}{}})$.  This suggests that there is a regime with $\L <\muu\r\log(\frac{\fix{\d}{}}{\fix{\eps}{}})$ samples per column where completion is computationally feasible, even in the presence of noise.

\begin{figure}
\centering
\includegraphics[width=5cm]{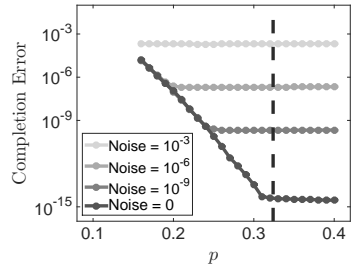}
\caption{Average completion error of \IHTSVD\ (over $500$ trials) for different levels of additive i.i.d.\ zero-mean Gaussian noise (noise variance as indicated in the legend), after at most $250$ iterations, as a function of $\p:=\L/\d$, the proportion of uniform random samples per column, with ambient dimension $\d=500$ and rank $\r=10$.    Previous guarantees would require all entries to be observed, and so in practice, existing theory would not allow one to confirm the correctness of a completion.  Our results do.  The dashed line represents $\p=\max\{12( \log(\frac{\fix{\d}{}}{\fix{\eps}{}})+1),2\r\}/\d$, with $\eps=\frac{1}{\fix{\d}{}}$, the sufficient condition of \probabilityThm, which implies that with probability at least $1-\eps$, for any $\p$ above this threshold, a rank-$\r$ completion is guaranteed to be correct, regardless of the completion method.}
\label{samplesFig}
\end{figure}


\subsection*{Dependence on Coherence Parameter}

In our next experiment, we study the practical role of coherence in \LRMC.  More precisely, we tested whether a matrix could be computationally efficiently completed as a function of its coherence parameter $\muu$, and the number of uniform random samples per column $\L$ (to generate matrices with a specific coherence parameter, we simply increased the magnitude of a few entries in $\Ustar$, until it had the desired coherence).  The results, summarized in \coherenceFig, suggest that for most of the coherence range, whether this algorithm can correctly complete the matrix mainly depends on the number of samples rather than on the coherence parameter.  For instance, see in \coherenceFig\ that given the number of samples, the success rate of this algorithm is about the same for most of the range of $\muu$.   Nonetheless, there are some cases with extremely large coherence ($\muu$ close to the maximum possible, $\frac{\fix{\d}{}}{\fix{\r}{}}$, corresponding to subspaces almost perfectly aligned with the canonical axes), where this algorithm tends to fail more often at reconstructing the matrix (these are cases where most of the information is concentrated in only a few entries, which brings computational and numerical accuracy problems).

\begin{figure}
\centering
\includegraphics[width=8cm]{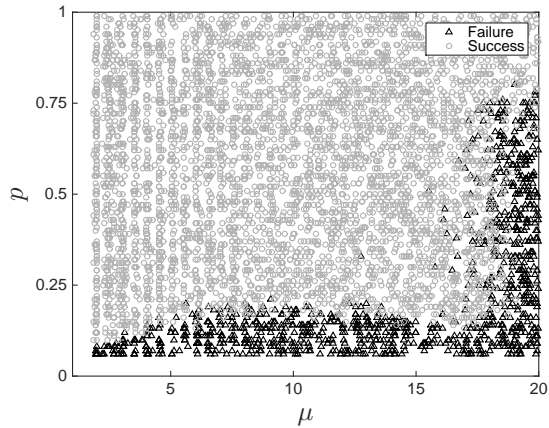}
\caption{Results of the \IHTSVD\ algorithm as a function of the coherence parameter $\muu \in [1,\frac{\fix{\d}{}}{\fix{\r}{}}]$ and the proportion of uniform random samples per column $\p:=\L/\d$, with ambient dimension $\d=100$ and rank $\r=5$.  Similar results were observed for other algorithms, including alternating minimization \cite{jain} and EM \cite{ssp14}.  In each of the $5,000$ trials we declared a success if the normalized completion error was below $10^{-12}$ (using normalized Frobenius norm).  Our theoretical results show that whether \ae\ matrix can be uniquely completed does not depend on its coherence.  This experiment suggests that in practice, this is also the case for most of the range of $\muu$.  For instance, given $\p$, the success rate of this algorithm is about the same for most of the range of $\muu$ (about $1 \leq \muu \leq 17$).  Nevertheless, the success rate quickly decays if the coherence is extremely high ($\muu$ close to the maximum possible, $\frac{\fix{\d}{}}{\fix{\r}{}}=20$).}
\label{coherenceFig}
\end{figure}

To further study the role of coherence in practice, we recorded the number of iterations that were required to complete each matrix (in the success cases of the previous experiment).  The results, summarized in \itersMuFig, suggest that while coherent matrices may be theoretically as completable as incoherent ones, in practice, the more coherent a matrix is, the more computationally expensive it may be to complete it.  Furthermore, the number of iterations seems to increase steadily for most of the coherence range, but after a transition point it suddenly seems to grow exponentially, suggesting the existence of a maximal coherence regime where, though theoretically possible, completion may be computationally impractical (similar to the minimal sampling regime from our previous experiment).

\begin{figure}
\centering
\includegraphics[width=5cm]{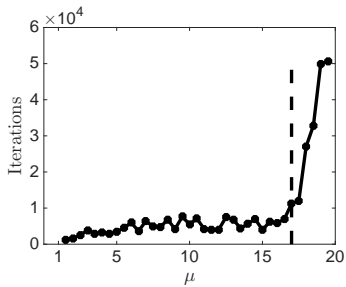}
\caption{Average number of iterations (of the success trials from \coherenceFig) required by \IHTSVD\ to complete a matrix with an accuracy of $10^{-12}$ (using normalized Frobenius norm), as a function of its coherence parameter $\muu$.  This suggests the existence of a maximal coherence regime (e.g., after the dashed line) where, though theoretically possible, completion may become computationally impractical.}
\label{itersMuFig}
\end{figure}

\subsection*{New Guarantees}
It is known that $\Ord(\muu \r \log\d)$ uniform random samples per column (with constants greater than $1$) are sufficient for completion \cite{candes-recht}.  There are non-pathological regimes (e.g., $\d=500$ and $\r=10$, or $\d=100$ and $\r=5$, and ideal coherence, as in our experiments) where these conditions end up requiring that all entries are observed.  Experiments show that the \IHTSVD\ algorithm can exactly complete such matrices when even fewer than half of the entries are observed, but prior theory gives no guarantees in these regimes, and so in practice, one would be unable to confirm the correctness of a completion.

Furthermore, typical conditions for \LRMC\ usually apply to matrices with bounded coherence, and require uniform random sampling with rates that depend on the coherence parameter $\muu$.  In many practical applications, sampling is hardly uniform (e.g., vision, where occlusion of objects produce missing data in very non-uniform random patterns), and $\muu$ is typically unknown, so the existing theory does not allow one to confirm the correctness of a completion.

Our results shed new light on these issues. \uniquenessThm\ states that regardless of coherence and the sampling model, if the observation pattern satisfies the conditions of the theorem, a rank-$\r$ completion, obtained by any method whatsoever, is guaranteed to be {\em the} correct completion.  In particular, \probabilityThm\ states that this will be the case with high probability under uniform random sampling models.

In some cases one can use \LRMCCor\ together with \validationAlg\ to verify efficiently and deterministically whether these conditions are satisfied.  Recall that \LRMCCor\ states that unique completability is possible if $\O$ contains $\r+1$ disjoint matrices $\{\Ot\}_{\fix{\ttau=1}}^{\fix{\r+1}}$, each of size $\d \times (\d-\r)$ satisfying \identifiabilityCond.  Given a matrix $\Ot$, \validationAlg\ allows to verify whether it satisfies \identifiabilityCond.  However, it provides no means to select the $\Ot$'s.

In general, one can construct samplings for which finding the right $\Ot$'s would require exponential time.  However, if the samples are well spread across the rows, then one may validate a completion deterministically by selecting the $\Ot$'s randomly.

To see this, suppose $\O$ has \phantomsection\label{NbreveDef}$(\r+1)\Nbreve$ columns, with $\Nbreve \geq \d-\r$ and exactly $\r+1$ observations per column (see \entriesRmk).  We can randomly partition $\O$ into $\r+1$ disjoint submatrices $\{\Obreve_\ttau\}_{\fix{\ttau=1}}^{\fix{\r+1}}$, each of size $\Nbreve$.  One can then use \validationAlg\ to verify whether each $\Obreve_\ttau$ contains an $\d \times (\d-\r)$ submatrix $\Ot$ satisfying \identifiabilityCond.  If this is the case, then we know deterministically that the completion is correct.

\identifiabilityCondFig\ shows that as $\Nbreve$ grows, the probability that each $\Obreve_\ttau$ contains an $\Ot$ satisfying \identifiabilityCond\ quickly approaches $1$.  For example, with $\Nbreve$ as small as $2(\d-\r)$, i.e., with only twice as many columns as strictly necessary, each $\Obreve_\ttau$ will contain an $\Ot$ satisfying \identifiabilityCond\ with probability larger than $.999$.  This suggests that if we find a low-rank completion of a matrix, we can expect that a random partition will certify it through \LRMCCor\ and \validationAlg.
    
\begin{figure}
\centering
\includegraphics[width=5cm]{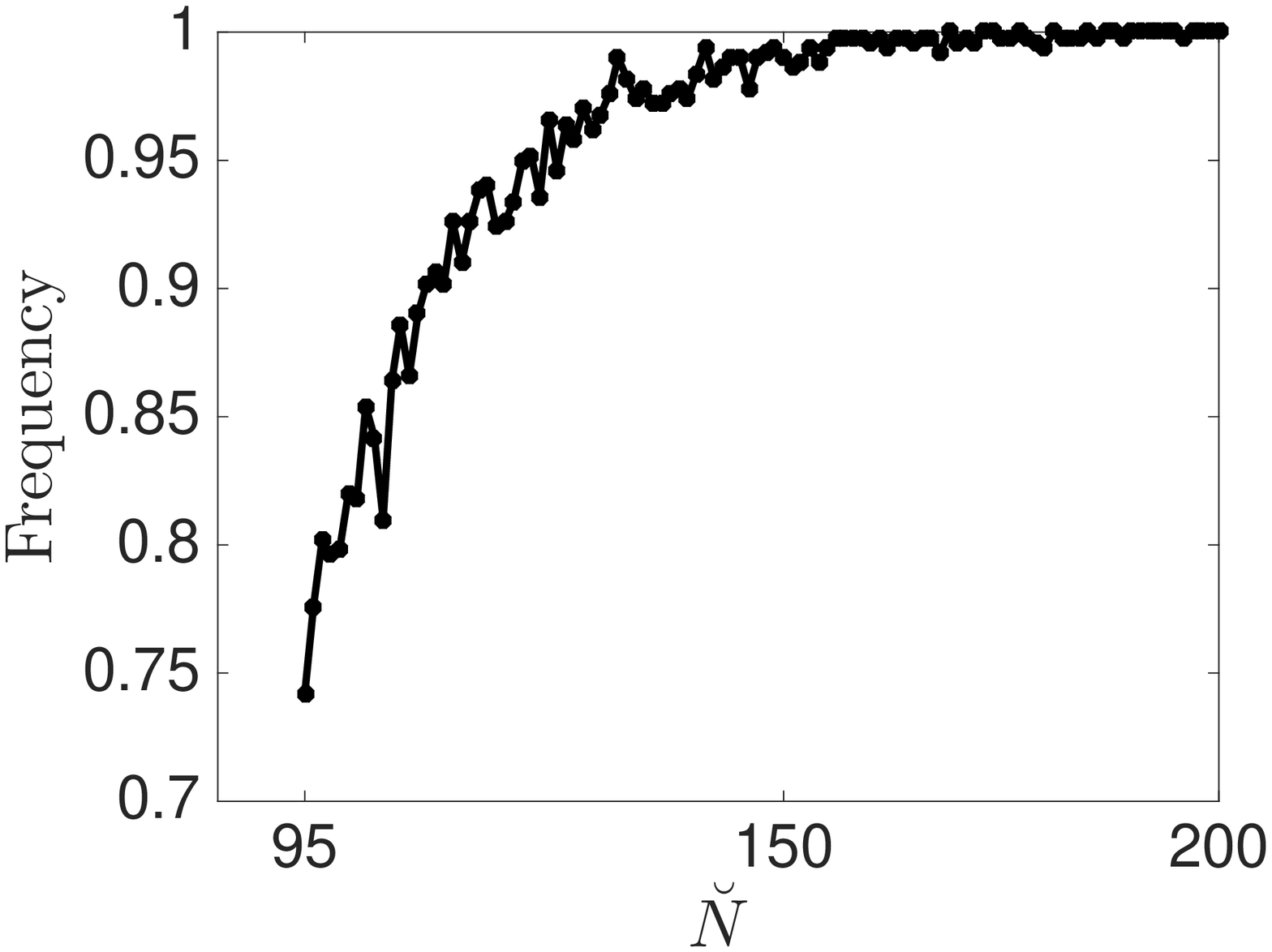}
\caption{We generated $\d \times \Nbreve$ matrices $\Obreve$ with only $\r+1$ samples per column, selected uniformly at random, with $\d=100$, $\r=5$.  This figure shows the proportion of times (over $500$ trials) that $\Obreve$ contains a $\d \times (\d-\r)$ matrix $\Ohat$ satisfying \identifiabilityCond, as a function of $\Nbreve$.  We used \validationAlg\ to determine whether this was the case.  Notice that as $\Nbreve$ grows, the probability that each $\Obreve$ contains an $\Ot$ satisfying \identifiabilityCond\ quickly approaches $1$.}
\label{identifiabilityCondFig}
\end{figure}

This way, our results can be used to certify the correctness of a completion, thus bringing guarantees applicable to any algorithm, under any sampling model, in lieu of coherence assumptions.

\subsection*{Adaptive Sampling}
If one could select which entries of $\X$ to observe, perhaps the easiest way to recover $\X$ is to sample $\r$ linearly independent columns to obtain a basis of the subspace, and then $\r$ rows to obtain the coefficients of each column in this basis. However, in many \LRMC\ applications, the entries one may observe can be limited.  Take for example recommender systems, where obtaining a complete column equates to asking a single user (column) to evaluate every item (row).  In these problems the number or rows can be very large, hence this can be an unreasonable thing to ask.  Moreover, the combinations of rows that one may sample could be restricted.  An other example arises in distributed settings, where at each location one may only sample certain subsets of all the information.

Our results tell us exactly which entries to look for.  Furthermore, it is fairly simple to construct sampling patterns that satisfy the conditions of Theorems \ref{LRMCThm} and \ref{uniquenessThm} and \LRMCCor\ that do not require to sample full columns or rows.  For instance, we can generate random samplings, use \validationAlg\ to verify whether they satisfy condition \identifiabilityCond\ (most of them will; see \identifiabilityCondFig), and keep them or discard them depending on this.

Deterministic constructions are also possible.  For instance, it is easy to verify that each of the blocks in \adaptiveEg\ satisfies \identifiabilityCond, which implies $\O$ satisfies the conditions of \LRMCCor, and can thus be uniquely completed.  This example corresponds to asking the $\i^{\rm th}$ user of each block to rate items $\i$ through $\i+\r$.  We conclude that if the entries one may choose to observe are limited (as is the case in many \LRMC\ applications), one can directly apply our results to adaptively design observation patterns that guarantee completability.

\section{Proof of \LRMCThm}
\label{proofSec}

For any subspace, matrix or vector that is compatible with a set of indices \phantomsection\label{oDef}$\o$, we will use the subscript \phantomsection\label{cdotoDef}$\o$ to denote its restriction to the coordinates/rows in $\o$.  For example, letting \phantomsection\label{oiDef}$\oi$ denote the indices of the nonzero rows of the $\i^{\rm th}$ column of $\O$, then \phantomsection\label{xoiDef}$\xoi \in \R^{\fix{\r+1}}$ and \phantomsection\label{sstaroiDef}$\sstaroi \subset \R^{\fix{\r+1}}$ denote the restrictions of the $\i^{\rm th}$ column in $\X$ and $\sstar$, to the indices in $\oi$.  We say that an $\r$-dimensional subspace $\s$ \phantomsection\label{fitsDef}{\em \fits} $\XO$ if $\xoi \in \s_{\fix{\oi}}$ $\forall \i$.

\subsection*{The Variety $\SS$}
Let us start by studying the variety of all $\r$-dimensional subspaces that \fit\ $\XO$.  First observe that in general, the restriction of an $\r$-dimensional subspace to $\L \leq \r$ coordinates is $\R^{\fix{\L}}$.  We formalize this in the following definition, which essentially states that a subspace is \nondegenerate\ if its restrictions to $\L \leq \r$ coordinates are $\R^{\fix{\L}}$.

\begin{myDefinition}[Degenerate subspace]
\label{degenerateDef}
We say $\s \in \Gr(\r,\R^{\fix{\d}})$ is {\em \degenerate} if and only if there exists a set $\o \subset \{1,\dots,\d\}$ with $|\o | \leq\r$, such that $\dim \s_{\fix{\o}}< | \o |$.
\end{myDefinition}

Let $\nuuG$ denote the uniform measure on $\Gr(\r,\R^{\fix{\d}})$.  A subspace is \degenerate\ if and only if an $\r \times \r$ submatrix of one of its bases is rank-deficient. This is equates to having a zero determinant.  Since the determinant is a polynomial in the entries of a matrix, this is a condition of $\nuuG$-measure zero.

Since $\nuuG$-almost every subspace is \nondegenerate, let us consider only the subspaces in \phantomsection\label{GrNDDef}$\GrND(\r,\R^{\fix{\d}}) \subset \Gr(\r,\R^{\fix{\d}})$, the set of all \nondegenerate\ $\r$-dimensional subspaces of $\R^{\fix{\d}}$.

Define \phantomsection\label{SSDef}$\SS(\XO)\subset \GrND(\r,\R^{\fix{\d}})$ such that every $\s \in \SS(\XO)$ \fits\ $\XO$, i.e.,
\begin{align*}
\SS(\XO) \ := \ \Big\{\s \in \GrND(\r,\R^{\fix{\d}}) \ : \ \{\xoi \in \s_{\fix{\oi}}\}_{\fix{\i=1}}^{\fix{\N}} \Big\}.
\end{align*}

Let \phantomsection\label{UDef}$\U \in \R^{\fix{\d \times \r}}$ be a basis of $\s \in \SS(\XO)$.  The condition $\xoi \in \s_{\fix{\oi}}$ is equivalent to saying that there exists a vector \phantomsection\label{tiDef}$\ti \in \R^{\fix{\r}}$ such that
\begin{align}
\label{xUthetaEq}
\xoi \ = \ \U_{\fix{\oi}} \ti.
\end{align}

We can see that if $\xoi$ has fewer than $\r$ observations, \eqref{xUthetaEq} will be an underdetermined system with infinitely many solutions, and hence $\xoi$ can be completed in infinitely many ways.

If $\xoi$ has exactly $\r$ observations, \eqref{xUthetaEq} becomes a system with $\r$ equations and $\r$ unknowns (the elements of $\ti$).  This will be the case for every $\s \in \GrND(\r,\R^{\fix{\d}})$.  Hence a column with exactly $\r$ observations can be uniquely completed once $\sstar$ is known, but it provides no information to identify $\sstar$.

On the other hand, if $\xoi$ has exactly $\r+1$ observations, then \eqref{xUthetaEq} becomes an overdetermined system with $\r+1$ equations and $\r$ unknowns.  This imposes one constraint on the elements of $\U_{\fix{\oi}}$, thus restricting the set of subspaces that \fit\ $\xoi$.

In general, each column with $\r+1$ observations will impose one constraint that may reduce one of the $\r(\d-\r)$ degrees of freedom in $\GrND(\r,\R^{\fix{\d}})$.  Therefore, one necessary condition for completion is that $\XO$ imposes at least $\r(\d-\r)$ constraints.

We will now study these constraints and characterize when exactly will they reduce all the $\r(\d-\r)$ degrees of freedom in $\GrND(\r,\R^{\fix{\d}})$, thus restricting $\SS(\XO)$ to a set with at most finitely many elements.

Let \phantomsection\label{deltaiDef}\phantomsection\label{nabliDef}$\{\deltai,\nabli\}$ be a partition of the $\r+1$ elements of $\oi$, such that $\deltai$ has exactly $\r$ elements, and $\nabli$ has only one element.  We can then expand \eqref{xUthetaEq} as
\begin{align*}
\begin{matrix}
\r \left. \begin{matrix} \\ \\ \\ \end{matrix} \right\{ \\
1 \left. \begin{matrix} \\ \end{matrix} \right\{
\end{matrix}
\left[ \begin{matrix}
\multirow{3}{*}{$\x_{\fix{\deltai}}$} \\ \\ \\ \hline {\rm x}_{\fix{\nabli}} \end{matrix} \right]
\ = \
\left[ \begin{array}{c}
\multirow{3}{*}{$\Scale[1.5]{\U_{\fix{\deltai}}}$} \\ \\ \\ \hline
\U_{\fix{\nabli}}
\end{array}\right] \ti.
\end{align*}

Since $\s$ is \nondegenerate, $\U_{\fix{\deltai}}$ is full-rank, so we may solve for $\ti$ using the top block to obtain $\ti = \U^{-1}_{\fix{\deltai}} \x_{\fix{\deltai}}$.  Plugging this on the last row, we have that \eqref{xUthetaEq} is equivalent to:
\begin{align}
\label{xuicEq}
{\rm x}_{\fix{\nabli}} \ = \ \U_{\fix{\nabli}} \U^{-1}_{\fix{\deltai}} \x_{\fix{\deltai}}.
\end{align}

On the other hand, $\xoi$ lies in $\sstaroi$ by assumption.  This implies that there exists a unique \phantomsection\label{tiStarDef}$\tiStar \in \R^{\fix{\r}}$ such that
\begin{align}
\label{tiStarEq}
\xoi \ = \ \Ustaroi \tiStar,
\end{align}
where $\Ustar$ is a basis of $\sstar$.  Substituting \eqref{tiStarEq} in \eqref{xuicEq} we obtain
\begin{align}
\label{nonpolyEq}
\UstarNabli \tiStar \ = \ \U_{\fix{\nabli}} \U_{\fix{\deltai}}^{-1} \UstarDeltai \tiStar.
\end{align}

Recall that $\U^{-1}_{\fix{\deltai}} = \U_{\fix{\deltai}}^{\fix{\adj}} / |\U_{\fix{\deltai}}|$, where \phantomsection\label{adjDef}$\U_{\fix{\deltai}}^{\fix{\adj}}$ and $|\U_{\fix{\deltai}}|$ denote the adjugate and the determinant of $\U_{\fix{\deltai}}$.  Therefore, we may rewrite \eqref{nonpolyEq} as the following polynomial equation:
\begin{align}
\label{polyEq}
\Big( |\U_{\fix{\deltai}}| \UstarNabli - \U_{\fix{\nabli}} \U^{\fix{\adj}}_{\fix{\deltai}} \UstarDeltai \Big) \tiStar \ = \ 0.
\end{align}

We conclude that a subspace $\s$ with basis $\U$ \fits\ $\XO$ if and only if $\U$ satisfies \eqref{polyEq} for every $\i=1,\dots,\N$.

Since every nontrivial subspace has infinitely many bases, even if there is only one $\r$-dimensional subspace in $\SS(\XO)$, the variety
\begin{align*}
\left\{ \ \U \in \R^{\fix{\d \times \r}} \ : \ \big( |\U_{\fix{\deltai}}| \UstarNabli - \U_{\fix{\nabli}} \U^{\fix{\adj}}_{\fix{\deltai}} \UstarDeltai \big) \tiStar = 0 \hspace{.5cm} \forall \ \i \ \right\}
\end{align*}
has infinitely many solutions.  Therefore, we will associate a unique $\U$ with each subspace as follows.  Observe that for every $\s \in \GrND(\r,\R^{\fix{\d}})$, we can write $\s=\spn\{\U\}$ for a unique $\U$ in the following column echelon form:
\begin{align}
\label{columnEchelonEq}
\U \ = \
\left[ \begin{array}{c}
\multirow{2}{*}{
\Scale[1.5]{\I}}\hspace{.1cm} \\ \\ \hline
\multirow{3}{*}{\Scale[1.5]{\V}} \\ \\ \\
\end{array}\right]
\begin{matrix}
\left. \begin{matrix} \\ \\ \end{matrix} \right\} \r \hspace{.7cm} \\
\left. \begin{matrix} \\ \\ \\ \end{matrix} \right\} \d-\r.
\end{matrix}
\end{align}

On the other hand, every $\V \in \R^{\fix{(\d-\r) \times \r}}$ defines a unique $\r$-dimensional subspace of $\R^{\fix{\d}}$, via $\spn\{\U\}$.  Moreover, $\spn\{\U\}$ will be \nondegenerate\ for almost every $\V$, with respect to \phantomsection\label{nuuVDef}$\nuuV$: the Lebesgue measure on $\R^{(\fix{\d-\r ) \times \r}}$.  Let \phantomsection\label{RNDDef}$\RND \subset \R^{(\fix{\d-\r) \times \r}}$ denote the set of all $(\d-\r) \times \r$ matrices $\V$ whose $\spn\{\U\}$ is \nondegenerate, or equivalently, whose $\r \times \r$ submatrices of $\U$ are full-rank.  Then we have a bijection between $\GrND(\r,\R^{\fix{\d}})$ and $\RND$ via $\sstar=\spn\{\U\}$.   It follows that a statement holds for  $(\nuuG \times \nuuT)$-almost every pair $\{\sstar,\Tstar\}$ if and only if it holds for  $(\nuuV \times \nuuT)$-almost every pair $\{\Vstar,\Tstar\}$. We will use these measures interchangeably.

\subsection*{The Set $\F$}
Continuing with our analysis, recall that a subspace $\s$ with basis $\U$ will \fit\ $\XO$ if and only if $\U$ satisfies \eqref{polyEq} for every $\i$.  With this in mind, define
$$\phantomsection\label{efiDef}
\efi(\V | \Vstar,\tiStar) \ := \ \Big( |\U_{\fix{\deltai}}| \UstarNabli - \U_{\fix{\nabli}} \U^{\fix{\adj}}_{\fix{\deltai}} \UstarDeltai \Big) \tiStar,
$$
with $\U$ and $\Ustar$ in the column echelon form in \eqref{columnEchelonEq}.  We will use $\efi$ as shorthand, with the understanding that $\efi$ is a polynomial in the elements of $\V$, and that the elements of $\Vstar$ and $\tiStar$ play the role of coefficients.

Furthermore, let
$$
\phantomsection\label{FDef}
\F(\V | \Vstar, \Tstar) \ := \ \left\{ \efi \right\}_{\fix{\i}=1}^{\fix{\N}},
$$
and use $\F(\V)$, or simply $\F$ as shorthand, with the understanding that $\F$ is a set of polynomials in the elements of $\V$, and that the elements of $\Vstar$ and $\Tstar$ play the role of coefficients.  We will also use $\F=\bs{0}$ as shorthand for $\{ \efi = 0\}_{\fix{\i}=1}^{\fix{\N}}$.

This way, we may rewrite:
\begin{align*}
\SS(\XO) \ = \ \left\{ \spn\left[ \begin{matrix}\I \\ \V \end{matrix} \right] \in \GrND(\r,\R^{\fix{\d}}) \ : \ \F(\V) =\bs{0} \right\}.
\end{align*}

In general, the affine variety
$$
\phantomsection\label{VVDef}
\VV(\F) \ := \ \left\{ \V \in \RND \ : \ \F(\V) =\bs{0} \right\}
$$
could contain an infinite number of elements.  We are interested in conditions that guarantee there is only one or (slightly less demanding) only a finite number.  The following lemma states that this will be the case if and only if $\r(\d-\r)$ polynomials in $\F$ are algebraically independent.

\begin{myLemma}
\label{dimensionLem}
Let \Aone\ hold.  For \ae\ $\X$, $\SS(\XO)$ contains at most finitely many subspaces if and only if $\r(\d-\r)$ polynomials in $\F$ are algebraically independent.
\end{myLemma}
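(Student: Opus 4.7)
The plan is to convert the question about $\SS(\XO)$ into a question about the dimension of the affine variety $\VV(\F) \subset \RND$ cut out by the polynomials $\F$, and then invoke the standard dimension theory of polynomial ideals. Since we already established the bijection $\V \mapsto \spn\{\U\}$ between $\RND$ and $\GrND(\r,\R^{\fix{\d}})$ via the column echelon form of $\U$, and since $\s \in \SS(\XO)$ iff its echelon representative satisfies $\F(\V)=\bs{0}$, the set $\SS(\XO)$ is in one-to-one correspondence with $\VV(\F)$. Thus $\SS(\XO)$ is finite if and only if $\dim \VV(\F) = 0$.

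Next I would appeal to the classical fact that for a finite collection of polynomials $\F$ in $\r(\d-\r)$ variables, the dimension of $\VV(\F)$ equals $\r(\d-\r)-k$, where $k$ is the maximal number of algebraically independent polynomials in $\F$ (equivalently, the generic rank of the Jacobian of $\V \mapsto (\efi(\V))_{\i=1}^{\N}$). Indeed, each algebraically independent polynomial cuts down the dimension of the ambient variety by one, while algebraically dependent ones impose no further constraint on dimension. Hence $\VV(\F)$ is zero-dimensional precisely when $\F$ contains $\r(\d-\r)$ algebraically independent polynomials, which gives the equivalence in the lemma once we handle the genericity in $\X$.

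The subtlety is the \emph{almost every} qualifier. The polynomials $\efi$ are polynomials in $\V$ whose coefficients are themselves polynomials in the entries of $\Vstar$ and $\tiStar$. For specific non-generic choices of $\{\Vstar,\Tstar\}$, the Jacobian of the map $\V \mapsto (\efi(\V))_{\i}$ could drop rank, or otherwise fail to reflect the "intrinsic" algebraic independence forced by the sampling pattern $\O$. I would argue that the set of $\{\Vstar,\Tstar\}$ on which this anomalous rank drop occurs is the zero set of a polynomial in the entries of $\Vstar$ and $\Tstar$ (a minor of the Jacobian), and hence has $(\nuuV \times \nuuT)$-measure zero unless the polynomial vanishes identically. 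Since $\nuuV \times \nuuT$ corresponds to $\nuuG \times \nuuT$ via our bijection, this yields the "for \ae\ $\X$" statement.

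The hard part is the last point: showing that, generically over $\{\Vstar,\Tstar\}$, the algebraic (in)dependence of the $\efi$'s is governed solely by the sampling pattern $\O$, with no incidental dependence coming from the particular values of the coefficients. Said differently, one must verify that whenever $\r(\d-\r)$ of the $\efi$ admit an algebraic relation for every choice of $\{\Vstar,\Tstar\}$, the corresponding relation is forced by the combinatorics of the $\{\oi\}$ — so that the obstruction to finite completability is truly pattern-based. This decoupling of "pattern-based" from "value-based" algebraic relations is the crux; once it is in place, the dimension-theoretic equivalence above translates cleanly into the claimed equivalence for $\SS(\XO)$.
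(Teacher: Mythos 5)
Your reduction of the lemma to the statement ``$\SS(\XO)$ finite $\Leftrightarrow \dim \VV(\F)=0$'' matches the paper, but the ``classical fact'' you then invoke --- that $\dim \VV(\F)$ equals $\r(\d-\r)-k$ with $k$ the maximal number of algebraically independent polynomials in $\F$, because ``each algebraically independent polynomial cuts down the dimension by one'' --- is false in general, and it is false precisely in the direction you need for the hard implication. Take $f_1=x$ and $f_2=xy$ in two variables: they are algebraically independent, yet their common zero set is the line $x=0$, of dimension $1$, not $0$. In general one only has the inequality $\dim \VV(\F) \geq \r(\d-\r) - k$, which gives the easy direction (finitely many completions $\Rightarrow$ zero-dimensional variety $\Rightarrow$ $\r(\d-\r)$ algebraically independent polynomials), but algebraic independence of $\r(\d-\r)$ polynomials does \emph{not} by itself force $\dim \VV(\F)=0$. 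This is exactly the point where the paper does extra work: it notes that $\dim\VV(\F)=0$ \emph{does} follow if the $\r(\d-\r)$ polynomials form a regular sequence, and then argues that, because being a regular sequence is an open condition on the coefficients, for $(\nuuV \times \nuuT)$-almost every $\{\Vstar,\Tstar\}$ algebraic independence and the regular sequence property coincide. Your proposal has no substitute for this regular-sequence (or any equivalent complete-intersection/genericity) step.

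Your genericity discussion does not close this gap, because it addresses a different issue: you worry about special values of $\{\Vstar,\Tstar\}$ at which the Jacobian rank drops, i.e.\ at which algebraic independence itself could fail, and about whether (in)dependence is governed by the pattern $\O$. The latter is the content of the \emph{next} lemma in the paper (\independenceLem), not of this one. What must be ruled out here is the scenario in which the polynomials are algebraically independent and nevertheless cut out a positive-dimensional variety (as in the $\{x,xy\}$ example); to exclude that for almost every $\X$ you need an argument of the paper's type, showing that generically in the coefficients independence upgrades to a regular sequence, or some other mechanism forcing the excess-dimension locus into a measure-zero set of $\{\Vstar,\Tstar\}$. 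As written, the proposal's central dimension formula is incorrect, so the ``independence $\Rightarrow$ finitely many subspaces'' direction is unproven.
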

\begin{proof}
By our previous discussion, for \ae\ $\X$ there are at most finitely many subspaces in $\SS(\XO)$ if and only if there are at most finitely many points in $\VV(\F)$.  We know from algebraic geometry that this will be the case if and only if $\dim \VV(\F)=0$ (see, e.g., Proposition 6 in Chapter 9, Section 4 of \cite{cox}).

Since $\VV(\F) \subset \RND$, we know that if $\dim \VV(\F)=0$, then $\F$ must contain $\r(\d-\r)$ algebraically independent polynomials (see, e.g., Exercise 16 in Chapter 9, Section 6 of \cite{cox}).

On the other hand, we know that $\dim \VV(\F)=0$ if $\r(\d-\r)$ polynomials in $\F$ are a regular sequence (see, e.g., Exercise 8 in Chapter 9, Section 4 of \cite{cox}).

Finally, since being a regular sequence is an open condition, it follows that for $(\nuuV \times \nuuT)$-almost every $\{\Vstar,\Tstar\}$, polynomials in $\F$ are algebraically independent if and only if they are a regular sequence (see, e.g., Remark 3.4 in \cite{aramova}).
\end{proof}

\begin{myRemark}
The next part of our analysis studies conditions to guarantee that the polynomials in $\F$ are algebraically independent.  Following up on \entriesRmk, any observation, in addition to the $\r+1$ per column that we assume, cannot increase the number of subspaces that agree with the observations.  In effect, each observed entry, in addition to the first $\r+1$ observations, places one additional polynomial constraint analogous to $\efi$.  However, the polynomials produced by the same column share the same coefficient $\tiStar$.  Intuitively, this means that the polynomials are no longer generic.  While these polynomials might or might not be algebraically dependent, in general it is difficult to determine which is the case.

For this reason we assume \Aone: that each column is observed on exactly $\r+1$ entries.  This way the $\r+1$ entries in each column produce only one polynomial constraint.  This guarantees that we only use one polynomial per column, so that all the coefficients of the polynomials in $\F$ are generic, and easier to study.  In general, if some columns are observed on more than $\r+1$ entries, all we need is that the observed entries contain a pattern with exactly $\r+1$ observations per column satisfying our sampling conditions.
\end{myRemark}

\subsection*{Algebraic Independence}
By the previous discussion, there are at most finitely many $\r$-dimensional subspaces that \fit\ $\XO$ if and only if there is a subset \phantomsection\label{FtildeDef}$\Ftilde$ of $\r(\d-\r)$ polynomials in $\F$ that is algebraically independent.

Whether this is the case depends on the supports of the polynomials in $\Ftilde$, i.e., on $\Otilde$: the subset of columns in $\O$ corresponding to such polynomials.  \independenceLem\ shows that the polynomials in $\Ftilde$ will be algebraically independent if and only if $\Otilde$ satisfies the conditions in \LRMCThm.

\begin{myLemma}
\label{independenceLem}
Let \Aone\ hold.  For \ae\ $\X$, the polynomials in $\Ftilde$ are algebraically dependent if and only if $\nOf(\Op) > \r(\mOf(\Op)-\r)$ for some matrix $\Op$ formed with a subset of the columns in $\Otilde$.
\end{myLemma}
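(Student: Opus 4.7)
The plan is to translate algebraic (in)dependence of $\Ftilde$ into a Hall-type condition on the supports of $\{\efi\}$ and then verify both implications, exploiting the freedom to choose pivot rows on the Grassmannian.

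\emph{Support analysis.} The top $\r$ rows of $\U$ are fixed to $\I$, contributing no variables to any $\efi$, while each of the bottom $\d-\r$ rows is a row of $\V$ contributing $\r$ variables. Thus $\efi$ depends only on entries of $\V$ in rows $\oi\cap\{\r+1,\dots,\d\}$. Moreover, the designation of which $\r$ rows of $\U$ carry the identity block is a choice of chart on $\Gr(\r,\R^\d)$, and by \hyperref[degenerateDef]{non-degeneracy} (which holds $\nuuV$-a.e.) any $\r$ linearly independent rows of a basis of $\sstar$ may serve as that block. Since the variety of subspaces fitting $\XO$ is intrinsic to $\SS(\XO)$, algebraic (in)dependence of the $\r(\d-\r)$ polynomials that cut it out is invariant under such re-parametrizations (a polynomial relation in one chart becomes a rational, and after clearing denominators a polynomial, relation in any other).

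\emph{Necessary direction} ($\Rightarrow$). Suppose $\nOf(\Op)>\r(\mOf(\Op)-\r)$ for some $\Op\subseteq\Otilde$. Since $\mOf(\Op)\geq\r+1$ by \Aone, we may relabel coordinates so that $\r$ of the nonzero rows of $\Op$ form the pivot block. In this new chart the $\nOf(\Op)$ polynomials coming from $\Op$ jointly depend on at most $\r(\mOf(\Op)-\r)$ entries of $\V$; having strictly more polynomials than variables, they are algebraically dependent. The subset $\Ftilde$ inherits this dependence, and by the chart-invariance above it is algebraically dependent in the original chart as well.

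\emph{Sufficient direction} ($\Leftarrow$). Assume $\nOf(\Op)\leq\r(\mOf(\Op)-\r)$ for every $\Op$. In the original chart the variables appearing in the polynomials of $\Op$ are the entries of $\V$ in the $\mOf(\Op)-m_t$ rows $(\bigcup_i\oi)\cap\{\r+1,\dots,\d\}$, where $m_t\leq\r$ denotes the number of nonzero rows of $\Op$ lying in $\{1,\dots,\r\}$. This count, $\r(\mOf(\Op)-m_t)\geq\r(\mOf(\Op)-\r)\geq\nOf(\Op)$, is precisely Hall's marriage condition for the bipartite graph joining each $\efi$ to the variables in its support. Hall's theorem therefore furnishes a bijection between the $\r(\d-\r)$ polynomials in $\Ftilde$ and the $\r(\d-\r)$ entries of $\V$ that matches each polynomial to a variable in its support. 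Rearranging $\partial\Ftilde/\partial\V$ according to this matching produces a square matrix whose determinant is a polynomial in $(\Vstar,\Tstar)$; verifying that this polynomial is not identically zero shows it vanishes only on a $(\nuuV\times\nuuT)$-measure-zero set, so the $\efi$ are algebraically independent almost everywhere.

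\emph{Main obstacle.} The subtlest step is the final nonvanishing claim: the matched diagonal entries of the rearranged Jacobian are themselves nonzero polynomials in $(\Vstar,\Tstar)$, but off-diagonal contributions could a priori cancel the diagonal monomial and render the determinant identically zero. This is handled either by exhibiting an explicit specialization of $(\Vstar,\Tstar)$ at which the determinant is manifestly nonzero, or by inducting on $|\Ftilde|$ and peeling off one matched (polynomial, variable) pair at a time, using the genericity of the remaining coefficients to propagate nonvanishing through the reduction.
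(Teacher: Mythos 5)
Your easy direction is sound and is essentially the paper's argument: after permuting the chart so the identity block sits on $\r$ nonzero rows of $\Op$, the $\nOf(\Op)$ polynomials involve at most $\r(\mOf(\Op)-\r)$ variables, hence are algebraically dependent; this is exactly \LLem, and your chart-invariance remark plays the role of the paper's observation that the variety $\SS(\XO)$ is unchanged by the permutation $\Pii$. The hard direction, however, is where your proposal has a genuine gap. You reduce it, via the Jacobian criterion and a Hall matching (which your counting argument does correctly establish), to the claim that a structured $\r(\d-\r)\times\r(\d-\r)$ minor of $\partial\Ftilde/\partial\V$ is not identically zero for $(\nuuV\times\nuuT)$-generic $(\Vstar,\Tstar)$ — and then you only gesture at two possible ways to prove this ("explicit specialization" or "induction peeling off matched pairs") without carrying either out. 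That nonvanishing claim is not a routine genericity fact: the entries of the Jacobian are not independent generic quantities but derivatives of the specific polynomials $\efi$, all of which share the parameter $\Vstar$ (and each column contributes a single $\tiStar$). The paper's own Example \ref{counterEg} (the acknowledgements) shows precisely that support combinatorics plus "the diagonal entries are nonzero polynomials" is not enough when coefficients are shared — two polynomials with fine-looking supports can be algebraically dependent. So the standard "a matrix with generic entries and a perfect matching in its support has nonzero determinant" template does not apply verbatim, and the cancellation issue you flag as the "main obstacle" is in fact the entire content of the lemma.

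By contrast, the paper never touches the Jacobian. It proves the hard direction contrapositively through \basisLem: any \mad\ subset $\Fp\subset\Ftilde$ must satisfy $\nOf(\Op)=\r(\mOf(\Op)-\r)+1$, which is established by showing (via \oneFixedVarLem\ and \allFixedVarsLem, using resultants $\Res(\gi,\gj)$ of univariate specializations and the genericity of $\Vstar,\tiStar$) that a minimal dependence forces all the variables it touches to be uniquely/finitely determined, so the remaining polynomials must equal the number of variables they involve. Your Jacobian-plus-matching strategy is a legitimately different route (it is in the spirit of algebraic-matroid arguments, where in characteristic zero generic Jacobian rank equals algebraic independence), and if the generic full-rank of that minor were proved it would yield the lemma; but as written, the key step is asserted rather than proved, so the proposal does not yet constitute a proof.
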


To show this statement we will use Lemmas \ref{LLem} and \ref{basisLem} below.

Let $\Op$ be a subset of the columns in $\Otilde$, and let \phantomsection\label{FpDef}$\Fp$ be the subset of the $\nOf(\Op)$ polynomials in $\Ftilde$ corresponding to such columns.  Notice that $\Fp$ only involves the variables in $\U$ corresponding to the $\mOf(\Op)$ nonzero rows of $\Op$.

Let \phantomsection\label{liDef}$\li(\Op)$ be the largest number of algebraically independent polynomials in $\Fp$.

\begin{myLemma}
\label{LLem}
For \ae\ $\X$, $\li(\Op) \leq \r(\mOf(\Op)-\r)$.
\end{myLemma}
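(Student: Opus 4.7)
The plan is to bound $\li(\Op)$ by counting the variables that effectively appear in $\Fp$, after exploiting the non-degeneracy of $\sstar$ to choose a convenient column echelon basis. Once the number of effective variables is shown to be at most $\r(\mOf(\Op) - \r)$, the bound follows from the standard transcendence degree fact: any set of algebraically independent polynomials in $n$ variables over $\R$ has size at most $n$.

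First I would observe that each polynomial $\efi$ involves $\U$ only through its rows indexed by $\oi$. The entries of $\U$ on the $\r$ pivot rows are constants (entries of the identity block in $\U$), so $\efi$ depends on $\V$ only through those rows of $\oi$ lying outside the pivot set. Summing over the columns $i$ of $\Op$, the polynomials in $\Fp$ involve only the variables of $\V$ supported on the nonzero rows of $\Op$ that are not pivot rows.

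Next I would invoke non-degeneracy of $\sstar$, which holds for $\nuuG$-almost every subspace: every $\r$-subset of rows of any basis of $\sstar$ is linearly independent. This freedom allows me to choose the pivot rows as any $\r$ of the $\mOf(\Op)$ nonzero rows of $\Op$, which is valid since $\mOf(\Op) \geq \r+1$. With this choice, exactly $\mOf(\Op) - \r$ nonzero rows of $\Op$ remain outside the pivot set, each contributing $\r$ variables to $\Fp$. Hence $\Fp$ lies in a polynomial ring in $\r(\mOf(\Op) - \r)$ variables, and the transcendence degree bound yields $\li(\Op) \leq \r(\mOf(\Op) - \r)$.

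The main subtle point to address is that $\li(\Op)$ computed from $\Fp$ in the reparameterized basis must agree with $\li(\Op)$ in the original natural basis. I expect to justify this by noting that switching pivot rows induces a birational isomorphism between open dense subsets of $\RND$, so the corresponding rational function fields are isomorphic and the transcendence degree of any finite set of polynomials is preserved. Equivalently and more intrinsically, each $\efi$ depends on $\sstar$ only through its projection onto the $\mOf(\Op)$ coordinates indexed by the nonzero rows of $\Op$, so $\Fp$ factors through the Grassmannian $\Gr(\r, \R^{\mOf(\Op)})$ of dimension $\r(\mOf(\Op) - \r)$, directly yielding the bound.
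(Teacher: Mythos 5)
Your proposal is correct and follows essentially the same route as the paper: re-choose the pivot (identity) block of the column echelon form to sit in $\r$ of the nonzero rows of $\Op$, observe that the polynomials in $\Fp$ then involve only the $\r(\mOf(\Op)-\r)$ variables of $\V$ in the remaining nonzero rows, and bound $\li(\Op)$ by this variable count. Your justification that the count is invariant under the change of pivot rows (birational reparameterization, or factoring through the Grassmannian of the restricted coordinates) is just a more explicit rendering of the paper's argument that the variety $\SS(\XO)$, and hence the number of algebraically independent polynomials, is the same for every permutation $\Pii$.
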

\begin{proof}
Observe that the column echelon form in \eqref{columnEchelonEq} was chosen arbitrarily.  As a matter of fact, for every permutation of rows \phantomsection\label{PiiDef}$\Pii$ and every $\s \in \GrND(\r,\R^{\fix{\d}})$, we may write $\s=\spn\{\U\}$, for a unique $\U$ in the following permuted column echelon form:
\begin{align*}
\U \ = \ \Pii \left[ \begin{matrix}\I \\ \V \end{matrix} \right].
\end{align*}
For example, we could take $\Pii$ to swap the top and bottom blocks in \eqref{columnEchelonEq}, and take $\U$ in the following form:
\begin{align*}
\U \ = \ \Pii \left[ \begin{matrix}\I \\ \V \end{matrix} \right] \ = \ \left[ \begin{matrix}\V \\ \I \end{matrix} \right].
\end{align*}
Observe that in general, $\U$, $\V$ and $\F$ will be different for each choice of $\Pii$.  Nevertheless, the condition $\xoi \in \s_{\fix{\oi}}$ is invariant to the choice of basis of $\s$.  This implies that while different choices of $\Pii$ produce different $\F$'s, the variety
\begin{align*}
\SS(\XO) \ = \ \left\{ \spn \ \Pii \left[ \begin{matrix}\I \\ \V \end{matrix} \right] \in \GrND(\r,\R^{\fix{\d}}) \ : \ \F(\V) =\bs{0} \right\}
\end{align*}
is the same for every $\Pii$.

This implies that the number of algebraically independent polynomials in $\Fp$ is invariant to the choice of $\Pii$.  Therefore, showing that \LLem\ holds for one particular $\Pii$ suffices to show that it holds for every $\Pii$.

With this in mind, take $\Pii$ such that $\U$ is written with the identity block in the position of $\r$ nonzero rows of $\Op$.

Since the polynomials in $\Fp$ only involve the elements of the $\mOf(\Op)$ rows of $\U$ corresponding to the nonzero rows of $\Op$, and $\U$ has the identity block in the position of $\r$ nonzero rows of $\Op$, it follows that the polynomials in $\Fp$ only involve the $\r (\mOf(\Op)-\r)$ variables in the $\mOf(\Op)-\r$ corresponding rows of $\V$.  Furthermore, $\Fp=\bs{0}$ has at least one solution.  This implies $\li(\Op) \leq \r(\mOf(\Op)-\r)$, as desired.
\end{proof}

We say $\Fp$ is \phantomsection\label{madDef}{\em \mad} if the polynomials in $\Fp$ are algebraically dependent, but every proper subset of the polynomials in $\Fp$ is algebraically independent.

\begin{myLemma}
\label{basisLem}
For \ae\ $\X$, if $\Fp$ is \mad, then $\nOf(\Op) = \r(\mOf(\Op)-\r)+1$.
\end{myLemma}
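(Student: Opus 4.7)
My plan is to prove Lemma \ref{basisLem} by establishing the two matching inequalities $\nOf(\Op) \leq \r(\mOf(\Op)-\r)+1$ and $\nOf(\Op) \geq \r(\mOf(\Op)-\r)+1$.

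For the upper bound, I remove any single polynomial from $\Fp$; by the MAD hypothesis the remaining $\nOf(\Op)-1$ polynomials form an algebraically independent set, supported on the column pattern $\Odp$ obtained by deleting the corresponding column. Since deleting a column cannot increase the number of nonzero rows, $\mOf(\Odp) \leq \mOf(\Op)$. Lemma \ref{LLem} then yields $\nOf(\Op)-1 \leq \r(\mOf(\Odp)-\r) \leq \r(\mOf(\Op)-\r)$.

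For the lower bound, I would argue by contradiction: assume $\nOf(\Op) \leq \r(\mOf(\Op)-\r)$ and deduce that $\Fp$ is algebraically independent, contradicting MAD. The key tool is the Jacobian criterion: $\Fp$ is algebraically independent over the coefficient field if and only if the Jacobian matrix $J$ of the $\efi$'s with respect to the entries of $\V$ has rank $\nOf(\Op)$ at a generic point. To exhibit a full-rank submatrix of $J$, I would form the bipartite graph pairing each polynomial with the variables appearing in its support (those $V_{jk}$ whose row $j$ lies in $\oi$ and is not chosen as an identity row under $\Pii$) and invoke Hall's marriage theorem. Hall's condition is verified in two cases: for any proper $\Fdp \subsetneq \Fp$, MAD gives algebraic independence, so Lemma \ref{LLem} yields $|\Fdp| \leq \r(\mOf(\Odp)-\r)$, which is at most the size of the joint variable support of $\Fdp$; for $\Fdp = \Fp$ itself, the same inequality is precisely the contradiction hypothesis. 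Hall's theorem then furnishes a system of distinct representatives, i.e.\ a permutation $\sigma$ from polynomials to variables such that the diagonal term $\prod_\iota \partial f_\iota / \partial V_{\sigma(\iota)}$ of the associated $\nOf(\Op) \times \nOf(\Op)$ minor is a nonzero polynomial.

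The main obstacle will be verifying that the resulting minor itself, not merely its matched diagonal term, is nonzero as a polynomial in $\V$ for almost every $\{\Vstar,\Tstar\}$: the minor is an alternating sum over permutations, so the matched diagonal could in principle cancel against off-diagonal contributions. I would handle this by viewing the minor jointly as a polynomial in $\V$, $\Vstar$, and $\Tstar$ and exhibiting one concrete evaluation on which it does not vanish; Zariski openness in $\{\Vstar,\Tstar\}$ then transfers nonvanishing to almost every coefficient choice. A convenient specialization is to choose $\Vstar$ and each $\tiStar$ so that most off-diagonal partials vanish (for instance by driving each $\tiStar$ toward a standard basis vector and choosing $\Vstar$ to decouple the relevant rows), isolating the matched diagonal as the surviving term and supplying the required nonvanishing evaluation.
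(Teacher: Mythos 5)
The first half of your plan is sound: removing one polynomial from a \mad\ set leaves an algebraically independent set supported on $\Odp$, and \LLem\ then gives $\nOf(\Op)-1 \leq \r(\mOf(\Odp)-\r) \leq \r(\mOf(\Op)-\r)$, i.e.\ the upper bound. The lower bound, however, carries the entire content of the lemma, and there your argument has a genuine gap --- the one you flag yourself. Hall's theorem only produces a system of distinct representatives, hence a single nonzero product of partial derivatives inside the chosen $\nOf(\Op)\times\nOf(\Op)$ Jacobian minor; it says nothing about whether the minor, an alternating sum over all matchings, is a nonzero polynomial for $(\nuuV\times\nuuT)$-almost every $\{\Vstar,\Tstar\}$. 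The specialization you propose (pushing each $\tiStar$ to a standard basis vector and choosing $\Vstar$ to ``decouple'' rows) is not carried out and is not obviously workable: each $\efi$ involves every variable in its nonzero rows through the determinant and adjugate of the $\r\times\r$ block indexed by $\deltai$, so such choices do not visibly kill the off-diagonal partials, while degenerate choices of $\Vstar$ risk leaving $\RND$ or annihilating the matched diagonal as well. More fundamentally, support combinatorics alone cannot decide the question: the example in the acknowledgements exhibits polynomials whose supports satisfy the counting conditions yet are algebraically dependent because they share the same coefficients $\tiStar$. Any successful nonvanishing argument must therefore exploit the independence of the $\tiStar$ across columns, and your sketch never engages that point. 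Note also that, once completed, your argument would amount to a direct proof of the hard direction of \independenceLem\ (hence of \LRMCThm), which indicates how much is being deferred to the ``main obstacle.''

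For comparison, the paper proves the lemma by a different mechanism that uses exactly this cross-column genericity: after choosing the permutation $\Pii$ so that the identity block sits in the rows indexed by $\deltai$, \oneFixedVarLem\ shows that $\Fp=\bs{0}$ uniquely determines the row $\vv_i$ (a resultant argument in which the independence of $\tiStar$ and $\tjStar$ is essential), and \allFixedVarsLem\ propagates this conclusion until all variables appearing in $\Fp$ are uniquely determined; counting polynomials against variables in the terminal set then forces $\nOf(\Op)=\r(\mOf(\Op)-\r)+1$. Until you actually establish the generic nonvanishing of your Hall-selected minor, the lower-bound direction, and hence the claimed equality, remains unproven.
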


In order to prove \basisLem\ we will need the next two lemmas.

\begin{myLemma}
\label{oneFixedVarLem}

Take $\Pii$ such that $\U_{\fix{\deltai}} = \U^\star_{\fix{\deltai}} =\I$.  For \ae\ $\X$, if \phantomsection\label{FdpDef}$\Fp=\{\Fdp,\efi\}$ is \mad, then all solutions to $\Fp=\bs{0}$ satisfy $\U_{\fix{\nabli}}=\bs{{\rm U}}^\star_{\fix{\nabli}}$.
\end{myLemma}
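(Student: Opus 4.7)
The plan is to exploit the special linear structure of $\efi$ under the chosen basis together with the genericity of the coefficient vector $\tiStar$.

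First, I would simplify $\efi$ using the basis choice $\U_{\fix{\deltai}} = \Ustar_{\fix{\deltai}} = \I$. Under this specialization, $|\U_{\fix{\deltai}}| = 1$ and $\U^{\fix{\adj}}_{\fix{\deltai}} = \I$, so
$$\efi \ = \ (\UstarNabli - \U_{\fix{\nabli}})\,\tiStar,$$
which is an affine linear form in the $\r$ variables in $\U_{\fix{\nabli}}$, with slope vector $-\tiStar$. The key structural point is that the polynomials in $\Fdp$ involve the coefficient vectors $\tjStar$ for $j \neq i$ but not $\tiStar$, so $\VV(\Fdp)$ is a variety independent of $\tiStar$, while $\efi$ varies linearly with $\tiStar$.

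Next, I would use the MAD hypothesis to produce a nontrivial polynomial relation. Since $\Fp$ is minimally algebraically dependent, there is a nonzero polynomial $Q(y_0, y_1, \ldots, y_m)$ with $Q(\efi, \efj_1, \ldots, \efj_m) \equiv 0$ in $\R[\V]$ and $Q$ depending nontrivially on each argument. Writing $Q = \sum_k Q_k(y_1, \ldots, y_m)\,y_0^k$ and restricting to $\VV(\Fdp)$, the identity collapses to $\sum_k Q_k(0,\ldots,0)\,\efi^k \equiv 0$ on $\VV(\Fdp)$. I would argue, using the algebraic independence of $\Fdp$ granted by MAD, that $Q$ may be chosen so that some $Q_k(0,\ldots,0) \neq 0$, i.e., that the univariate polynomial in $\efi$ obtained on $\VV(\Fdp)$ is nontrivial. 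This forces $\efi$ to take only finitely many values on $\VV(\Fdp)$; by continuity it is constant on each irreducible component, and equals $0$ on the component through $\Vstar$ (and on any other component where its constant value happens to be $0$).

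Finally, to derive $\U_{\fix{\nabli}} = \UstarNabli$, I would exploit the genericity of $\tiStar$. On any irreducible component $C \subset \VV(\Fdp)$ that contributes to $\VV(\Fp)$, we have $(\UstarNabli - \U_{\fix{\nabli}}(\V))\,\tiStar = 0$ for all $\V \in C$. Since $C$ and the map $\V \mapsto \U_{\fix{\nabli}}(\V)$ are independent of $\tiStar$, the image $\{\UstarNabli - \U_{\fix{\nabli}}(\V): \V \in C\}$ is a fixed subset of $\R^{\fix{\r}}$. For this image to lie in the hyperplane $\tiStar^\perp$ for an \ae\ generic $\tiStar$, the image must be $\{0\}$; hence $\U_{\fix{\nabli}}(\V) = \UstarNabli$ on $C$, which is the claim.

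The main obstacle I anticipate is justifying, in the second paragraph, that $Q$ can be chosen with some $Q_k(0,\ldots,0) \neq 0$. If every algebraic relation for $\Fp$ were to lie in the ideal $(y_1, \ldots, y_m)$, the restriction to $\VV(\Fdp)$ would be trivial and the reduction to a univariate polynomial in $\efi$ would collapse. I expect that escaping this degenerate case requires an inductive argument that factors out the $(y_1,\ldots,y_m)$-part of $Q$ in the UFD $\R[\V]$ and produces a lower-order relation, which either contradicts MAD by giving a dependence on a proper subset of $\Fp$ or, by iteration, delivers the desired nontrivial constant term.
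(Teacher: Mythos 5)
Your reduction of $\efi$ to the affine-linear form $(\UstarNabli-\U_{\fix{\nabli}})\tiStar$ and your final step (a fixed image set cannot lie in $\tiStar^{\perp}$ for generic $\tiStar$ unless it is $\{0\}$) are fine, and that last step is essentially the same genericity of $\tiStar$ that the paper exploits. The genuine gap is exactly the step you flagged, and it cannot be repaired the way you propose: the existence of an annihilating relation $Q$ with some $Q_k(0,\dots,0)\neq 0$ does not follow from the \mad\ hypothesis alone. Concretely, take the three polynomials $x$, $xy$, $y$ in two variables, with $y_0\leftrightarrow x$ playing the role of $\efi$ and $\{xy,y\}$ the role of $\Fdp$: this family is minimally algebraically dependent (every pair is independent, and the relation ideal is generated by $y_0y_2-y_1$), yet every relation lies in the ideal $(y_1,y_2)$, so no choice of $Q$ has a nonzero constant part; correspondingly $x$ is non-constant on $\VV(\{xy,y\})=\{y=0\}$ and takes every value there. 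So your proposed escape --- factoring out the $(y_1,\dots,y_m)$-part in the UFD and iterating --- cannot succeed in general, because in such configurations the desired relation simply does not exist. Whether this degenerate configuration is excluded for the specific polynomials at hand is precisely the content of the lemma, and it must be ruled out using the independence of $\tiStar$ from the coefficients of $\Fdp$; your argument never brings that independence to bear at this step.

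This is also where your route diverges from the paper's proof, which avoids the ``relation with nonzero constant term'' issue altogether: it picks a polynomial $\efj\in\Fdp$ involving the same row $\vv_i$ of $\V$ (such a polynomial must exist, else $\Fp$ could not be \mad), specializes both $\efi$ and $\efj$ at a hypothetical solution of $\Fp=\bs{0}$ with $\U_{\fix{\nabli}}\neq\bs{{\rm U}}^\star_{\fix{\nabli}}$ to obtain two distinct univariate polynomials, and then argues that their resultant is a nonzero polynomial in $(\Vstar,\tiStar,\tjStar)$, hence nonzero for \ae\ $\X$, so the two polynomials cannot share a root --- a contradiction. Note also a secondary weakness of your architecture even granting constancy: your hyperplane argument only treats components $C\subset\VV(\Fdp)$ on which $\efi$ vanishes identically; if $\efi$ were non-constant on some component, the solutions of $\Fp=\bs{0}$ lying in $C\cap\{\efi=0\}$ would be left completely unaddressed. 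So the proof stands or falls with the flagged step, and closing it requires injecting the genericity of $\tiStar$ (e.g., via a resultant argument as in the paper) rather than a purely algebraic manipulation of the relation $Q$.
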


The intuition behind this lemma is as follows: suppose for contrapositive that there are infinitely many solutions $\U_{\fix{\oi}}$ to $\Fdp=\bs{0}$ with $\U_{\fix{\deltai}}=\I$.  Each of these solutions defines a different subspace.  Since $\{\Fdp,\efi\}$ is \mad, \ae\ solution to $\Fdp$ must \fit\ $\x_{\fix{\oi}}$.  This will only happen if $\x_{\fix{\oi}}$ lies in the intersection of infinitely many $\r$-dimensional subspaces, which is at most $(\r-1)$-dimensional.  But since $\x_{\fix{\oi}}$ is drawn from $\sstar$ (an $\r$-dimensional subspace), we know that almost surely $\x_{\fix{\oi}}$ will not lie in such $(\r-1)$-dimensional subspace.

\begin{proof}
Suppose that $\Fp=\{\Fdp,\efi\}$ is \mad, and let \phantomsection\label{vvDef}$\vv_{i}$ denote the row of $\V$ corresponding to $\U_{\fix{\nabli}}$, such that $\efi$ simplifies into
\begin{align*}
\efi(\vv_{i}, \U_{\fix{\deltai}} | \Vstar, \tiStar) \ &= \ \Big( |\U_{\fix{\deltai}}| \vviStar - \vv_{i} \U_{\fix{\deltai}}^{\fix{\adj}} \U^\star_{\fix{\deltai}} \Big) \tiStar \\
&= \ \left( \vviStar - \vv_{i} \right) \tiStar.
\end{align*}
Since $\efi$ involves $\vv_{i}$, $\Fdp$ must contain at least one polynomial in $\vv_{i}$ (otherwise $\Fp$ cannot be \mad).  This means that $\Fdp$ contains at least one polynomial \phantomsection\label{efjDef}$\efj$ involving $\vv_{i}$:
$$
\phantomsection\label{tjStarDef}
\efj(\vv_{i}, \U_{\fix{\deltaj}} | \Vstar, \tjStar) \ = \ \Big( |\U_{\fix{\deltaj}}| \vviStar - \vv_{i} \U_{\fix{\deltaj}}^{\fix{\adj}} \bs{{\rm U}}^\star_{\fix{\deltaj}} \Big) \tjStar.
$$
For \ae $\X$, $\tjStar$ is independent of $\tiStar$, so $(\nuuV \times \nuuT)$-almost surely, $\efi \neq \efj$.

We want to show that if $\Fp$ is \mad, then $\vv_{i}=\vviStar$ is the only solution to $\Fp=\bs{0}$.  So define \phantomsection\label{vjNumDef}$\vv_{i} =: [\vjNum{i1} \ \wwj ]$, and assume for contradiction that there exists a solution to $\Fdp=\bs{0}$ with \phantomsection\label{wjDef}$\wwj = \wj \neq \wwjStar$ and \phantomsection\label{GammajDef}$\U_{\fix{\deltaj}}=\Gammaj$, that is also a solution to $\Fp=\bs{0}$.

Next consider the univariate polynomials in \phantomsection\label{giDef}\phantomsection\label{gjDef}$\vjNum{i1}$ evaluated at this solution:
\begin{align*}
\gi(\vjNum{i1} | \Vstar, \tiStar) \ &:= \ \efi ( \vjNum{i1},\wwj, \U_{\fix{\deltai}} | \Vstar, \tiStar ) \Big|_{\fix{\wwj}=\fix{\wj}, \fix{\U}_{\fix{\deltai}}=\fix{\I}}, \\
\gj(\vjNum{i1} | \Vstar, \tjStar) \ &:= \ \efj ( \vjNum{i1},\wwj, \U_{\fix{\deltaj}} | \Vstar, \tjStar ) \Big|_{\fix{\wwj}=\fix{\wj}, \fix{\U}_{\fix{\deltaj}}=\fix{\Gammaj}},
\end{align*}
and observe that since $\{\wj, \Gammaj \}$ are a solution to $\Fp$, then $\gi$ and $\gj$ must have a common root.

We know from elimination theory that two distinct polynomials $\gi,\gj$ have a common root if and only if their resultant $\Res(\gi,\gj)$ is zero (see, for example, Proposition 8 in Chapter 3, Section 5 of \cite{cox}).

But $\Res(\gi,\gj)$ is a polynomial in the coefficients of $\gi$ and $\gj$.  In other words, \phantomsection\label{hhDef}$\Res(\gi,\gj) = \hh(\Vstar,\tiStar,\tjStar)$ for some nonzero polynomial $\hh$ in $\Vstar$, $\tiStar$ and $\tjStar$.  Therefore, $\hh \neq 0$ for $(\nuuV \times \nuuT)$-almost every $\{\Vstar,\Tstar\}$ (since the variety defined by $\hh=0$ has measure zero). Equivalently, $\hh \neq 0$ for \ae\ $\X$.  Since $\Res(\gi,\gj) \neq 0$, it follows that $\gi$ and $\gj$ do not have a common root $\vjNum{i1}$, which is the desired contradiction.

This will be true for either almost every $\wj$ in an infinite collection, or for every $\wj$ in a finite collection.  In the first case, we would conclude that $\Fp=\bs{0}$ has infinitely fewer solutions than $\Fdp=\bs{0}$, in contradiction to the \mad\ assumption.  In the second case, we conclude that $\wwjStar$ is the only solution to $\Fp=\bs{0}$.

Since $\vjNum{i1}$ was an arbitrary entry of $\U_{\fix{\oi}}$, we conclude that for \ae\ $\X$, if $\Fp$ is \mad, then $\U_{\fix{\nabli}} = \bs{{\rm U}}^\star_{\fix{\nabli}}$ is the only solution to $\Fp=\bs{0}$, as desired.
\end{proof}

Define \phantomsection\label{VnumDef}$\{\Vnum{t}, \Vcnum{t}\}$ as the partition of the variables involved in the polynomials in \phantomsection\label{FpNumDef}$\FpNum{t} \subset \Fp$, such that all the variables in $\Vnum{t}$ are uniquely determined by $\Fp=\bs{0}$.

\begin{myLemma}
\label{allFixedVarsLem}
Suppose $\Vnum{t} \neq \emptyset$ and that every $\efi \in \FpNum{t}$ is a polynomial in at least one of the variables in $\Vnum{t}$.  Then for \ae\ $\X$, all the variables involved in $\FpNum{t}$ are uniquely determined by $\Fp=\bs{0}$.
\end{myLemma}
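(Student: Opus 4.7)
The plan is to prove the contrapositive through an iterative resultant-based elimination in the spirit of \oneFixedVarLem. Suppose for contradiction that $\Vcnum{t} \neq \emptyset$, and pick any variable $w \in \Vcnum{t}$; by definition, $w$ takes at least two distinct values across solutions of $\Fp = \bs{0}$. Since $w$ appears in $\FpNum{t}$, choose some $\efi \in \FpNum{t}$ involving $w$. By the second hypothesis, $\efi$ also involves some $v \in \Vnum{t}$, which is uniquely determined by $\Fp = \bs{0}$.

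First, I would argue that because $v$ is uniquely determined while $w$ is not, and both appear in $\efi$, the constraint $\efi = 0$ alone cannot pin $v$: if it did, letting $w$ range over its multiple admissible values would force $v$ to vary correspondingly, contradicting $v \in \Vnum{t}$. Hence there must exist another polynomial $\efj \in \Fp$ involving $v$ that participates in its unique determination, analogous to the role played by $\efj$ in the proof of \oneFixedVarLem.

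Next, following the resultant argument of \oneFixedVarLem, I would eliminate $v$ from $\efi$ and $\efj$ via $\Res(\efi, \efj)$ taken with respect to $v$, producing a polynomial $\hh$ in the remaining variables whose coefficients are polynomials in $\Vstar, \Tstar$. For \ae\ $\X$, $\hh \not\equiv 0$, and $\hh = 0$ must hold at every common $v$-root, yielding a new nontrivial algebraic constraint that further restricts $w$. Iterating this elimination over every $v \in \Vnum{t}$ appearing in $\FpNum{t}$, I would progressively replace the polynomials of $\FpNum{t}$ with reduced polynomials involving only $\Vcnum{t}$-variables. Combining these reduced constraints with the remainder of $\Fp$, the resulting system forces $w$ to equal the value specified by $\Fp = \bs{0}$, contradicting $w \in \Vcnum{t}$.

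The main obstacle I anticipate is ensuring that the iterated resultant elimination produces enough algebraically independent constraints on $\Vcnum{t}$ to genuinely pin it down, rather than collapsing into a trivially satisfied or already-implied constraint. In principle, resultant steps can be degenerate in special cases, but the hypothesis that every $\efi \in \FpNum{t}$ genuinely involves a $\Vnum{t}$-variable, combined with the genericity of $\Vstar$ and $\Tstar$ underlying the \ae\ clause, should confine such degeneracies to a measure-zero subset of $\X$. The technical core of the proof is this careful bookkeeping: verifying that the chain of resultants yields coefficients which are nonzero polynomials in $\Vstar, \Tstar$, so that for \ae\ $\X$ the reduced system has a unique solution in the $\Vcnum{t}$-variables.
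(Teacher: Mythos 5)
There is a genuine gap, and it sits exactly at the crux. Your final step asserts that after eliminating the $\Vnum{t}$-variables by resultants, ``the resulting system forces $w$ to equal the value specified by $\Fp=\bs{0}$'' --- but that is precisely the statement to be proven, and nothing in the proposal establishes it. Taking $\Res(\efi,\efj)$ with respect to $v$ only produces a \emph{necessary} constraint on the remaining variables; it gives no reason that the collection of such eliminants has a unique solution in $w$, and your own ``main obstacle'' paragraph concedes this and defers it to genericity. Genericity of $\Vstar,\Tstar$ alone cannot close this, for two reasons. First, for an abstract polynomial system the claimed implication is simply false: if $\efi(v,w)=v^2+w^2-1$ and the rest of the system pins $v$ to a single value, then $w$ still admits two values, and no amount of perturbing coefficients removes this --- what rules it out in the paper is the specific structure of $\efi$, not genericity in the abstract. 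Second, the quantities you specialize at (the putative alternative solution values) are themselves algebraic functions of $(\Vstar,\Tstar)$, not free parameters, so the claim ``for \ae\ $\X$ the resultant is a nonzero polynomial'' at each elimination step needs a structural argument; it cannot be read off from the coefficients being generic.

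The paper's proof supplies exactly the missing mechanism, and it is different from your elimination scheme. Rather than eliminating $v$ between two \emph{different} polynomials, it takes two hypothetical solutions of $\Fp=\bs{0}$ that disagree on some variable of $\efi$ other than $v$, and specializes the \emph{same} polynomial $\efi$ at each, obtaining two univariate polynomials $\g,\gp$ in $v$. The decisive step is that the representation of $\efi$ depends on the choice of the pivot $\nabli$ while its zero set does not, so one may re-choose $\nabli$ at the position of a variable where the two solutions differ; since $\efi$ is linear in $\U_{\nabli}$, this forces $\g \neq \gp$ for $(\nuuV\times\nuuT)$-almost every $\{\Vstar,\Tstar\}$. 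Because $v$ is uniquely determined by $\Fp=\bs{0}$, $\g$ and $\gp$ must share that root, so $\Res(\g,\gp)=0$; but $\Res(\g,\gp)$ is a nonzero polynomial in $(\Vstar,\tiStar)$ and hence nonzero for \ae\ $\X$, a contradiction, proving that all variables of $\efi$ (in particular your $w$) are uniquely determined. Your proposal contains no analogue of the pivot re-choice/linearity argument, which is what makes the ``distinct polynomials, common root, nonzero resultant'' contradiction available; without it, the iterated-elimination route would still have to prove that the eliminated system is rich enough to pin $\Vcnum{t}$, which is as hard as the lemma itself.
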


\begin{proof}
Let \phantomsection\label{vcDef}$\vc$ be one of the variables in $\Vcnum{t}$ and let $\efi$ be a polynomial in $\FpNum{t}$ involving $\vc$.  By assumption on $\FpNum{t}$, $\efi$ also involves at least one of the variables in $\Vnum{t}$, say \phantomsection\label{vDef}$\v$.

Let \phantomsection\label{wDef}$\w$ denote the set of all variables involved in $\efi$ except $\v$.  Observe that $\vc \in \w$.  This way, $\efi$ is shorthand for $\efi(\v, \w | \Vstar, \tiStar)$.

We will show that for \ae\ $\X$, all the variables in $\w$ are also uniquely determined by $\Fp=\bs{0}$.

Suppose there exists a solution to $\Fp=\bs{0}$ with \phantomsection\label{voneDef}$\w = \vone$, and define the univariate polynomial
$$
\phantomsection\label{gDef}
\g(\v | \Vstar, \tiStar) \ := \ \efi ( \v,\w | \Vstar, \tiStar ) \Big|_{\fix{\w}=\fix{\vone}} \ .
$$
Now assume for contradiction that there exists another solution to $\Fp=\bs{0}$ with $\w \neq \vone$.  Let \phantomsection\label{vtwoDef}$\w = \vtwo$ be an other solution to $\Fp=\bs{0}$, and define
$$
\phantomsection\label{gpDef}
\gp(\v | \Vstar, \tiStar) \ := \ \efi ( \v,\w | \Vstar, \tiStar ) \Big|_{\fix{\w}=\fix{\vtwo}} \ .
$$

We will first show that $\g \neq \gp$.  To see this, recall the definition of $\efi$, and observe that it depends on the choice of $\nabli$.  Nevertheless, it is easy to see that $\efi=0$ describes the same variety regardless of the choice of $\nabli$.  Intuitively, this means that even though $\efi$ might look different for each choice of $\nabli$, it really {\em is} the same.

Therefore, we may select $\nabli$ to be the element of $\oi$ corresponding to the position of a variable of $\w$ that takes different values in $\vone$ and $\vtwo$.  This way, a variable with multiple solutions is located in the location of $\U_{\fix{\nabli}}$.  Since $\efi$ is linear in $\U_{\fix{\nabli}}$, it follows that $\g \neq \gp$ for $(\nuuV \times \nuuT)$-almost every $\{\Vstar,\Tstar\}$.

Now observe that since $\v$ is uniquely determined by $\Fp=\bs{0}$, $\g$ and $\gp$ have a common root, which immediately implies that there are at most finitely many distinct $\gp$.  Otherwise, $\v$ would be a common root to infinitely many distinct polynomials, which $(\nuuV \times \nuuT)$-almost surely cannot be the case.

We know from elimination theory that two distinct polynomials $\g,\gp$ have a common root if and only if their resultant $\Res(\g,\gp)$ is zero (see, for example, Proposition 8 in Chapter 3, Section 5 of \cite{cox}).

But $\Res(\g,\gp)$ is a polynomial in the coefficients of $\g$ and $\gp$.  In other words, \phantomsection\label{hDef}$\Res(\g,\gp) = \h(\Vstar, \tiStar)$ for some nonzero polynomial $\h$ in $\Vstar$ and $\tiStar$.  Therefore, $\h \neq 0$ for $(\nuuV \times \nuuT)$-almost every $\{\Vstar,\Tstar\}$ (since the variety defined by $\h=0$ has measure zero).    Equivalently, $\h \neq 0$ for \ae\ $\X$.

Since $\Res(\g,\gp) \neq 0$, it follows that $\g$ and $\gp$ do not have a common root $\v$, which is the desired contradiction.  This is true for all of the finitely many $\gp$.  This shows that  for \ae\ $\X$, all the variables in $\w$ (including $\vc$) are uniquely determined by $\Fp=\bs{0}$.

Since $\vc$ was an arbitrary element in $\Vcnum{t}$, we conclude that all the variables in $\Vcnum{t}$ are uniquely determined by $\Fp=\bs{0}$.
\end{proof}

With this, we are now ready to present the proofs of \basisLem, \independenceLem\ and \LRMCThm.

\begin{proof}(\basisLem)
By the same arguments as in \LLem, whether $\Fp$ is \mad\ is invariant to any permutation $\Pii$ of the rows of the column echelon form in \eqref{columnEchelonEq}.  Therefore, showing that \basisLem\ holds for one particular choice of $\Pii$ suffices to show it holds for every $\Pii$.

With this in mind, suppose $\Fp=\{\Fdp,\efi\}$ is \mad.  Take $\Pii$ such that $\U$ and $\Ustar$ are written in the column echelon form in \eqref{columnEchelonEq} with the identity block in the rows indexed by $\deltai$, and let $\vv_{i}$ denote the row of $\V$ corresponding to $\U_{\fix{\nabli}}$, such that
\begin{align*}
\U_{\fix{\oi}} \ = \ \left[ \begin{array}{c}
\\ \vspace{.25cm} \hspace{.2cm} \Scale[1.5]{\I} \hspace{.2cm}  \\ \hline
\vv_{i}
\end{array}\right]
\begin{matrix}
\left. \begin{matrix} \\ \vspace{.2cm} \\ \end{matrix} \right\} \r \\
\left. \begin{matrix} \\ \end{matrix} \right\} 1.
\end{matrix}
\end{align*}

\phantomsection\label{eighthDef}We know by \oneFixedVarLem\ that $\vv_{i}$ is uniquely determined by $\Fp=\bs{0}$.  We will now iteratively use \allFixedVarsLem\ to show that all the variables in $\Fp$ (which are the same as the variables in $\Fdp$) are also uniquely determined by $\Fp=\bs{0}$.  This will imply that all the variables in $\Fdp$ are finitely determined by $\Fdp=\bs{0}$, and that $\Fdp$ contains the same number of polynomials, $\nOf(\Odp)$, as variables, $\r(\mOf(\Odp)-\r)$, which is the desired conclusion.

First observe that since $\vv_{i}$ is finitely determined by $\Fdp=\bs{0}$, $\Fdp$ must contain at least $\r$ polynomials in $\vv_{i}$.  Denote these polynomials by $\FpNum{1} \subset \Fdp$.

We will proceed inductively, indexed by \phantomsection\label{tDef}$\t\geq 1$.
First, set $\t=1$ and define $\Vnum{1}=\{\vv_{i}\}$.  We showed above that the variables in $\Vnum{1}$ are uniquely determined by $\Fp=\bs{0}$.
Suppose that $\FpNum{1}$ involves some variables other than those in $\Vnum{1}$. Note that every polynomial in $\FpNum{1}$ involves at least one of the variables in $\Vnum{1}$. Let $\Vnum{2}$ be the set of all variables involved in $\FpNum{1}$. By \allFixedVarsLem, all the variables in $\Vnum{2}$ are uniquely determined by $\Fp=\bs{0}$.

We will now proceed inductively.  For any $\t \geq 2$, let $\Vnum{t}$ be a subset of \phantomsection\label{NnumDef}$\Nnum{t}$ variables in $\V$.  Assume that all the variables in $\Vnum{t}$ are uniquely determined by $\Fp=\bs{0}$.  Since $\dim \VV(\Fdp)=\dim \VV(\Fp)$, it follows that all the variables in $\Vnum{t}$ are finitely determined by $\Fdp=\bs{0}$.   It follows that $\Fdp$ must contain at least $\Nnum{t}$ algebraically independent polynomials, each involving at least one of the variables in $\Vnum{t}$.  Let $\FpNum{t}$ be this set of polynomials.  Suppose $\FpNum{t}$ involves some variables other than $\Vnum{t}$.  Define $\Vnum{t+1}$ to be the set of all variables involved in $\FpNum{t}$. By \allFixedVarsLem, all the variables in $\Vnum{t+1}$ are uniquely determined by $\Fp=\bs{0}$.

Since this is true for every $\t$, and there are finitely many variables, this process must terminate at some finite step \phantomsection\label{finalTDef}$\finalT$, at which point $\FpNum{T}$ is a set of $\Nnum{T}$ algebraically independent polynomials in $\Nnum{T}$ variables.

This means that all the variables in $\FpNum{T}$ are finitely determined by $\FpNum{T}=\bs{0}$, and since $\efi$ only involves a subset of the variables in $\FpNum{T}$, it follows that the polynomials in $\{\FpNum{T},\efi\} \subset \Fp$ are algebraically dependent.  Furthermore, since $\Fp$ is \mad\ by assumption, we have that $\FpNum{T}=\Fdp$.

Finally, observe that $\Fdp$ contains $\nOf(\Odp)$ polynomials in $\r(\mOf(\Odp)-\r)$ variables.  Since $\Fdp=\FpNum{T}$, and $\FpNum{T}$ has $\Nnum{T}$ polynomials in $\Nnum{T}$ variables, it follows that $\nOf(\Odp) = \r(\mOf(\Odp)-\r)$, as desired.
\end{proof}

\pagebreak
\begin{proof}(\independenceLem)
\begin{itemize}
\item[($\Rightarrow$)]
Suppose $\Fp$ is \mad.  By \basisLem, $\nOf(\Op)=\r(\mOf(\Op)-\r)+1 > \r(\mOf(\Op)-\r)$, and we have the first implication.

\item[($\Leftarrow$)]
Suppose there exists an $\Op$ with $\nOf(\Op)>\r(\mOf(\Op)-\r)$.  By \LLem, $\nOf(\Op)>\li(\Op)$, which implies the polynomials in $\Fp$, and hence $\Ftilde$, are algebraically dependent.
\end{itemize}
\end{proof}

\begin{proof}(\LRMCThm)
\begin{itemize}
\item[($\Rightarrow$)]
Suppose for contrapositive that for every $\Otilde$ formed with $\r(\d-\r)$ columns of $\O$, there exists an $\Op$ formed with a subset of its columns such that $\mOf(\Op) < \nOf(\Op)/\r+\r$.  \independenceLem\ implies that the polynomials in $\Fp$, and hence $\Ftilde$, are algebraically dependent.  It follows by \dimensionLem\ that there are infinitely many subspaces in $\SS(\XO)$.
\item[($\Leftarrow$)]
Suppose that for some $\Otilde$ formed with $\r(\d-\r)$ columns of $\O$, every $\Op$ formed with a subset of the columns in $\Otilde$ satisfies $\mOf(\Op) \geq \nOf(\Op)/\r+\r$, including $\Otilde$. By \independenceLem, the $\r(\d-\r)$ polynomials in $\Ftilde$ are algebraically independent.  It follows by \dimensionLem\ that there are at most finitely many subspaces in $\SS(\XO)$, hence at most finitely many rank-$\r$ completions of $\XO$.
\end{itemize}
\end{proof}

\section{Proof of \uniquenessThm}
\label{uniquenessSec}
In this section we give the proof of \uniquenessThm.  We will use \phantomsection\label{XtildeDef}$\Xtilde_{\fix{\Otilde}}$ and \phantomsection\label{XhatDef}$\Xhat_{\fix{\Ohat}}$ to denote the $\d \times \r(\d-\r)$ and $\d \times (\d-\r)$ submatrices of $\XO$ corresponding to $\Otilde$ and $\Ohat$.  In addition, let \phantomsection\label{oihatDef}$\oihat$ and \phantomsection\label{xhatiDef}$\xhat_{\fix{\oihat}}$ denote the $\i^{\rm th}$ columns of $\Ohat$ and $\Xhat_{\fix{\Ohat}}$.

In order to prove \uniquenessThm, we will require Theorem 1 in \cite{identifiability}, which we state here as the following lemma, with some minor adaptations to our context.

\begin{myLemma}
\label{identifiabilityLem}
Suppose $\Ohat$ is a $\d \times (\d-\r)$ matrix with binary entries for which \identifiabilityCond\ holds and let $\s \in \Gr(\r,\R^{\fix{\d}})$.
Then for $\nuuG$-almost every $\sstar$, $\{\s_{\fix{\oihat}} = S^\star_{\fix{\oihat}}\}_{\fix{\i=1}}^{\fix{\d-\r}}$ if and only if $\s = \sstar$.
\end{myLemma}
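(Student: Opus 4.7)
The ``$\Leftarrow$'' direction is trivial. For ``$\Rightarrow$,'' my plan is to encode each subspace-equality hypothesis as a single normal-direction vector, collect these into a $\d \times (\d-\r)$ matrix $\A$, and invoke \identifiabilityCond\ to force $\dim \ker \A{}^{\fix{\T}} = \r$, which will pin down $\s = \sstar$.

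Working $\nuuG$-almost everywhere, I may assume $\sstar$ and any candidate $\s$ are both \nondegenerate\ (Definition \ref{degenerateDef}). Each column $\oihat$ of $\Ohat$ has $\r+1$ nonzero entries, so $\sstar_{\fix{\oihat}}$ is an $\r$-dimensional subspace of $\R^{\fix{\r+1}}$ whose orthogonal complement in $\R^{\fix{\r+1}}$ is spanned by a unique (up to scaling) nonzero vector $\aoj$. Padding with zeros outside $\oihat$ produces $\aj \in \R^{\fix{\d}}$ with $\aj \in \sstar^\perp$. The hypothesis $\s_{\fix{\oihat}} = \sstar_{\fix{\oihat}}$ forces the one-dimensional orthogonal complement of $\s_{\fix{\oihat}}$ in $\R^{\fix{\r+1}}$ to coincide with that of $\sstar_{\fix{\oihat}}$, so the same $\aj$ also lies in $\s^\perp$. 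Assembling $\{\aj\}_{\fix{\i=1}}^{\fix{\d-\r}}$ as the columns of a $\d \times (\d-\r)$ matrix $\A$ (analogous to the construction in \validationAlg, but with $\Ustar$ in place of the random $\U$), both $\sstar$ and $\s$ are contained in $\ker \A{}^{\fix{\T}}$.

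It therefore suffices to prove that $\dim \ker \A{}^{\fix{\T}} = \r$ for $\nuuG$-almost every $\sstar$, since then $\s = \ker \A{}^{\fix{\T}} = \sstar$. The combinatorial content of \identifiabilityCond\ is that for every subset of $\nOf$ columns of $\Ohat$, the number of nonzero rows exceeds the number of columns by at least $\r$; this is a Hall-type slackness condition which, by Hall's theorem applied after discarding any $\r$ rows, yields a matching of size $\d-\r$ in the bipartite graph from columns of $\Ohat$ to their nonzero rows. Restricting $\A$ to the $\d-\r$ rows covered by such a matching produces a square submatrix whose permutation expansion of the determinant contains the ``diagonal monomial'' indexed by the matching.

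The main obstacle is to show that this determinant, viewed as a polynomial in the entries of the column-echelon parameter $\Vstar$ of $\sstar$, does not vanish identically. Because the entries of $\A$ are not free parameters but are polynomial functions of $\Ustar$ obtained from left-null-space computations, the diagonal monomial could in principle be cancelled by other terms. I would rule this out by exhibiting an explicit witness $\sstar$---for example, constructed so that $\Ustar$ is block-structured relative to the matching and the $\aj$'s have essentially disjoint supports---at which the relevant submatrix is block-diagonal and its determinant is manifestly nonzero. The standard fact that a nontrivial polynomial vanishes only on a $\nuuV$-measure-zero set (equivalently, a $\nuuG$-measure-zero set under the bijection $\sstar \leftrightarrow \Vstar$) then extends the full-rank conclusion to $\nuuG$-almost every $\sstar$, completing the proof.
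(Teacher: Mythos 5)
You should first note that the paper does not prove this lemma at all: it is imported, with only notational adaptation, as Theorem 1 of \cite{identifiability}, so your proposal is effectively an attempt to reprove that external theorem from scratch. Your framing is the right one and matches how the paper itself uses the result (cf.\ \validationAlg\ and \algorithmCor): pad a normal vector of $\sstaroihat$ with zeros to get $\aj$, stack these into $\A$, observe that $\{\s_{\fix{\oihat}} = \sstaroihat\}$ forces both $\s$ and $\sstar$ to lie in $\ker \A{}^{\fix{\T}}$, and conclude $\s=\sstar$ once $\dim\ker\A{}^{\fix{\T}}=\r$. That part is correct (a small caveat: \identifiabilityCond\ applied to single columns only guarantees \emph{at least} $\r+1$ nonzero entries per column, not exactly $\r+1$; with more entries you would take several normals per column, which only helps).

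The genuine gap is the step you yourself flag as "the main obstacle": showing that \identifiabilityCond\ makes some $(\d-\r)\times(\d-\r)$ submatrix of $\A$ generically nonsingular. The Hall-type matching extraction is fine, but since the entries of $\A$ are minors of one common $\Ustar$, the presence of the matching monomial in the permutation expansion proves nothing, and your entire argument rests on exhibiting a witness $\Ustar$ at which the determinant is nonzero. The proposed witness --- choose $\Ustar$ block-structured "relative to the matching" so that the $\aj$'s have "essentially disjoint supports" and the submatrix is block-diagonal --- cannot be arranged in general: the support of $\aj$ is exactly $\oihat$, which is dictated by the given $\Ohat$, and choosing $\Ustar$ can only zero out entries inside that fixed pattern, never decouple overlapping column supports; a typical $\Ohat$ satisfying \identifiabilityCond\ has heavily overlapping supports and admits no permutation to block-diagonal form. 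Thus the central implication --- \identifiabilityCond\ implies $\operatorname{rank}\A=\d-\r$ for $\nuuG$-almost every $\sstar$ --- is precisely the content of Theorem 1 in \cite{identifiability}, and it remains unproved in your sketch; establishing it requires a genuine inductive/algebraic argument of the flavor of the resultant-based lemmas in \proofSec, not a single asserted example. As written, the proposal defers the whole difficulty to an unsubstantiated construction, so it does not constitute a proof.
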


With this, we are ready to give the proof of \uniquenessThm.
\begin{proof}(\uniquenessThm)
Suppose $\O$ contains two disjoint matrices $\Otilde$ and $\Ohat$ satisfying the conditions of \uniquenessThm.

Since $\Otilde$ satisfies \finiteCond, by \LRMCThm\ there are at most finitely many $\r$-dimensional subspaces that \fit\ $\Xtilde_{\fix{\Otilde}}$.  Equivalently, the set $\Ftilde$, containing the $\r(\d-\r)$ polynomials defined by the columns in $\Xtilde_{\fix{\Otilde}}$, is algebraically independent.  Let \phantomsection\label{efihatDef}$\efihat$ be the polynomial defined by $\xoihat$.  It follows that the set $\{\Ftilde,\efihat\}$ is algebraically dependent.  Let $\Fdp$ be a subset of the polynomials in $\Ftilde$, such that $\Fp=\{\Fdp,\efihat\}$ is \mad.  Then any subspace $\s$ with basis $\U$ that \fits\ $\xoihat$ must satisfy $\Fp=\bs{0}$, implying by \oneFixedVarLem\ that $\U_{\fix{\oihat}}=\bs{{\rm U}}^\star_{\fix{\oihat}}$.

Therefore, every $\s$ that \fits\ both $\Xtilde_{\fix{\Otilde}}$ and $\Xhat_{\fix{\Ohat}}$ must satisfy $\{\s_{\fix{\oihat}} = \sstaroihat\}_{\fix{\i}=1}^{\fix{\d-\r}}$.  Since $\Ohat$ satisfies \identifiabilityCond, it follows by \identifiabilityLem\ that $\s=\sstar$.
\end{proof}

In \modelSec\ we mentioned that there are cases where $\r(\d-\r)$ columns with only $\r+1$ samples are sufficient for unique completability.  The next result states that this is indeed the case if $\r=1$.

\begin{myProposition}
\label{rankOneProp}
If $\r=1$, \finiteCompletability\ is equivalent to unique completability.
\end{myProposition}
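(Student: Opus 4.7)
The implication ``unique completability $\Rightarrow$ \finiteCompletability'' is immediate, so the content is the converse, and the plan is to exploit the fact that Theorem \ref{LRMCThm} collapses, in the rank-one case, to a simple graph-theoretic condition that already pins down the subspace.

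First I would invoke \LRMCThm\ to produce a submatrix $\Otilde$ with $\r(\d-\r) = \d-1$ columns of $\O$ satisfying condition \finiteCond, which for $\r=1$ reads $\mOf(\Op) \geq \nOf(\Op) + 1$ for every column-subset $\Op$ of $\Otilde$. Since \Aone\ forces each column of $\Otilde$ to have exactly $\r+1 = 2$ nonzero entries, I would encode $\Otilde$ as a graph $G$ on the vertex set $\{1,\dots,\d\}$ by placing an edge between the two nonzero rows of each column. Applied to all of $\Otilde$ the inequality gives $\mOf(\Otilde) \geq \d$, so $G$ spans all $\d$ vertices; applied to any $k$-edge subgraph it forbids cycles (a cycle has equal numbers of edges and vertices). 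With $\d-1$ edges on $\d$ vertices, $G$ must therefore be a spanning tree.

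Second, I would show that a spanning tree already determines $\sstar$. Writing a basis of $\sstar$ as $\bs{{\rm u}}^\star = (u_1^\star, \dots, u_\d^\star)$, the nondegeneracy of $\sstar$ (which holds $\nuuG$-a.s.) guarantees all $u_j^\star$ are nonzero, and genericity of the coefficients makes every observation $x_{ji}$ nonzero. For each edge $(j,k)$ in $G$ arising from column $\i$ with observations $x_{ji},x_{ki}$, any candidate line $\s = \spn\{\bs{{\rm u}}\}$ fitting $\xoi$ must satisfy $u_j/u_k = x_{ji}/x_{ki} = u_j^\star/u_k^\star$. Normalizing $u_1 = u_1^\star = 1$ and traversing the unique tree path from vertex $1$ to each vertex $j$, these pairwise ratios compose to give $u_j = u_j^\star$ for all $j$, so $\s = \sstar$.

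Third, once $\sstar$ is identified, each column of $\XO$ is uniquely completable: under \Aone\ each column carries $\r+1 = 2 \geq \r = 1$ observation, which determines the unique scalar $\theta_i^\star$ via $\theta_i^\star = x_{ji}/u_j^\star$ for any observed row $j$, and then $\x_i = \theta_i^\star \bs{{\rm u}}^\star$. This yields unique completability of $\X$. I do not expect a serious obstacle: the main substantive step is recognizing that \finiteCond\ degenerates, when $\r=1$, into the combinatorial ``spanning tree'' condition, after which the rest is propagation of ratios along the tree plus the almost-sure nonvanishing of $u_j^\star$ and $\theta_i^\star$.
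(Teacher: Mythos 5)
Your proof is correct, but it takes a genuinely different route from the paper's. The paper argues directly inside the algebraic framework of \proofSec: when $\r=1$ the determinant and adjugate of $\U_{\fix{\deltai}}$ are scalars, so each $\efi$ is \emph{linear} in the unknown entries of $\V$, hence $\F=\bs{0}$ is a consistent linear system, and a consistent linear system with finitely many solutions has exactly one --- two lines, no graph theory, and no appeal to \LRMCThm. You instead invoke \LRMCThm\ to extract a $\d\times(\d-1)$ pattern $\Otilde$ satisfying \finiteCond, observe that under \Aone\ this condition says precisely that the two-nonzeros-per-column pattern is acyclic and spans all rows, i.e.\ is a spanning tree, and then recover the line by propagating the ratios $u_j/u_k=x_{ji}/x_{ki}$ along tree paths, using that nondegeneracy and genericity make all the relevant quantities nonzero almost surely; your quantifier use of \LRMCThm\ and the tree argument are sound. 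What your route buys is an explicit combinatorial picture and, in effect, a recovery procedure for the rank-one case, and your ratio argument even yields uniqueness among arbitrary (possibly degenerate) candidate lines; what it costs is using the full strength of \LRMCThm\ as a black box, where the paper's observation is self-contained and immediate once $\F$ has been defined.
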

\begin{proof}
Assume $\r=1$.  Then $\U_{\fix{\deltai}}$ and $\U_{\fix{\nabli}}$ are scalars, so $\efi$ simplifies into:
\begin{align*}
\efi \ = \ \left(\U_{\fix{\deltai}} \UstarNabli - \ \U_{\fix{\nabli}} \UstarDeltai \right) \tiStar.
\end{align*}
This implies that $\F=\bs{0}$ is a system of linear equations, hence if it has finitely many solutions, it has only one.
\end{proof}

In \modelSec\ we also mentioned that in general, strictly more than $\r(\d-\r)$ columns with only $\r+1$ samples are necessary for unique completability.    We would like to close this section with an example where this is the case.

\begin{myExample}
\label{necessaryNEg}
Consider $\d=4$ and $\r=2$, such that $\N=\r(\d-\r)=4$.  Let
\begin{align*}
\O \ = \ \left[ \begin{matrix}
1 & 1 & 1 & 0 \\
1 & 1 & 0 & 1 \\
1 & 0 & 1 & 1 \\
0 & 1 & 1 & 1
\end{matrix}\right].
\end{align*}
It is easy to see that that $\Otilde=\O$ satisfies the conditions of \LRMCThm.  One may also verify (for example, solving explicitly $\F(\V)=\bs{0}$) that for \ae\ $\X$ there exist two subspaces that \fit\ $\XO$.

As a matter of fact, this will also be the case for any permutation of the rows and columns of this matrix.  One may construct similar samplings with the same property for larger $\d$ and $\r$.  All this to say that this is not a singular pathological example; there are many samplings that cannot be uniquely recovered with only $\r(\d-\r)$ columns with $\r+1$ samples.
\end{myExample}

\section{Additional Proofs}
\label{additionalSec}
In this section we present the proofs of \LRMCLem\ and \probabilityThm.  The proof of \LRMCCor\ follows directly from \LRMCLem\ and \uniquenessThm, and the proof of \algorithmCor\ follows directly from Theorem 1 in \cite{identifiability}.

\begin{proof}(\LRMCLem)
Suppose $\Otilde$ contains disjoint matrices $\{\Ot\}_{\fix{\ttau=1}}^{\fix{\r}}$ satisfying the conditions of \LRMCLem.  Let $\Op$ be a matrix formed with a subset of the columns in $\Otilde$.  Then $\Op=[ \OpNum{1} \ \cdots \ \OpNum{r} ]$ for some matrices \phantomsection\label{OtpDef}$\{\Otp\}_{\fix{\ttau=1}}^{\fix{\r}}$ formed with subsets of the columns in $\{\Ot\}_{\fix{\ttau=1}}^{\fix{\r}}$.

It follows that
\begin{align*}
\nOf(\Op) \ = \ \sum_{\fix{\ttau=1}}^{\fix{\r}{}} \nOf(\Otp) \ \leq \ \sum_{\fix{\ttau=1}}^{\fix{\r}{}} \max_{\fix{\ttau}} \nOf(\Otp).
\end{align*}
Assume without loss of generality that this maximum is achieved when $\ttau=1$.  Then
\begin{align*}
\nOf(\Op) \ \leq \ \r \nOf(\OpNum{1}) \ \leq \ \r (\mOf(\OpNum{1}) - \r) \ \leq \ \r (\mOf(\Op) - \r),
\end{align*}
where the last two inequalities follow because \eqref{identifiabilityEq} holds for every $\Otp$ by assumption, and because $\mOf(\Op) \geq \mOf(\Otp)$ for every $\ttau$.

Since $\Op$ was arbitrary, we conclude that \eqref{LRMCEq} holds for every matrix $\Op$ formed with a subset of the columns in $\Otilde$.
\end{proof}

The following lemma shows that \identifiabilityCond\ is satisfied with high probability under uniform random sampling schemes with only $\Ord(\max\{ \r , \log\d\})$ samples per column.
\begin{myLemma}
\label{probabilityLem}
Let the assumptions of \probabilityThm\ hold, and let $\Ohat$ be a matrix formed with $\d-\r$ columns of $\O$.  With probability at least $1-\frac{\fix{\eps}{}}{\fix{\d}}$, $\Ohat$ will satisfy \identifiabilityCond.
\end{myLemma}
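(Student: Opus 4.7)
The plan is a direct union bound over potential witnesses to the failure of \identifiabilityCond. Observe that \identifiabilityCond\ fails for $\Ohat$ precisely when there exist an integer $n \in \{1,\ldots,\d-\r\}$, an $n$-subset of columns of $\Ohat$, and an $(n+\r-1)$-subset of rows containing every nonzero entry of those $n$ columns. Since the $\d-\r$ columns of $\Ohat$ are independently drawn uniformly random $\L$-subsets of $\{1,\ldots,\d\}$, the probability that a single column's $\L$ entries all fall inside any fixed $m$-subset of rows equals $\binom{m}{\L}/\binom{\d}{\L}\le(m/\d)^{\L}$, and by independence across the $n$ columns the joint probability is at most $(m/\d)^{n\L}$.

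Setting $m = n+\r-1$, the union bound combined with the standard estimate $\binom{a}{b}\le(ea/b)^b$ gives
\begin{align*}
\Pr[\Ohat \text{ violates \identifiabilityCond}] \;\le\; \sum_{n=1}^{\d-\r} \binom{\d-\r}{n}\binom{\d}{m}\left(\frac{m}{\d}\right)^{n\L} \;\le\; \sum_{n=1}^{\d-\r} \left(\frac{e\d}{n}\right)^{n}\left(\frac{e\d}{m}\right)^{m}\left(\frac{m}{\d}\right)^{n\L}.
\end{align*}
The proof reduces to showing this sum is at most $\eps/\d$, using the hypotheses $\L \ge 12(\log(\d/\eps)+1)$, $\L \ge 2\r$, and $\r \le \d/6$.

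The main obstacle is the asymptotic bookkeeping, since the combinatorial factor and the probability decay trade off differently across the range of $n$. My strategy is to split the sum at the threshold $m \le \d/2$ versus $m > \d/2$. In the low regime ($m\le\d/2$), the factor $(m/\d)^{n\L}\le 2^{-n\L}$ decays exponentially in $n\L$, and the assumption $\L \ge 12\log(\d/\eps)+12$ is calibrated so that this exponential decay drives the at-most-$(e\d)^{n+m}$ combinatorial growth well below $\eps/\d^2$ per term; the hypothesis $\r\le\d/6$ ensures $m\le \d/2$ holds for a wide range of $n$. In the high regime ($m>\d/2$), write $m = \d - k$ so that $(m/\d)^{n\L}\le\exp(-n\L k/\d)$; here $n$ is forced to be close to $\d-\r$, which makes $n\L k/\d$ large for every admissible $k\ge 1$, and $\binom{\d-\r}{n}\binom{\d}{m}$ depends only polynomially on $\d$ for each fixed $k$, so convergence is easy. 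The delicate boundary is $m$ close to $\d/2$, where both combinatorial coefficients and the probability factor are moderate; the hypothesis $\L\ge 2\r$ is used there to ensure that the exponent $n\L\log(\d/m)$ still beats $(n+m)\log(e\d)$. Carefully tuning these three bounds so that their sum comes in at $\eps/\d$ is the technical core of the argument.
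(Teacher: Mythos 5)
Your proposal follows essentially the same route as the paper's proof: the same characterization of a violation of \identifiabilityCond\ (some $n$ columns supported on only $n+\r-1$ rows), the same union bound $\sum_n \binom{\d-\r}{n}\binom{\d}{m}\bigl(\binom{m}{\L}/\binom{\d}{\L}\bigr)^{n}$ with $m=n+\r-1$, the same split of the sum at $m\le\d/2$ versus $m>\d/2$, and the same roles for the hypotheses $\L\ge 2\r$, $\r\le\d/6$, and $\L\ge 12(\log(\d/\eps)+1)$. The only part you leave unexecuted is the final term-by-term arithmetic showing each summand is below $\eps/\d^2$, which is exactly the computation the paper carries out and which does go through with your stated estimates.
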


\begin{proof}
Let \phantomsection\label{EDef}$\E$ be the event that $\mOf(\Op)<\nOf(\Op) + \r$ for some matrix $\Op$ formed with a subset of the columns in $\Ohat$.  It is easy to see that this will only occur if there is a matrix $\Op$ formed with $\nOf$ columns of $\Ohat$ that has all its nonzero entries in the same $\nOf+\r-1$ rows.  Let \phantomsection\label{EnDef}$\En$ denote the event that the matrix formed with the first $\nOf$ columns from $\Ohat$ has all its nonzero entries in the first $\nOf+\r-1$ rows.  Then
\begin{align}
\label{badSetProbEq}
\P\left( \E \right) \ \leq \ \sum_{\fix{\nOf=1}}^{\fix{\d-\r}} {\d-\r \choose \nOf} {\d \choose \nOf+\r-1} \P\left( \En \right)
\end{align}
If each column of $\Ohat$ contains at least $\L$ nonzero entries, distributed uniformly and independently at random with $\L$ as in \eqref{LEq}, it is easy to see that $\P(\En)=0$ for $\nOf \leq \L-\r$, and for $\L-\r < \nOf \leq \d-\r$,
\begin{align*}
\P(\En) \ \leq \ \left(\frac{{\fix{\nOf}+\fix{\r}-1 \choose \fix{\L}{}}} {{\fix{\d} \choose \fix{\L}{}}}\right)^{\fix{\nOf}}
& \ < \ \left(\frac{\nOf+\r-1}{\d}\right)^{\fix{\L \nOf}}.
\end{align*}
Since ${\fix{\d}-\fix{\r} \choose \fix{\nOf}{}} < {\fix{\d}{} \choose \fix{\nOf}+\fix{\r}-1}$,
continuing with \eqref{badSetProbEq} we obtain:
\begin{align}
\P \left( \E \right) & \ < \ \sum_{\fix{\nOf=\L-\r+1}}^{\fix{\d-\r}} {\d \choose \nOf+\r-1}^2 \left(\frac{\nOf+\r-1}{\d}\right)^{\fix{\L \nOf}} \nonumber \\
& \ < \ \sum_{\fix{\nOf=\L}}^{\fix{\d/2}} {\d \choose \nOf}^2 \left(\frac{\nOf}{\d}\right)^{\fix{\L(\nOf-\r+1)}} \label{firstSumEq} \\
& \ + \ \sum_{\fix{\nOf=1}}^{\fix{\d/2}} {\d \choose \d-\nOf}^2 \left(\frac{\d-\nOf}{\d}\right)^{\fix{\L(\d-\nOf-\r+1)}} \label{secondSumEq}
\end{align}
For the terms in \eqref{firstSumEq}, write
\begin{align}
\label{firstTermEq}
{\d \choose \nOf}^2 \left(\frac{\nOf}{\d}\right)^{\fix{\L(\nOf-\r+1)}} & \ \leq \ \left({\frac{\d e}{\nOf}}\right)^{2\fix{\nOf}} \left(\frac{\nOf}{\d}\right)^{\fix{\L(\nOf-\r+1)}}.
\end{align}
Since $\nOf \geq \L \geq 2\r$,
\begin{align}
\label{someEq}
\eqref{firstTermEq} & \ < \ \left({\frac{\d e}{\nOf}}\right)^{2\fix{\nOf}} \left(\frac{\nOf}{\d}\right)^{\fix{\L} \frac{\fix{\nOf}}{2}} 
\ = \ e^{2\fix{\nOf}} \left(\frac{\nOf}{\d}\right)^{(\frac{\fix{\L}{}}{2}-2)\fix{\nOf}},
\end{align}
and since $\nOf \leq \frac{\fix{\d}{}}{2}$,
\begin{align}
\label{firstBoundEq}
\eqref{someEq}
& \ \leq \ e^{2\fix{\nOf}} \left(\frac{1}{2}\right)^{(\frac{\fix{\L}{}}{2}-2)\fix{\nOf}} 
\ = \ \left( e^2 \cdot 2^{-\frac{\fix{\L}{}}{2}+2} \right)^{\fix{\nOf}}
\ < \ \frac{\eps}{\d^2},
\end{align}
where the last step follows because $\L > 2\log_2(\frac{(\fix{\d} e)^2}{\fix{\eps}{}})+4$.

For the terms in\eqref{secondSumEq}, write
\begin{align}
\label{secondTermEq}
{\d \choose \d-\nOf}^2 \left(\frac{\d-\nOf}{\d}\right)^{\fix{\L(\d-\nOf-\r+1)}} & \ \leq \ \left(\frac{\d e}{\nOf}\right)^{2\fix{\nOf}} \left(\frac{\d-\nOf}{\d}\right)^{\fix{\L(\d-\nOf-\r+1)}}.
\end{align}
In this case, since $1 \leq \nOf \leq \frac{\fix{\d}{}}{2}$ and $\r \leq \frac{\fix{\d}{}}{6}$, we have
\begin{align*}
\eqref{secondTermEq}
\ < \ (\d e)^{2\fix{\nOf}} \left(\frac{\d-\nOf}{\d}\right)^{\fix{\L} \frac{\fix{\d}}{3}} 
& \ = \ (\d e)^{2\fix{\nOf}} \left[\left(1-\frac{\nOf}{\d}\right)^{\fix{\d}} \right]^{\frac{\fix{\L}{}}{3}} \\
& \ \leq \ (\d e)^{2\fix{\nOf}} \left[e^{-\fix{\nOf}}\right]^{\frac{\fix{\L}{}}{3}},
\end{align*}
which we may rewrite as
\begin{align}
\label{secondBoundEq}
\left(e^{2\log\fix{\d}}\right)^{\fix{\nOf}} \left(e^2\right)^{\fix{\nOf}} \left(e^{-\frac{\fix{\L}{}}{3}}\right)^{\fix{\nOf}} \ = \ \left(e^{2\log\fix{\d} + 2 - \frac{\fix{\L}{}}{3} }\right)^{\fix{\nOf}} \ < \ \frac{\eps}{\d^2},
\end{align}
where the last step follows because $\L > 3 \log(\frac{\fix{\d}^2}{\fix{\eps}{}}) + 6 \log \d + 6$.

Substituting \eqref{firstBoundEq} and \eqref{secondBoundEq} in \eqref{firstSumEq} and \eqref{secondSumEq}, we have that $\P(\E)<\frac{\fix{\eps}{}}{\fix{\d}}$.
\end{proof}

We are now ready to give the proof of \probabilityThm.

\begin{proof}(\probabilityThm)
If $\N \geq \r(\d-\r)$, randomly select disjoint matrices $\{\Ot\}_{\fix{\ttau=1}}^{\fix{\r}}$, each formed with $\d-\r$ columns of $\O$.

Union bounding over $\ttau$, we may upper bound the probability that $\O$ fails to satisfy the conditions of \LRMCLem\ by
\begin{align*}
\sum_{\fix{\ttau=1}}^{\fix{\r}{}} \P(\E) \ < \ \sum_{\fix{\ttau=1}}^{\fix{\r}{}} \frac{\eps}{\d} \ < \ \sum_{\fix{\ttau=1}}^{\fix{\r}{}} \frac{\eps}{\r} \ = \ \eps.
\end{align*}

The first part of the statement follows because the conditions in \LRMCLem\ imply the conditions in \LRMCThm.

If $\N \geq (\r+1)(\d-\r)$, randomly select disjoint matrices $\{\Ot\}_{\fix{\ttau=1}}^{\fix{\r+1}}$, each formed with $\d-\r$ columns of $\O$.  By the same arguments, the probability that $\O$ fails to satisfy the conditions of \uniquenessThm\ is upper bounded by:
\begin{align*}
\sum_{\fix{\ttau=1}}^{\fix{\r+1}} \P(\E) \ < \ \sum_{\fix{\ttau=1}}^{\fix{\r+1}} \frac{\eps}{\d} \ < \ \sum_{\fix{\ttau=1}}^{\fix{\r+1}} \frac{\eps}{\r+1} \ = \ \eps.
\end{align*}
\end{proof}

\section{Conclusions}
\label{conclusionsSec}
In this paper we give sampling conditions for \hyperref[finitelyCompletableDef]{finite rank-$\r$ completability}, that is, conditions on the set of observed entries to guarantee that a matrix can be completed in at most finitely many ways.  We also provide deterministic sampling conditions for unique completability that can be efficiently verified.  In addition, we show that uniform random samplings with only $\Ord(\max\{\r,\log\d\})$ observed entries per column satisfy these conditions with high probability.  These findings have several implications on \LRMC\ regarding lower bounds, sample and computational complexity, the role of coherence, adaptive settings and the validation of any completion algorithm.

\section{Acknowledgements}
\label{acknowledgementsSec}
We would like to thank Louis Theran for pointing out a mistake in a previous version of the paper. In that earlier version we erroneously assumed that columns with more than $\r+1$ observed entries would yield multiple independent constraints.  However, as Theran pointed out through the following example in \cite{kiraly}, these constraints may be algebraically dependent.  For this reason, in our current analysis we use only one constraint per column.

\begin{myExample}
\label{counterEg}
Suppose $\X$ is a rank-$2$ matrix observed on the entries indicated by $1$'s
\begin{align*}
\O \ = \ \left[\begin{matrix}
1 & 1 & 1 & 0 & 0 \\
1 & 1 & 1 & 0 & 0 \\
1 & 1 & 0 & 1 & 1 \\
0 & 0 & 1 & 1 & 1 \\
0 & 0 & 1 & 1 & 1
\end{matrix}\right].
\end{align*}
Here $\x_3$ is observed on $\r+2$ entries.  Using Definition 1 in our earlier version of this paper, $\x_3$ yields the two central columns of $\breve{\O}$
\begin{align*}
\breve{\O} \ = \ \left[\begin{matrix}
1 & 1 & 1 & 1 & 0 & 0 \\
1 & 1 & 1 & 1 & 0 & 0 \\
1 & 1 & 0 & 0 & 1 & 1 \\
0 & 0 & 1 & 0 & 1 & 1 \\
0 & 0 & 0 & 1 & 1 & 1
\end{matrix}\right].
\end{align*}
The two central columns in $\breve{\O}$ correspond to $\O_3$, and encode the two polynomial constraints obtained from the observed entries in $\x_3$.

While the sampling $\breve{\O}$ satisfies the completability conditions of Theorem 1 in our earlier version of this paper, $\X$ cannot be completed.  This is because the two polynomials defined by $\x_3$ share the same coefficients, which makes them algebraically dependent.  If instead of $\x_3$ we observed two columns on the entries indicated by $\O_3$, then we would also obtain two polynomial constraints.  Only these polynomials would have generic coefficients, and would be algebraically independent with probability $1$.  In fact, a matrix observed on the entries indicated in $\breve{\O}$ (or more) can indeed be completed.
\end{myExample}


\end{document}